\title{Aggregate-Combine-Readout GNNs \\ Are More Expressive Than Logic $C^2$}
\author {
    Stan P Hauke\textsuperscript{\rm 1},
    Przemysław Andrzej Wałęga\textsuperscript{\rm 2}
}
\newcommand{\R}{\mathbb{R}}
 \declaretheorem[name=Theorem]{thm}
\newtheorem{theorem}[thm]{Theorem}
\newtheorem*{example*}{Example}
\newtheorem{definition}[thm]{Definition}
\newtheorem{corollary}[thm]{Corollary}
\newtheorem*{claim*}{Claim}
\newtheorem*{theorem*}{Theorem}
\newcommand{\A}{\mathcal{OR}}
\newcommand{\G}{{G}}
\renewcommand{\L}{\mathcal{L}}
\newcommand{\M}{\mathfrak{M}}
\newcommand{\vp}{\varphi}
\newcommand{\sq}{\subseteq}
\newcommand{\Ctwo}{\ensuremath{\text{C}^2}}
\newcommand{\FOtwo}{\ensuremath{\text{FO}^2}}
\newcommand{\FO}{\ensuremath{\text{FO}}}
\newcommand{\LO}{\ensuremath{\mathit{Lin}}}
\newcommand{\GLO}{\ensuremath{\mathit{GadLin}}}
\newcommand{\Gad}{\ensuremath{\mathsf{gad}}}
\newcommand{\LCtwo}{\mathsf{inf}\text{-}\Ctwo}
\newcommand{\WL}{\textup{WL}}
\newcommand{\Ni}{\ensuremath{\overleftarrow{{N}}}}
\newcommand{\No}{\ensuremath{\overrightarrow{{N}}}}
\newcommand{\N}{\ensuremath{\mathcal{N}}}
\newcommand{\Lr}[1][]{%
  \ifx#1\empty
    \ensuremath{\mathcal{L}}%
  \else
    \ensuremath{\mathcal{L}^{#1}}%
  \fi
}
\newcommand{\true}{\ensuremath{\mathsf{true}}}
\newcommand{\false}{\ensuremath{\mathsf{false}}}
\newcommand{\comb}{\ensuremath{\mathsf{comb}}}
\newcommand{\agg}{\ensuremath{\mathsf{agg}}}
\newcommand{\aggi}{\ensuremath{\overleftarrow{\mathsf{agg}}}}
\newcommand{\aggo}{\ensuremath{\overrightarrow{\mathsf{agg}}}}
\newcommand{\readout}{\ensuremath{\mathsf{read}}}
\begin{document}

\maketitle

\begin{abstract}
In recent years, there has been growing interest in understanding the expressive power of graph neural networks (GNNs) by relating them to logical languages. This research has been been initialised by an influential result of  \citet{DBLP:conf/iclr/BarceloKM0RS20}, who showed that the graded modal logic (or a guarded fragment of the logic $\Ctwo$), characterises the logical expressiveness of aggregate-combine GNNs. As a ``challenging open problem'' they left the question whether full $\Ctwo$ characterises the logical expressiveness of aggregate-combine-readout GNNs.
This question has remained unresolved despite several attempts. 

In this paper, we solve the above open problem by proving that the logical expressiveness of aggregate-combine-readout GNNs strictly exceeds that of $\Ctwo$. This result holds over both undirected and directed graphs. Beyond its implications for GNNs, our work also leads to purely logical insights on the expressive power of infinitary logics. 
\end{abstract}

\section{Introduction}

\emph{Graph Neural Networks}  (GNNs) \cite{DBLP:conf/icml/GilmerSRVD17}  are state-of-the-art machine learning models tailored for processing graph structured data.
They have been successfully applied across numerous domains, including molecular property prediction \cite{DBLP:journals/air/BesharatifardV24}, traffic forecasting and navigation \cite{Derrow_Pinion_2021}, visual scene interpretation \cite{DBLP:journals/corr/abs-2209-13232},  personalised recommendations \cite{Ying_2018}, and
knowledge graph completion and reasoning under partial information \cite{DBLP:conf/iclr/CucalaGKM22,DBLP:conf/nips/ZhangC18,DBLP:journals/corr/abs-2402-04062}.

In recent years, there has been growing interest in understanding the expressive power of GNNs, particularly focusing on the basic message-passing architecture.
A key result in this area \cite{DBLP:conf/aaai/0001RFHLRG19,DBLP:conf/iclr/XuHLJ19} shows that GNNs have the same \emph{distinguishing power} as the Weisfeiler–Leman (WL) algorithm \cite{weisfeiler1968reduction}—a widely used heuristic for testing graph isomorphism \cite{DBLP:conf/focs/BabaiK79}.
This means that a pair of graphs can be distinguished by WL if and only if there exists a GNN that can distinguish them.
In turn, a classical result by \citet{DBLP:journals/combinatorica/CaiFI92} shows that WL has the same distinguishing power as the fragment $\Ctwo$ of first-order logic (\FO), in which formulas are restricted to two variables but may use counting quantifiers $\exists_k$, interpreted as ``there exist at least $k$ distinct elements such that~$\dots$.''
As a result, we obtain a tight correspondence between GNNs, the Weisfeiler–Leman algorithm, and the logic $\Ctwo$.

The results on the distinguishable power, however, do not allow us to establish a one-to-one mapping between GNNs and logical formulas expressing the same properties.
This finer correspondence is known as \emph{logical expressiveness}, or \emph{uniform expressive power}, and has attracted growing interest in recent years \cite{DBLP:conf/icalp/BenediktLMT24,ahvonen2025logicalcharacterizationsrecurrentgraph,schonherr2025logical,ijcai2024p391,DBLP:conf/kr/CucalaG24}.
The main, and historically first, results in this direction have been established by
\citet{DBLP:conf/iclr/BarceloKM0RS20}.
They have studied two architectures of message-passing GNNs: the standard  aggregate-combine GNNs (AC-GNNs) and their extension with readout function called aggregate-combine-readout GNNs (ACR-GNNs).
The two main results of \citet{DBLP:conf/iclr/BarceloKM0RS20} are as follows:
\begin{enumerate}[label=(\roman*)]
\item 
the \FO{} node properties expressible by AC-GNNs are exactly those definable in graded modal logic,
\item 
the \FO{} node properties expressible by ACR-GNNs contain all properties definable in $\Ctwo$.
\end{enumerate}
Note that, in contrast to Result (i), Result (ii) does not provide an exact logical characterisation. This was left by the authors' as ``a challenging open problem.''

\paragraph{The Open Problem}
The precise formulation of the  open problem introduced  by  \citet{DBLP:conf/iclr/BarceloKM0RS20} is whether 
the \FO{} node properties expressible by ACR-GNNs are exactly those definable in $\Ctwo$.
Notably, although their paper explicitly states this problem and has since become widely known and frequently cited, the question has remained unresolved for the past five years.
To the best of our knowledge, several research groups have attempted to solve this problem \cite{DBLP:conf/aaai/PfluegerCK24,DBLP:conf/icalp/BenediktLMT24}, but without success.

\paragraph{Contributions} 
In this paper we will solve the above open problem, by showing that ACR-GNNs can express \FO{} node classifiers beyond $\Ctwo$.

We will show that this result holds not only in the setting of undirected graphs---as originally considered by \citet{DBLP:conf/iclr/BarceloKM0RS20}---but also in the setting of directed graphs.
In both cases, our proofs follow a common structure: (1) we define a node property, (2) we show that it is expressible both in \FO{} and by an ACR-GNN, and (3)  we show that the property is not expressible in $\Ctwo$.
In the directed case (\Cref{sec:directed}), the property we consider is that of ``being a node of a graph whose edge relation forms a strict linear order.'' In the undirected case (\Cref{sec:undirected}),  we simulate directed edges using paths of three undirected edges, where  direction of an edge is encoded by colours of the two middle nodes in the path.
We then consider the property of being a node of an undirected graph that simulates a strict linear order.

To show Results (1) and (2) we provide explicit constructions of \FO{} formulas and ACR-GNNs, respectively.
To show the inexpressibility Results (3), we introduce in \Cref{sec:WL} a bounded version of WL algorithm, which characterises expressive power of \Ctwo{} formulas with counting quantifiers $\exists_k$ mentioning bounded $k$ only.
Using this characterisation, we prove inexpressibility results for both directed and undirected settings.

Finally, in \Cref{sec:impact} we will exploit our results to the study the expressive power of infinitary logics.
As we show, the infinitary version of \Ctwo{} can express strictly more \FO{} properties than the standard, finitary \Ctwo.

\section{Preliminaries}

In this section we will introduce basic notions and notation for graphs, GNNs, and logics.
We will extend the setting of \citet{DBLP:conf/iclr/BarceloKM0RS20}, by considering not only undirected, but also directed graphs.

\paragraph{Graphs}
A directed (node-labelled, finite, and simple) \emph{graph} of dimension $d \in \mathbb{N}$ is a tuple   $\G = (V, E, \lambda)$, where $V$ is a finite set of nodes, $E \subseteq V \times V$ is a set of directed edges with no loops $E(v,v)$, and $\lambda:V \to \{0,1 \}^d$ assigns to each node a 
binary vector of a dimension $d$.
We will identify \emph{undirected} graphs with directed graphs that have a symmetric edge relation, and write $\{v,w\}$ for a pair of edges $(v,w)$, $(w,v)$.
The \emph{neighbourhoud}, $N_\G(v)$, of a node $v$ in a graph $G$,  is the  set of all nodes $w$ 
such that $G$ has an edge (in any direction)
between $w$ and $v$.
The \emph{in-neighbourhoud}, $\Ni_\G(v)$, 
are nodes $w$ 
such that $G$ has an edge
from $w$ to $v$, whereas
the \emph{out-neighbourhoud}, $\No_\G(v)$, are nodes $w$ 
such that $G$ has an edge
from $v$ to $w$.
Hence, in undirected graphs we have $N_\G(v) = \Ni_\G(v) = \No_\G(v)$.

\paragraph{GNN Node Classifiers}
We focus on  \emph{aggregate-combine-readout} GNNs (ACR-GNNs) introduced by \citet{DBLP:conf/iclr/BarceloKM0RS20}, which extend the standard message-passing mechanism with readout functions.
First, we introduce  ACR-GNN architecture for processing undirected graphs.
In such GNNs, each layer is a triple $( \agg, \comb, \readout )$ consisting of  an \emph{aggregation function}, $\agg$, mapping a multiset (a generalisation of a set so that elements can have multiple occurrences) of vectors into a single vector,
a \emph{combination function} $\comb$, mapping a vector to a vector, and a \emph{readout function},  $\readout$,  mapping a multiset  of vectors into a single vector.
Such layers applied to a graph $\G = ( V, E, \lambda )$ computes a graph $\G' = ( V, E, \lambda' )$ with a new  labelling function 
$\lambda'$ such that for each $v$, the labelling $\lambda'(v)$ is given by
$$
\comb \Big( \lambda(v), \agg( \lBrace  \lambda(w) \rBrace_{w \in N_G(v)} ), \readout( \lBrace  \lambda(w) \rBrace_{w \in V}) \Big), 
$$
where $\lBrace \cdot \rBrace$ stands for a multiset. 
In the spirit of \citet{DBLP:conf/log/RossiCGFGB23}, we also consider a straightforward generalisation of ACR-GNN architecture for  processing directed graphs.
In this case a GNN is a tuple $( \aggi, \aggo, \comb, \readout )$, which has two types of aggregation: $\aggi$ for incoming edges and \aggo{} for outgoing edges.
In such ACR-GNNs, a new labelling $\lambda'(v)$ is computes as
\begin{align*}
\comb \Big( & \lambda(v), \aggi( \lBrace  \lambda(w) \rBrace_{w \in \Ni_G(v)} ),
\\
&
\aggo( \lBrace  \lambda(w) \rBrace_{w \in \No_G(v)} ),
\readout( \lBrace  \lambda(w) \rBrace_{w \in V}) \Big).
\end{align*}
An \emph{ACR-GNN classifier} $\N$ of dimension $d$ consists of a fixed number $L$ of layers\footnote{We assume that functions in the layers are of matching dimensions, so that they can be applied.} and a 
classification function $\mathsf{cls}$ from vectors to truth values; once applied to a graph of dimension $d$, the classifier $\N$ computes for each node $v$  a truth value $\N(G,v)$.

\paragraph{Logical Node Classifiers}
In this paper, by \FO{} we mean  the standard first-order logic 
with identity $=$, one binary predicate $E$ for edges, and unary predicates $P_1, \dots, P_d$ for node labels.
We will consider also the fragment \Ctwo{}  of \FO{}, which allows for using only two variables in formulas, but allows for additional counting quantifiers $\exists_k$, for any $k \in \mathbb{N}$, where $\exists_k x \varphi(x)$ means that $\varphi$ holds in at least $k$ different nodes.
We will  write $\exists_{=k}\vp(x)$ as an abbreviation for
$\exists_{k}\vp(x)\land \neg\exists_{k+1}\vp(x)$.
Note that we write $\varphi(x)$ for a formula with exactly one free variable $x$, and similarly we will use $\varphi(x,y)$ for a formula with exactly two free variables.
We let the quantifier \emph{depth} of a formula $\varphi$ be
its maximum nesting of quantifiers.
Moreover, for $\Ctwo$ formulas we define the  \emph{counting rank},  $\mathsf{rk}_{\#}(\varphi)$, as
the maximal  among numbers $k$ occurring in its counting quantifiers.
For a logic $\L$, we let $\L_{\ell,c}$ be the fragment allowing for formulas of depth at most $\ell$ and counting rank at most $c$;
our paper will pay special attention to $\Ctwo_{\ell,c}$.

A \emph{logical node classifier} is  a formula $\varphi(x)$ in \FO{} (or its fragment)  with one free variable.
To evaluate logical classifiers, we identify a graph $\G = (V, E, \lambda)$ of dimension $d$ with the corresponding \FO{} structures
$\M_{\G} = (V, P_1,  \dots, P_d, E)$, with domain $V$, sets  $P_i = \{ v \in V \mid \lambda(v)_i=1 \}$ containing all nodes $v$ with $1$ on the $i$th position of $\lambda(v)$, and the binary relation
$E$ being the graph edges.
We  assume the standard \FO{} semantics over such models and write $G \models \varphi(v)$ if classifier $\varphi(x)$ holds in $\M_{\G}$ at the node $v$.
If this is the case, we say that 
the application of the logical classifier $\varphi(x)$ to  $\G$  at node $v$ is $\true$, and otherwise it is $\false$.
We write $G,u \equiv_\L H,v$, if 
$G \models \varphi(u)$ is equivalent to $H \models \varphi(v)$, for each logical classifier $\varphi(x)$ in a logic $\L$.

\section{WL Algorithm with Bounded Counting}\label{sec:WL}

In this section, we will introduce a bounded version of the one dimensional  \WL{} algorithm \cite{weisfeiler1968reduction}. 
Our version $\WL_{c}$  is parametrised by  $c \in \mathbb{N}$, which bounds the ``counting abilities'' of the algorithm.
As we will show, $\ell$ rounds of application of $\WL_c$  allows us to characterise expressiveness of the fragment $\Ctwo_{\ell,c}$ of $\Ctwo$, where formulas have depth bounded by $\ell$ and counting rank by $c$. 
This result will play a crucial role to establish non-expressivity results in the latter sections of the paper.

The main idea behind $\WL_{c}$  is that  the algorithm is insensitive to multiplicities (occurring in  processed multisets) greater than $c$.
In particular, instead of computing new labels for nodes based on multisets $\lBrace \cdot \rBrace$ of labels, the computations are based on the \emph{$c$-bounded} multisets $\lBrace \cdot \rBrace^c$, obtained by reducing all multiplicities to at most $c$.
For example $\lBrace 7,7,7,3 \rBrace^2 = \lBrace 7,7,3 \rBrace$.
In particular, over undirected graphs, labelling $W^{\ell+1}_c(v)$ of a node $v$ in iteration $\ell+1$ will depend on the the previous label $W^{\ell}_c(v)$, the $c$-bounded multiset of labels of $v$ neighbours, and  the $c$-bounded multiset of non-neighbours, so $W^{\ell+1}_c(v) $ equals 
\begin{align*}
( 
W^\ell_c(v), 
\lBrace 
W^\ell_c(w)
\rBrace_{w \in N_G(v)}^c,
\lBrace W^\ell_c(w) \rBrace^c_{w \in V \setminus \{N_G(v) \cup \{v\}\}} 
).
\end{align*}
Characterising  $\Ctwo_{\ell,c}$ over directed graphs is more challenging.
In this case, instead of 
considering in $\WL_c$ one multiset of neighbours' labels, we  consider separately nodes which belong to $\Ni_G$ and $\No_G$, those which belong to $\Ni_G$ only, and those which belong to $\No_G$ only.
Below we define the algorithm, which in the case of undirected graphs reduces to the computations presented above.

\begin{definition}\label{defdef1}
Let $c \in \mathbb{N}$.
The  \emph{$c$-bounded \WL{} algorithm}, $\WL_c$, takes as an input a graph $\G=(V,E,\lambda)$,
and computes labels $W_c^\ell(v)$ for all $v \in V$ as follows:
\begin{align}
W^{0}_c(v) & = \lambda(v) \notag \\
\label{eq:WLupdate}
W^{\ell+1}_c(v) & = 
\Big( 
 W^\ell_c(v), 
\lBrace W^\ell_c(w) \rBrace^c_{w \in \Ni_G(v) \cap \No_G(v)},\\
& \lBrace W^\ell_c(w) \rBrace^c_{w \in \Ni_G(v) \setminus \No_G(v)}, 
 \lBrace W^\ell_c(w) \rBrace^c_{w \in \No_G(v) \setminus \Ni_G(v)}, \notag \\
 &\lBrace W^\ell_c(w) \rBrace^c_{w \in V \setminus \{N_G(v) \cup \{v\}\}} 
\Big).  \notag 
\end{align}
\end{definition}

The above idea of considering various combinations of in- and out-neighbours is closely related to the introduction of complex modal operators, which was used by \citet{DBLP:conf/csl/LutzSW01} to construct a modal logic of the same expressiveness as \FOtwo{}.
It is also worth observing that, over undirected graphs, $\WL_c$ with $c < \infty$ is strictly less expressive than the standard $\WL$, whereas $\WL_c$ with $c = \infty$  would 
coincide exactly with  \WL{} \cite{DBLP:journals/combinatorica/CaiFI92}.
In \Cref{mainWLtheorem} we will show that $\WL_c$ characterises the expressiveness of $\Ctwo$  with  counting rank  $c$ over  directed (and so, also  undirected) graphs.
To obtain this result, we will use the following technical lemma, showing that over directed graphs, $\Ctwo$ formulas have a specific normal form.

\begin{restatable}{lemma}{normalform}\label{twovarstructure}
Over directed graphs, every  $\Ctwo_{\ell,c}$ formula is equivalent to a finite disjunction 
$$
\bigvee_{i=1}^n \big( \alpha_i(x) \land \beta_i(y) \land \gamma_i(x,y) \big),
$$ 
where
$\alpha_i(x), \beta_i(y) \in \Ctwo_{\ell,c}$
and each $\gamma_i(x,y)$ is one of the following five formulas:
$E(x, y) \land E(y, x)$,  $ E(x, y) \land \neg E(y, x)$, $\neg E(x, y) \land E(y, x)$, $\neg E(x, y) \land\neg  E(y , x)\land x\neq y$, and $x=y$.
\end{restatable}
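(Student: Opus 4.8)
The plan is to proceed by induction on the structure of the formula, pushing everything into the claimed disjunctive form $\bigvee_i (\alpha_i(x)\wedge\beta_i(y)\wedge\gamma_i(x,y))$, where the $\gamma_i$ range over the five mutually exclusive and jointly exhaustive "edge types" between $x$ and $y$. First I would observe that these five $\gamma_i$ partition all pairs $(x,y)$, so the form is really a case split on the atomic $E$-relationship between the two variables. A formula with at most one free variable — say $\varphi(x)$ — fits the form trivially by taking $\beta_i(y) = \top$ (or by noting $\varphi(x) \equiv \bigvee_i (\varphi(x) \wedge \gamma_i(x,y))$ since the $\gamma_i$ are exhaustive, then absorbing $\gamma_i$'s own $y$-constraints; when $y$ does not actually occur, $\top$ is definable in $\Ctwo_{\ell,c}$). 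The nontrivial work is the closure of the normal form under Boolean combinations and under counting quantifiers.

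For the Boolean cases: negation of $\bigvee_i(\alpha_i\wedge\beta_i\wedge\gamma_i)$ is handled by first grouping the disjunction by $\gamma$-type (there are only five), so that within each type $t$ we have $\gamma_t(x,y)\wedge\bigvee_{i: \gamma_i=\gamma_t}(\alpha_i(x)\wedge\beta_i(y))$; the negation is then $\bigwedge_t\big(\gamma_t(x,y)\rightarrow\bigwedge_{i}\neg(\alpha_i(x)\wedge\beta_i(y))\big)$, and since exactly one $\gamma_t$ holds on any pair, this equals $\bigvee_t\big(\gamma_t(x,y)\wedge\bigwedge_i(\neg\alpha_i(x)\vee\neg\beta_i(y))\big)$; distributing the inner conjunction of disjunctions over the $\alpha$/$\beta$ split returns it to normal form, with all new $\alpha$'s and $\beta$'s being Boolean combinations of the old ones, hence still in $\Ctwo_{\ell,c}$. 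Conjunction is easier: multiply out two disjunctions, and for each pair of disjuncts intersect the two $\gamma$-types — either they agree (keep it) or they are incompatible (the conjunct is unsatisfiable, drop it). Disjunction is immediate.

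The main obstacle, and the heart of the lemma, is the quantifier case: given $\psi(x)$ of the form $\exists_{\geq k} y\, \varphi(x,y)$ (or the dual with the roles of $x,y$ swapped, which is symmetric up to renaming), with $\varphi$ in normal form of depth $\leq \ell-1$ and counting rank $\leq c$, I must show the result is again in normal form with depth $\leq\ell$ and counting rank $\leq c$. Writing $\varphi(x,y)=\bigvee_{t\in T}\big(\gamma_t(x,y)\wedge\bigvee_{i\in I_t}(\alpha_i(x)\wedge\beta_i(y))\big)$ with $T$ the set of (at most five) edge types, the witnesses $y$ for a fixed $x=v$ split according to which type $\gamma_t(v,y)$ holds. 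For the type $x=y$ the single candidate is $y=v$ itself, contributing the formula $\bigvee_{i\in I_{=}}(\alpha_i(x)\wedge\beta_i(x))$, a depth-$(\ell-1)$ formula in $x$ alone. For each of the other four "relational" types, $\gamma_t(v,y)$ restricts $y$ to $\Ni(v)$, $\No(v)$, their intersection, or their common complement (minus $v$); within such a set the $y$-side condition is $\bigvee_{i\in I_t}(\alpha_i(v)\wedge\beta_i(y))$, and since the $\alpha_i(v)$ are constants (for fixed $v$) this collapses to a Boolean combination $\beta_t^S(y)$ of the $\beta_i$ for each subset $S\subseteq I_t$ of "active" $\alpha$'s. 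Counting the $y$'s then amounts to a careful but finite bookkeeping: the total count of witnesses is the sum, over the five disjoint regions, of the number of $y$ in that region satisfying the relevant $\beta$-combination, and "there are at least $k$ in total" becomes a finite disjunction over all ways of distributing $k$ among the five regions. Each "at least $k_t$ witnesses $y$ in region $t$" is, crucially, expressible in $\Ctwo_{\ell,c}$ **provided we never need a threshold above $c$** — and this is exactly where the counting-rank bound is preserved: if we only care about multiplicities up to $c$ (which is all $\Ctwo_{\ell,c}$ can observe of the original formula), every distributed threshold $k_t$ can be capped at $c$, so only subformulas $\exists_{\geq k_t}$ with $k_t\leq c$ appear, and these combined with the region-defining guards ($E(x,y)$, $\neg E(x,y)$, $E(y,x)$, etc.) are depth-$\ell$, counting-rank-$c$ formulas in $x$ alone. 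Re-expanding against the five types of $\gamma(x,y')$ for a fresh second variable (or simply taking $\beta\equiv\top$) puts $\psi$ back in the stated normal form. I expect the delicate points to be: (a) making precise that counting up to $c$ suffices, i.e., that the equivalence need only be maintained at the level of $\Ctwo_{\ell,c}$-indistinguishability — or alternatively, reading the lemma as an exact equivalence and checking that the arithmetic of "sum of five bounded counts" genuinely stays within rank $c$, which it does because each summand individually never exceeds what the original rank-$c$ quantifier demanded; and (b) the case management for the four relational regions and the self-loop-free assumption ensuring $v\notin\Ni(v)\cup\No(v)$, so the five regions are truly disjoint and exhaust $V$.
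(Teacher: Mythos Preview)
Your structural-induction approach can be made to work, but you have significantly misidentified where the difficulty lies. You correctly observe at the outset that any formula with at most one free variable, say $\psi(x)$, fits the normal form trivially by taking $\alpha(x)=\psi(x)$ and $\beta(y)=(y=y)$. But then $\exists_{\geq k} y\,\varphi(x,y)$ \emph{is} such a formula: it has only $x$ free, it is already in $\Ctwo_{\ell,c}$ by assumption, and so the quantifier case is immediate. All of your elaborate bookkeeping about distributing $k$ across the five regions, capping thresholds at $c$, and worrying about whether the counting rank is preserved is unnecessary --- you already handled this case in your first paragraph without noticing.

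The paper's proof exploits this observation more directly and avoids structural induction altogether. It simply pushes negations inward (De~Morgan) and distributes to reach a form $\bigvee_i\bigwedge_j \psi_{i,j}$ where each $\psi_{i,j}$ is a literal or begins with $\exists_k$ or $\neg\exists_k$. Now comes the one genuine insight: any $\psi_{i,j}$ beginning with a quantifier has at most one free variable (the quantified variable is bound), so the only conjuncts that can mention both $x$ and $y$ free are \emph{literals}. Hence each conjunction splits as $\alpha_i(x)\wedge\beta_i(y)\wedge\gamma_i(x,y)$ with $\gamma_i$ a conjunction of two-variable literals, and there are only six such literals ($E(x,y)$, $E(y,x)$, $x=y$, and their negations); over simple graphs any consistent conjunction of these reduces to one of the five listed types. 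No induction, no counting arithmetic. Your Boolean-closure arguments for negation and conjunction are correct but superfluous once you see that a single top-level DNF transformation, plus the two-variable observation about quantified subformulas, does the whole job.
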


\begin{proof}[Proof sketch]
Consider a $\Ctwo_{\ell,c}$ formula $\varphi(x, y)$.
With De Morgan and distributivity laws, we can transform $\varphi(x, y)$  into $\bigvee_{i=1}^n \bigwedge_{j=1}^{m_i} \psi_{i,j}$, where each $\psi_{i,j}$
is either a literal (an atom or its negation), or starts with $\exists_k$, or starts with  $\neg \exists_k$.
Next, we partition   each $\bigwedge_{j=1}^{m_i} \psi_{i,j}$  so that  we obtain
$\bigvee_{i=1}^n \big( \alpha_i(x) \land \beta_i(y) \land \gamma_i(x,y) \big)$.
Formulas $\alpha_i(x)$ and $\beta_i(y)$ are already in the required forms.
Next, we observe that  each $\gamma_i(x,y)$ is a conjunction of literals, as otherwise it would start with $\exists_k$ or $\neg \exists_k$, so it would not have two free variables.
The are six literals with two free variables, namely $E(x,y)$, $E(y,x)$,  $x=y$, and their negations.
Since we consider only simple graphs, we can show that 
each such $\gamma_i(x,y)$ can be written as a disjunction of  the five formulas  in the lemma.
Then, by applying distributivity laws, we can obtain the  form from the lemma.
\end{proof}

Next, we will use the normal form from \Cref{twovarstructure} to show that $\WL_c$ captures the expressive power of $\Ctwo$ with counting rank $c$.

\begin{restatable}{theorem}{WLtheorem}\label{mainWLtheorem}
Let $\ell , c \in \mathbb{N} $.
For any directed graphs $G$ and $H$ 
with nodes $u$ and $v$, the following holds:
\smallskip

\centerline{$ G,u \equiv_{\Ctwo_{\ell, c}} H,v$ \quad  if and only if \quad
$ W^\ell_c(u) = W^\ell_c(v)$.}
\end{restatable}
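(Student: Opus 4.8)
I would prove the two implications by separate inductions on $\ell$, relying on \Cref{twovarstructure} for the harder one. For ``only if'' I would in fact establish the stronger statement that for every $\ell$ and every value $\tau$ in the (finite) range of $W^{\ell}_c$ on dimension-$d$ graphs there is a \emph{characteristic formula} $\chi^{\ell}_{\tau}(x)\in\Ctwo_{\ell,c}$ such that $H\models\chi^{\ell}_{\tau}(v)$ iff $W^{\ell}_c(v)=\tau$ for every pointed graph $(H,v)$. Finiteness of the range is an easy companion induction: a $W^{\ell+1}_c$-label is a tuple made of a $W^{\ell}_c$-label together with four $c$-bounded multisets over the finite set of $W^{\ell}_c$-labels. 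Given the characteristic formulas, ``only if'' is immediate, since $W^{\ell}_c(u)\neq W^{\ell}_c(v)$ makes $\chi^{\ell}_{W^{\ell}_c(u)}(x)$ separate $G,u$ from $H,v$.

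In the base case $W^{0}_c(v)=\lambda(v)$, so $\chi^{0}_{\tau}(x)$ is the conjunction of the literals $P_i(x)$ / $\neg P_i(x)$ prescribed by $\tau$; since a formula of depth $0$ with $x$ as its only free variable is a Boolean combination of such literals and the trivial atoms $x=x$ and $E(x,x)$, this also settles ``if'' at level $0$. For the inductive step, writing a $W^{\ell+1}_c$-label as $\sigma=(\tau_0,M_{\mathsf{io}},M_{\mathsf{i}},M_{\mathsf{o}},M_{\mathsf{n}})$ following the five components in \Cref{defdef1}, I would set
\[
\chi^{\ell+1}_{\sigma}(x)\;=\;\chi^{\ell}_{\tau_0}(x)\;\land\;\bigwedge_{t}\;\bigwedge_{\tau}\;\theta^{t}_{\tau}(x),
\]
where $t$ ranges over the four neighbour types, $\tau$ over all $W^{\ell}_c$-labels, and $\theta^{t}_{\tau}(x)$ asserts that exactly $M_t(\tau)$ elements $y$ satisfy $\delta_t(x,y)\land\chi^{\ell}_{\tau}(y)$ when $M_t(\tau)<c$, and that at least $c$ such $y$ exist when $M_t(\tau)=c$; here $\delta_t$ is the matching guard among $E(x,y)\land E(y,x)$, $E(x,y)\land\neg E(y,x)$, $\neg E(x,y)\land E(y,x)$ and $\neg E(x,y)\land\neg E(y,x)\land x\neq y$. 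A routine check confirms $\chi^{\ell+1}_{\sigma}\in\Ctwo_{\ell+1,c}$ — each $\theta^{t}_{\tau}$ adds one quantifier over a body of depth $\le\ell$, and asserting ``exactly $m$'' or ``at least $c$'' for $m\le c$ keeps the counting rank $\le c$ (and for $c=0$ the multiset components are empty, so the big conjunction is dropped) — and, using inductive correctness of the $\chi^{\ell}_{\tau}$, that $H\models\chi^{\ell+1}_{\sigma}(v)$ iff $W^{\ell+1}_c(v)=\sigma$.

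For ``if'' I would show, by induction on $\ell$ and, within the step, by induction on the structure of $\varphi(x)\in\Ctwo_{\ell+1,c}$, that $W^{\ell+1}_c(u)=W^{\ell+1}_c(v)$ implies $G\models\varphi(u)\Leftrightarrow H\models\varphi(v)$. Atoms and Boolean connectives are immediate (the first component of $W^{\ell+1}_c$ recovers $\lambda$), so the only substantial case is a maximal quantified subformula $\exists_k z\,\psi$ (with $z$ one of the two variables), where $k\le c$ and $\psi\in\Ctwo_{\ell,c}$. Applying \Cref{twovarstructure} to $\psi$ rewrites it as $\bigvee_i(\alpha_i(x)\land\beta_i(y)\land\gamma_i(x,y))$ with $\alpha_i,\beta_i\in\Ctwo_{\ell,c}$ and each $\gamma_i$ one of the five edge/equality formulas, which for each fixed $x$ split the $y$ into five classes. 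Fixing $x\mapsto u$: the truth values $\alpha_i(u)$ are preserved on passing to $v$ by the level-$\ell$ hypothesis (as $W^{\ell}_c(u)=W^{\ell}_c(v)$); discarding the $i$ with $\alpha_i(u)$ false, the remaining $y$ are sorted by $\gamma_i$ into the four neighbour types of $u$ and the case $y=u$, and within each type one counts the $y$ satisfying a fixed $\Ctwo_{\ell,c}$ formula in $y$. By the level-$\ell$ hypothesis this condition depends only on $W^{\ell}_c(y)$, so the number of such $y$ in a given neighbour type, capped at $c$, is a function of the corresponding $c$-bounded multiset stored in $W^{\ell+1}_c(u)$, while the $y=u$ contribution is a function of $W^{\ell}_c(u)$; since $W^{\ell+1}_c(u)=W^{\ell+1}_c(v)$, all these capped counts agree between $G,u$ and $H,v$. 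Knowing each summand capped at $c$ determines the sum capped at $c$ (if one summand reaches $c$ so does the sum, and otherwise every summand, hence the sum, is known exactly), and for $k\le c$ this settles $\exists_k z\,\psi$; the sub-sentence variant $z=x$, $\psi=\psi(x)$ goes through identically once one notes that the five components of $W^{\ell+1}_c(u)$ determine, for each $W^{\ell}_c$-label, its total multiplicity in the graph capped at $c$.

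The crux, I expect, is the binary counting case $\exists_k y\,\psi(x,y)$: one must disentangle what $\psi$ says about $y$ alone from the edge relationship between $x$ and $y$, which is exactly what \Cref{twovarstructure} provides — and the reason $\WL_c$ in \Cref{defdef1} records one $c$-bounded multiset per edge type rather than a single neighbourhood multiset.
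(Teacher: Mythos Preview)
Your proposal is correct and follows essentially the same strategy as the paper: induction on $\ell$, with the quantifier case of the ``if'' direction handled via \Cref{twovarstructure} by splitting $y$ according to the five edge/equality types and reading off the capped counts from the corresponding components of $W^{\ell+1}_c$. The one presentational difference is in the ``only if'' direction: you build explicit Hintikka-style characteristic formulas $\chi^{\ell}_{\tau}$ recursively, whereas the paper defines its $\psi^{i}_{t}$ non-constructively as the conjunction of all $\Ctwo_{i,c}$-formulas true at colour-$t$ nodes (relying on finiteness of $\Ctwo_{i,c}$ up to equivalence). Your route is slightly more self-contained since it avoids that external finiteness lemma, but both are standard and lead to the same place.
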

\begin{proof}[Proof sketch]
The proof is by induction on $i \leq  \ell$.
In the basis it suffices to observe that $ W^\ell_0(u) = W^\ell_0(v)$ means that $u$ and $v$ satisfy the same Boolean formulas with one free variable.
Next, we consider the inductive step.

For the forward implication we assume that $W^{i +1}_c(u) \neq W^{i +1}_c(v)$.
Hence  $W^{i +1}_c(u)$ and $W^{i +1}_c(v)$ need to differ on one of the five components from Equation~\eqref{eq:WLupdate}.
Assume that $\lBrace W^i_c(w) \rBrace^c_{w \in \Ni_G(u) \cap \No_G(u)} \neq \lBrace W^i_c(w) \rBrace^c_{w \in \Ni_H(v) \cap \No_H(v)}$ (the other cases are analogous).
So there are $k > k'$ both $\leq c$ such that some colour $t$ occurs $k$ times in the left multiset and $k'$ times in the right multiset.
Using the inductive hypothesis, we can show that there is a $\Ctwo_{{i},c}$ formula
$\psi_t^i(y)$ such that for any node $w$ in $F \in \{G,H \}$,  $W^i_c(w)=t$ if and only if $F\models \psi_t^i(w)$. 
Hence $G\models \exists_k y (\psi_t^i(y) \land E(u,y)\land E(y,u))$,
but $H \not\models \exists_k y (\psi_t^i(y) \land E(v,y)\land E(y,v))$.




For the backwards implication assume that \( W^{i+1}_c(u) = W^{i+1}_c(v) \).
We show by  induction on the structure of $\Ctwo_{{i+1},c}$ formulas  $\varphi(x)$ that
$
G \models \varphi(u)$ if and only if $H \models \varphi(v)$.
The interesting case is for $\varphi(x) = \exists_{k} y \psi(x, y)$.
We first show the above for the formulas $\psi(x,y)$ of the form $\eta(y)\land \gamma(x,y)$, where $\eta(y)\in \Ctwo_{l,c}$ and $\gamma(x,y)$ is one of the five formulas from  \Cref{twovarstructure}. We finish the proof by applying \Cref{twovarstructure} to lift this result to any $\psi(x,y) \in \Ctwo_{i,c} $.
\end{proof}


We will use \Cref{mainWLtheorem} in two following sections: in \Cref{sec:directed} for directed graphs (\Cref{thm:LonotC2}) and in \Cref{sec:undirected} for undirected graphs (\Cref{thm:GLonotC2}).

\section{Logical Expressiveness Over Directed Graphs}\label{sec:directed}

In this section, we will study the expressiveness of ACR-GNNs over directed graphs. 
In this setting, we will consider an analogous question to the open problem of \citet{DBLP:conf/iclr/BarceloKM0RS20}, namely: are $\Ctwo$ node classifiers exactly \FO{} classifiers expressible by ACR-GNNs?  
As we will show, and which may be surprising, the  answer is negative.
To this end, we will prove that checking if edges of a graph form a strict linear order is expressible in \FO{} and by ACR-GNNs, but cannot be expressed in $\Ctwo$. Although this is a property of graphs, we can formulate it also as a node classifier as follows.

\begin{definition}
We let $\varphi_\LO(x)$ be a node classifier accepting a node of a graph $G$ if and only if $G$ is a strict linear order.
\end{definition}
Clearly, strict linear orders can be defined in $\FO$ with a formula $\psi$ being a conjunction of the following three: 
\begin{align*}
&\forall x\, \neg E(x, x) 
&&\quad \text{irreflexivity} \\
&\forall x\, \forall y\, \Big( (x = y) \lor E(x, y) \lor E(y , x) \Big) 
&&\quad \text{totality} \\
&\forall x\, \forall y\, \forall z\, \Big( E(x , y) \land  E(y, z) \to E(x , z) \Big) 
&&\quad \text{transitivity}
\end{align*}
Since we are considering simple graphs, irreflexivity can be omitted from $\psi$. Notice that $\psi$ has no free variables, but we can always turn it into a node classifier by writing it as $(x=x) \land \psi$.
Thus, $\varphi_{\LO}(x)$ is expressible in $\FO$.

Next, we will show that  $\varphi_{\LO}(x) $ can be expressed as an ACR-GNN.
This is more challenging, since ACR-GNNs cannot detect  transitivity.
To address this challenge, we  will exploit the following equivalent definition of linear orders.

\begin{restatable}{proposition}{linearalternative}\label{linear}
A finite binary relation $E$ is a strict linear order if and only if $E$  is irreflexive, total, and each element has a different number of $E$-successors.
\end{restatable}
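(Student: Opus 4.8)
The plan is to prove the two directions separately, with the forward direction being routine and the backward direction requiring an inductive argument on the size of the domain. For the forward direction, suppose $E$ is a strict linear order on a finite set. Irreflexivity and totality are immediate from the axioms $\psi$ above. For the successor-counting claim, I would argue that in a strict linear order the number of $E$-successors of an element is exactly its ``rank from the top'': the $E$-maximum (which exists and is unique by finiteness, totality, and transitivity) has $0$ successors, the next element has $1$, and so on, so all the successor-counts $0,1,\dots,n-1$ are attained, hence pairwise distinct. Formally this is cleanest by induction on $|V|$: remove the unique maximal element, apply the inductive hypothesis to the restricted order, and observe that each remaining element gains exactly one successor (the removed maximum), so distinctness is preserved.

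For the backward direction, assume $E$ is irreflexive, total on a finite set $V$ with $|V| = n$, and all elements have pairwise distinct $E$-successor counts. Since there are $n$ elements and the successor count of each lies in $\{0, 1, \dots, n-1\}$, the counts are exactly the values $0, 1, \dots, n-1$, each attained once. Let $a$ be the element with $n-1$ successors; by totality and irreflexivity $a$ is $E$-related to every other element, and since it cannot be a successor of any of them (that would push someone's count — wait, more carefully) $a$ has no $E$-predecessors: indeed if $E(b,a)$ then $b$'s successor set and $a$'s successor set satisfy $\{c : E(a,c)\} = V \setminus \{a\}$, and one checks the counting forces a contradiction. The clean way: let $a$ be the element with $0$ successors. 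By totality, every other element $b$ satisfies $E(b,a)$, i.e. $a$ is the $E$-maximum. Restrict $E$ to $V \setminus \{a\}$: this relation is still irreflexive and total, and each surviving element loses exactly one successor (namely $a$), so its new count drops by $1$; hence the new counts are $0,1,\dots,n-2$, again all distinct. By the inductive hypothesis $E \restriction (V\setminus\{a\})$ is a strict linear order, and re-adding $a$ as a new maximum (which is $E$-above everything, $E$-below nothing) preserves transitivity and totality, so $E$ is a strict linear order. The base case $n \le 1$ is trivial.

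The main obstacle is the bookkeeping in the backward induction: one must verify carefully that deleting the element of successor-count $0$ (equivalently, the $E$-maximum identified purely from the counting hypothesis, before transitivity is known) decreases every other element's successor-count by exactly $1$ and does not merely decrease it by ``at most $1$''. This hinges on the fact that $a$ genuinely lies in every other element's successor set, which follows from totality and from $a$ itself having no successors, so no other element can be a successor of $a$ and thus — by totality — each other element has $a$ as a successor. Once this is pinned down, re-assembling the order from the inductive hypothesis is straightforward, since adding a top element to a strict linear order trivially yields a strict linear order. I would present the argument as a short induction on $|V|$, keeping the verification of the ``exactly one'' decrement as the one place deserving explicit justification.
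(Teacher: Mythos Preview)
Your proof is correct, but it takes a genuinely different route from the paper's. For the backward direction, the paper enumerates the elements as $v_0,\dots,v_{n-1}$ according to their successor counts and then proves, by \emph{strong induction on the index $i$}, the explicit characterisation $(v_i,v_j)\in E \iff i>j$; transitivity drops out immediately from this characterisation. You instead run an \emph{induction on $|V|$}: peel off the element with zero successors (forced to be an $E$-maximum by totality), check that every remaining count drops by exactly one, apply the inductive hypothesis to the smaller domain, and glue the maximum back on top.

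Both arguments are short and self-contained. The paper's approach has the advantage that it yields the full description of $E$ in one stroke, without ever restricting the domain or having to verify that transitivity survives re-adding a top element. Your approach is the more ``structural'' one and arguably more familiar; the only place it needs care---which you correctly flag---is the verification that deleting the $0$-successor element decreases every other count by exactly one, and this is indeed immediate from totality plus the fact that the removed element has no outgoing edges. (One stylistic note: your write-up still contains the abandoned attempt via the element with $n-1$ successors and the self-correcting ``wait, more carefully''; in a final version you would of course excise that and present only the clean argument via the $0$-successor element.)
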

\begin{proof}[Proof sketch]
Strict linear orders clearly satisfy the three properties.
For the opposite direction we  show that $E$ enjoying these properties is transitive.
Assume that there are $n$ elements.
As each element has a different number of $E$-successors and $E$ is irreflexive, 
we can call the elements $v_0, \dots, v_{n-1}$, where $v_i$ is the unique element whose number of $E$-successors is  $i$.
By a strong induction on $i \leq n-1$, we can show that, for all  $v_j$, we have $(v_i,v_j) \in E$ if and only if $i > j$. It  implies that $E$ must be transitive.
\end{proof}

We will use \Cref{linear} to construct an ACR-GNN which detects strict linear orders.

\begin{restatable}{theorem}{GNNforLin}\label{LinInGNN}
Over directed graphs,
$\varphi_{\LO}(x)$ is expressible by an ACR-GNN.
It can be achieved using only 3 layers and
no aggregation over the out-neighbourhood.
\end{restatable}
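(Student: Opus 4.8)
The plan is to rephrase ``being a strict linear order'' into conditions that an ACR-GNN can verify using only incoming aggregation and a global readout. Applying \Cref{linear} to the reverse relation $E^{-1}$ (which is a strict linear order iff $E$ is, and whose out-degrees are the in-degrees of $E$), $G$ is a strict linear order iff $G$ is irreflexive, total, and all nodes have pairwise distinct in-degrees $d^-(v) := |\Ni_G(v)|$. Irreflexivity is free for simple graphs. Write (b) for ``the $d^-(v)$ are pairwise distinct'' (equivalently, they are exactly $0,1,\dots,n-1$, where $n := |V|$). The obstacle is totality, since it speaks about out-neighbours, which we are forbidden to aggregate over; I would remove it by proving that, \emph{assuming} (b), totality is equivalent to the incoming condition (a): $\sum_{w \in \Ni_G(v)} d^-(w) = \binom{d^-(v)}{2}$ for every node $v$. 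The reason is that $\binom{d^-(v)}{2} = 0+1+\dots+(d^-(v)-1)$ is the least possible sum of $d^-(v)$ distinct elements of $\{0,\dots,n-1\}$, so under (b) condition (a) forces the in-neighbours of the node $v_i$ with $d^-(v_i)=i$ to be exactly $\{v_0,\dots,v_{i-1}\}$; hence $E(v_j,v_i)\iff j<i$, a total strict linear order, and the converse direction is immediate from \Cref{linear}. Thus $G$ is a strict linear order iff (a) and (b) hold, and both conditions only mention $\Ni_G$ and the global multiset $\{d^-(w)\}_{w\in V}$.

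I would then build a $3$-layer ACR-GNN $\N$ in which the functions $\aggo$ are constant (no out-aggregation), node colours are ignored, and every aggregation and readout is a summation.
\emph{Layer~1}: each $v$ computes $d^-(v)$ by an $\aggi$ summing the constant $1$ over $\Ni_G(v)$, and $n$ by a $\readout$ summing the constant $1$; its new label is $(d^-(v),n)$.
\emph{Layer~2}: by an $\aggi$ summing $d^-(w)$ over $w\in\Ni_G(v)$, node $v$ obtains $\Sigma(v):=\sum_{w\in\Ni_G(v)}d^-(w)$; by a $\readout$ summing a fixed function $g(d^-(w),n)$ of the layer-$1$ labels it obtains a single number $R$ that, together with $n$, pins down the whole multiset $\{d^-(w)\}_{w\in V}$ (for instance $R=\sum_{w\in V}(n{+}1)^{-d^-(w)}$, a base-$(n{+}1)$ encoding, injective because all multiplicities are $\le n$). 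The combination function sets $b(v):=1$ iff $\Sigma(v)=\binom{d^-(v)}{2}$ and $R$ equals the value forced by (b), namely $R=\sum_{k=0}^{n-1}(n{+}1)^{-k}=\frac{(n+1)-(n+1)^{1-n}}{n}$, and $b(v):=0$ otherwise; the new label is $(b(v),n)$.
\emph{Layer~3}: a $\readout$ computes $\sum_{w\in V}b(w)$ and the combination function outputs the bit $[\,\sum_{w}b(w)=n\,]$; the classifier $\cls$ accepts iff this bit is $1$. This bit is the same at every node and equals $1$ exactly when (a) and (b) both hold, i.e.\ when $G$ is a strict linear order, so $\N$ expresses $\varphi_{\LO}(x)$ using $3$ layers and no out-aggregation.

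The step I expect to be the main obstacle is the correctness of the reduction to (a)$\,\wedge\,$(b)---above all the equivalence of totality with (a) under (b)---together with the bookkeeping needed to ensure the construction genuinely fits a fixed-dimensional ACR-GNN despite $n$ being unbounded: one must check that $d^-(v)$, $n$, $\Sigma(v)$, $R$, the threshold $\sum_{k<n}(n{+}1)^{-k}$, and the required comparisons all live in a constant number of real coordinates, and that the degenerate cases (no in-neighbours, $n\le 1$) behave as intended (a single node is vacuously a strict linear order, and the formulas give $b(v)=1$ there). Once (a) and (b) are isolated as above, what remains is routine verification about these summations and combination functions; a base-$2$ readout $R=\sum_w 2^{-d^-(w)}$ also works in place of base $n{+}1$, at the cost of an elementary number-theoretic lemma that $R=2-2^{1-n}$ iff the in-degrees are exactly $0,1,\dots,n-1$.
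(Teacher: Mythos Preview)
Your proposal is correct and shares the paper's three-layer, in-aggregation-only skeleton, but the combinatorial heart differs. The paper encodes at each $v$ the full multiset of its in-neighbours' in-degrees via a fixed-base positional sum $\sum_{w\in\Ni(v)}10^{d^-(w)}$ and then checks, in a single readout, both that all values $10^{d^-(v)}$ are pairwise distinct and that each node's sum equals $\frac{10^{d^-(v)}-1}{9}$; this pins down the in-neighbour set directly without ever needing $n$. You instead keep only the plain sum $\Sigma(v)$ and exploit that, once the in-degrees are known to be $0,\dots,n-1$, the equality $\Sigma(v)=\binom{d^-(v)}{2}$ forces the in-neighbours of $v$ to be exactly the nodes of smaller in-degree, since this is the unique subset of that size attaining the minimum sum. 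Your argument is slightly slicker combinatorially---less local information suffices---at the price of invoking readout already in the early layers to learn $n$ for the base-$(n{+}1)$ global encoding of the degree multiset; the paper defers readout entirely to the last layer. The base-$2$ variant you mention as an aside does work but, as you correctly flag, requires a separate uniqueness lemma, so your base-$(n{+}1)$ choice is the cleaner one.
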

\begin{proof}
We will construct the required ACR-GNN  $\N$, whose application to a linear order of length four is presented in \Cref{fig::GNN_LO}.
The first layer maps the initial vector of a node $v$ into the number $10^n$, where $n$ is the in-degree of $v$. This is obtained by setting  $\aggi(M)=  10^{|M|}$ and $\comb(x,y) =y$.
The second layer  maps a vector of  $v$ into a vector in $\R^2$ of the form $(10^n, 10^{k_1} + \dots + 10^{k_n} )$ where $10^n$ is as in the first layer, whereas each $k_i$ is the in-degree of the $i$th among the $n$ in-neighbours of $v$. This is obtained by setting  $\aggi(M)= sum (M)$ and $\comb(x,y) = (x,y)$.
The third layers  maps each vector into 1 or 0 by setting
$\readout(M) = 1$ if both of the following conditions hold:
\begin{enumerate}[label={}, leftmargin=2em]
\item[(i)] $x[1] \neq y[1]$, for every pair $x,y \in M$.
\item[(ii)] if $x[1] = 10^n$, then $x[2] =
\underbrace{1\ldots1}_{n \text{ times}}
$, for each $x \in M$.
\end{enumerate}
If any of the conditions does not hold, we set $\readout(M) =0$.
Finally, we let
$\comb(x,y)=y$.

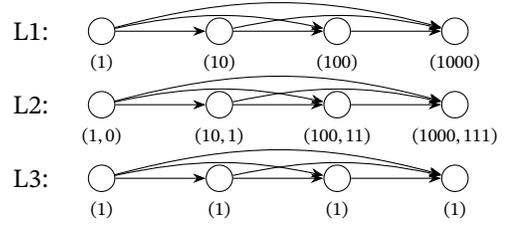
\begin{figure}[t] 
    \centering 
\begin{tikzpicture}[
  dotnode/.style={circle, draw, minimum size=10pt, inner sep=0pt},
  layerlabel/.style={anchor=east},
  >={Stealth},
  node distance=0.5cm and 1.2cm,
]

\node[layerlabel] (l1) 
at (-1.2,0) {L1:};
\node[layerlabel] (l2) [below=of l1] {L2:};
\node[layerlabel] (l3) [below=of l2] {L3:};


\node[dotnode, label=below:{\scriptsize $(1)$}]       (n10) [right=0.4cm of l1] {};
\node[dotnode, label=below:{\scriptsize$(10)$}]      (n11) [right=of n10] {};
\node[dotnode, label=below:{\scriptsize$(100)$}]     (n12) [right=of n11] {};
\node[dotnode, label=below:{\scriptsize$(1000)$}]    (n13) [right=of n12] {};

\node[dotnode, label=below:{\scriptsize $(1,0)$}]       (n20) [right=0.4cm of l2] {};
\node[dotnode, label=below:{\scriptsize $(10,1)$}]      (n21) [right=of n20] {};
\node[dotnode, label=below:{\scriptsize $(100,11)$}]    (n22) [right=of n21] {};
\node[dotnode, label=below:{\scriptsize $(1000,111)$}]  (n23) [right=of n22] {};

\node[dotnode, label=below:{\scriptsize $(1)$}] (n30) [right=0.4cm of l3] {};
\node[dotnode, label=below:{\scriptsize $(1)$}] (n31) [right=of n30] {};
\node[dotnode, label=below:{\scriptsize $(1)$}] (n32) [right=of n31] {};
\node[dotnode, label=below:{\scriptsize $(1)$}] (n33) [right=of n32] {};


\draw[->] (n10) -- (n11);
\draw[->] (n11) -- (n12);
\draw[->] (n12) -- (n13);
\draw[->, bend left=12] (n10) to (n12);
\draw[->, bend left=15] (n10) to (n13);
\draw[->, bend left=12] (n11) to (n13);

\draw[->] (n20) -- (n21);
\draw[->] (n21) -- (n22);
\draw[->] (n22) -- (n23);
\draw[->, bend left=12] (n20) to (n22);
\draw[->, bend left=15] (n20) to (n23);
\draw[->, bend left=12] (n21) to (n23);

\draw[->] (n30) -- (n31);
\draw[->] (n31) -- (n32);
\draw[->] (n32) -- (n33);
\draw[->, bend left=12] (n30) to (n32);
\draw[->, bend left=15] (n30) to (n33);
\draw[->, bend left=12] (n31) to (n33);

\end{tikzpicture}
    \caption{Application of layers 1--3 of the ACR-GNN from \Cref{LinInGNN} to the strict linear order with four nodes
    }
    \label{fig::GNN_LO}
\end{figure}

Condition (i) guarantees that each node has a different in-degree.
If this is the case, then Condition (ii)---which can be equivalently written as $\frac{x[1] - 1}{9} =  x[2]$---checks if the graph is total.
Hence, for any graph $G=(V,E,\lambda)$, if $E$ is a strict linear order, then  $\N(G,v)=1$ and otherwise  $\N(G,v)=0$, for any node $v$ in $G$.
\end{proof}

To finish this section, we need to show that $\varphi_\LO(x)$ cannot be expressed in \Ctwo{}. For this, we will exploit our bounded \WL{} algorithm and corresponding \Cref{mainWLtheorem}.

\begin{restatable}{theorem}{WLforLin}\label{thm:LonotC2}
Over directed graphs,
$\varphi_{\LO}(x)$ is not expressible in \Ctwo{}.
\end{restatable}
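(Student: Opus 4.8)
The plan is to derive the result from the bounded Weisfeiler--Leman characterisation in \Cref{mainWLtheorem}. Every $\Ctwo$ formula has a finite quantifier depth $\ell$ and a finite counting rank $c$, hence lies in $\Ctwo_{\ell,c}$ for suitable $\ell,c$. So it suffices to show that for every $\ell,c\in\mathbb{N}$ there are directed graphs $G,H$ and nodes $u\in G$, $v\in H$ with $W^\ell_c(u)=W^\ell_c(v)$ such that $G$ is a strict linear order while $H$ is not. Indeed, by \Cref{mainWLtheorem} we then have $G,u\equiv_{\Ctwo_{\ell,c}}H,v$, so no $\Ctwo_{\ell,c}$ formula --- in particular not one equivalent to $\varphi_\LO(x)$ --- can take value $\true$ at $u$ and $\false$ at $v$; since $\ell,c$ were arbitrary, $\varphi_\LO(x)$ is not expressible in $\Ctwo$.

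Fix $\ell,c$, and let $N$ be large (of order $\ell c$; the precise bound is read off from the induction below). Let $G$ be the strict linear order on nodes $v_0,\dots,v_{N-1}$, with $v_a\to v_b$ exactly when $a<b$. Obtain $H$ from $G$ by picking a deep-interior index $i=\lfloor N/2\rfloor$ and replacing the single edge $v_i\to v_{i+2}$ by $v_{i+2}\to v_i$. Then $H$ is still a tournament (exactly one directed edge between any two distinct nodes) and total, but it contains the directed cycle $v_i\to v_{i+1}\to v_{i+2}\to v_i$; applying transitivity twice would force $v_i\to v_i$, contradicting irreflexivity, so $H$ is not a strict linear order. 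Two features of this perturbation drive the argument: first, the edge sets of all nodes other than $v_i$ and $v_{i+2}$ are literally unchanged; second, in $H$ the three nodes $v_i,v_{i+1},v_{i+2}$ all acquire in-degree $i+1$ and out-degree $N-2-i$, and since $i$ is deep in the interior these values are $\ge c$.

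The core is a lemma proved by induction on the round number $t\le\ell$. Writing $h$ for a slowly growing function (e.g.\ $h(0)=0$ and $h(t+1)=h(t)+c+2$, so $h(\ell)=O(\ell c)$) and choosing $N>2h(\ell)$, the lemma states: (A) in $H$, every node $v_m$ with $h(t)\le m\le N-1-h(t)$ has $W^t_c(v_m)$ equal to one fixed value $\mathrm{gen}^t$, independent of $m$; (B) the same holds in $G$; and (C) for every fringe node $v_m$ (i.e.\ $m<h(t)$ or $m>N-1-h(t)$), the labels $W^t_c(v_m)$ in $G$ and in $H$ coincide. The inductive step uses the update rule \eqref{eq:WLupdate}: for an interior $v_m$, its in-neighbours in either graph are $v_0,\dots,v_{m-1}$ up to a $\pm O(1)$ change caused by the defect; by the induction hypothesis, the $h(t)$ fringe ones among them carry a fixed graph-independent set of labels, and all remaining ones --- at least $c$ of them, even after the $\pm O(1)$ change --- carry the label $\mathrm{gen}^t$. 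Passing to the $c$-bounded multiset therefore (i) erases the $\pm O(1)$ discrepancy, since $\mathrm{gen}^t$ already occurs at least $c$ times, and (ii) makes the enlarged in-degrees $i,i+1,i+2$ of the defect nodes indistinguishable, as they exceed $c$ and so the defect nodes themselves merely carry the interior label $\mathrm{gen}^t$ rather than any special one. The same holds on the out-side, while the ``$\Ni\cap\No$'' and ``non-neighbour'' components are empty ($H$ is a total tournament). This yields $W^{t+1}_c(v_m)=\mathrm{gen}^{t+1}$ for all interior $v_m$ in both graphs, and claim (C) follows from the induction hypothesis together with the fact that fringe nodes have identical edge sets in $G$ and $H$. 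Applying the lemma with $t=\ell$ and letting $u=v$ be any node in the common interior completes the proof, since then $W^\ell_c(u)=\mathrm{gen}^\ell=W^\ell_c(v)$.

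The main obstacle is this lemma: one must make precise that a constant-size edge modification placed inside ``high-multiplicity'' neighbourhoods is invisible to $\WL_c$ after arbitrarily many rounds --- that is, no $c$-bounded multiset is ever affected --- and that this invisibility propagates as labels are iteratively refined, even though $H$ genuinely differs from every strict linear order. The remaining ingredients --- the reduction through \Cref{mainWLtheorem}, the verification that $H$ is a non-transitive tournament, and the bookkeeping of fringe labels --- are routine.
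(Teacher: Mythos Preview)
Your proposal is correct and takes essentially the same approach as the paper: both flip a single deep-interior edge of a large linear order (the paper uses $(v_{-1},v_1)$ in a $(2n{+}1)$-chain with $n=\ell c+1$, you use $(v_i,v_{i+2})$ with $i=\lfloor N/2\rfloor$) and then prove by simultaneous induction that all ``middle'' nodes share a generic $W^t_c$-colour while the fringe colours match across the two graphs, so that the $c$-bounded multisets never register the defect. The only differences are cosmetic---indexing, and your interior shrinking by $c+2$ per round versus the paper's $c$---and your observation that the $\Ni\cap\No$ and non-neighbour components vanish in a tournament is a nice simplification the paper leaves implicit.
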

\begin{proof}[Proof sketch]
Suppose towards a  contradiction that $\varphi_{\LO}(x)$ is expressible in $\Ctwo$, so it is definable by a formula in $\Ctwo_{\ell, c}$, for some $\ell, c \in \mathbb{N}$.
To obtain a contradiction, we will construct a graph $G$ with nodes $v_i$ and a graph $G'$ with corresponding nodes $v_i'$, such that $G \models \varphi_{\LO}(v_i)$ and $G' \not\models \varphi_{\LO}(v_i')$, but $G,v_i \equiv_{\Ctwo_{l,c}} G',v_i$ for all nodes $v_i$.

Let $n = \ell \cdot c +1$.
We define $G=(V,E,\lambda)$ as a strict linear order over $2n+1$ nodes $V=\{v_{-n}, \dots, v_n \}$, with  $E =\{ (v_i,v_j) : i < j \}$, and $\lambda(v_i)=0$ for each $v_i$.
We let $G'=(V',E',\lambda')$ be such that 
$V'=\{v'_{-n}, \dots, v'_n \}$, $E' =\{ (v'_i,v'_j) : i < j \} \setminus \{ (v'_{-1},v'_{1}) \} \cup \{ (v'_{1},v'_{-1}) \}$, and $\lambda'(v_i')=0$ for each $v_i'$.
For example, if $c=2$ and $\ell =2$, the graphs $G$ is  depicted on top of \Cref{fig::WL_LO}; graph $G'$ is similar, but instead of $(v'_{-1},v'_{1})$ it has the opposite edge $(v'_{1},v'_{-1})$.
Notice that both graphs are irreflexive, asymmetric, and total, but only $G$ is transitive.
Hence, for all nodes $v_i$, we have $G \models \varphi_{\LO}(v_i)$ and $G' \not\models \varphi_{\LO}(v_i')$.

\begin{figure}[t] 
    \centering 
\begin{tikzpicture}[
  dotnode/.style={circle, draw, minimum size=10pt, inner sep=0pt},
  layerlabel/.style={anchor=east},
  >={Stealth},
  node distance=0.5cm and 0.3cm,
]

\node[layerlabel] (l0) at (-1.2,0) {$W^0_2$:};
\node[layerlabel] (l1) [below=of l0] {$W^1_2$:};
\node[layerlabel] (l2) [below=of l1] {$W^2_2$:};

\node[dotnode, label=below:{\scriptsize  $v_{-5}$}] (n00) [right=0.1cm of l0] {};

\foreach \i [evaluate=\i as \prev using int(\i-1),
             evaluate=\i as \labelval using int(\i - 5)] in {1,...,10} {
  \node[dotnode, label=below:{\scriptsize  $v_{\labelval}$}] (n0\i) [right=of n0\prev] {};
}

\node[dotnode, label=below:{\scriptsize  $v_{-5}$}] (n10) [right=0.1cm of l1] {};
\foreach \i [evaluate=\i as \prev using int(\i-1),
             evaluate=\i as \labelval using int(\i - 5)] in {1,...,10} {
  \node[dotnode, label=below:{\scriptsize  $v_{\labelval}$}] (n1\i) [right=of n1\prev] {};
}
\node[dotnode, fill=red!60] at (n10) {};
\node[dotnode, fill=cyan!60!blue] at (n11) {};

\node[dotnode, fill=olive!70!black] at (n19) {};
\node[dotnode, fill=violet!70] at (n110) {};

\node[dotnode, label=below:{\scriptsize  $v_{-5}$}] (n20) [right=0.1cm of l2] {};

\foreach \i [evaluate=\i as \prev using int(\i-1),
             evaluate=\i as \labelval using int(\i - 5)] in {1,...,10} {
  \node[dotnode, label=below:{\scriptsize  $v_{\labelval}$}] (n2\i) [right=of n2\prev] {};
}
\node[dotnode, fill=red!60] at (n20) {};
\node[dotnode, fill=cyan!60!blue] at (n21) {};
\node[dotnode, fill=orange!80] at (n22) {};
\node[dotnode, fill=gray!40!blue!30] at (n23) {};

\node[dotnode, fill=brown!30!yellow!60] at (n27) {};
\node[dotnode, fill=magenta!70] at (n28) {};
\node[dotnode, fill=olive!70!black] at (n29) {};
\node[dotnode, fill=violet!70] at (n210) {};

\foreach \i [evaluate=\i as \next using int(\i+1)] in {0,...,9} {
  \draw[->] (n0\i) -- (n0\next);
}
\draw[->, bend left=35] (n04) to (n06);

\foreach \i [evaluate=\i as \next using int(\i+1)] in {0,...,9} {
  \draw[->] (n1\i) -- (n1\next);
}
\draw[->, bend left=35] (n14) to (n16);

\foreach \i [evaluate=\i as \next using int(\i+1)] in {0,...,9} {
  \draw[->] (n2\i) -- (n2\next);
}
\draw[->, bend left=35] (n24) to (n26);

\end{tikzpicture}
    \caption{Application of $\WL_c$ to $G$ from \Cref{thm:LonotC2}; for  readability we draw only arrows $(v_i,v_{i+1})$  between consecutive nodes and $(v_{-1}, v_1)$ distinguishing $G$ from $G'$}
    \label{fig::WL_LO}
\end{figure}
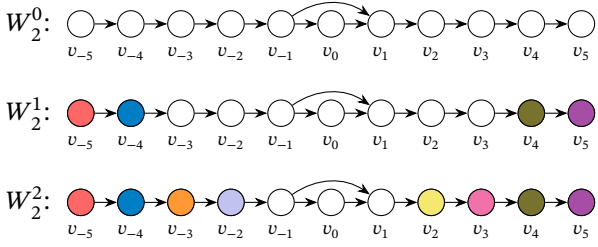

It remains to show that $G,v_i \equiv_{\Ctwo_{l,c}} G',v'_i$.
To this end, by \Cref{mainWLtheorem},
 it suffices to show that $W^\ell_c (v_i) = W^\ell_c (v_i')$.
We can prove it by showing, with a simultaneous induction on $k \leq \ell$,  the  following  two statements:
\begin{enumerate}[label=(\roman*), leftmargin=*, align=left, labelsep=-0.5em]
\item
$W^k_c (v_i) = W^k_c (v_i')$, for  $i \in \{-n, \dots, n\}$,

\item
$W^k_c (v_i) = W^k_c (v_j)$, for  $i,j \in \{ -(n - ck), \dots,  n -ck   \}$.
\end{enumerate}
Statement (ii) ensures that all  `middle nodes' have the same colour;
for instance  in  \Cref{fig::WL_LO}  nodes $v_{-3}, \dots, v_3$ have the same colour in $W_2^1$.
We use it to show Statement~(i), which implies required
$G,v_i \equiv_{\Ctwo_{l,c}} G',v'_i$.
\end{proof}

Hence, we can conclude  this sections as follows.

\begin{corollary}
Over directed graphs,
there are \FO{} node classifiers expressible by ACR-GNNs which are not expressible in $\Ctwo$.
In particular, $\varphi_\LO(x)$ is such a classifier.
\end{corollary}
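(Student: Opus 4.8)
The plan is to assemble the three properties of $\varphi_{\LO}(x)$ that have already been established in \Cref{sec:directed}. First, I would note that $\varphi_{\LO}(x)$ is a genuine \FO{} node classifier: as observed immediately after its definition, strict linear orders over simple graphs are axiomatised by the conjunction $\psi$ of totality and transitivity (irreflexivity being automatic for simple graphs, which have no loops), and the sentence $\psi$ is turned into a classifier with one free variable via $(x=x)\land\psi$. So $\varphi_{\LO}(x)$ belongs to the class of \FO{} node classifiers.

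Second, I would invoke \Cref{LinInGNN}, which supplies an explicit ACR-GNN — in fact one using only three layers and no aggregation over out-neighbourhoods — expressing $\varphi_{\LO}(x)$ over directed graphs; hence $\varphi_{\LO}(x)$ is ACR-GNN-expressible. Third, I would invoke \Cref{thm:LonotC2}, which shows that $\varphi_{\LO}(x)$ is not expressible in \Ctwo{} over directed graphs. Combining these, $\varphi_{\LO}(x)$ is an \FO{} node classifier that is expressible by an ACR-GNN yet not by any \Ctwo{} formula, which is exactly the existential assertion of the corollary, with the ``in particular'' clause naming the witness explicitly.

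As for the main obstacle: there is essentially none at the level of the corollary itself — all the substantive content has already been discharged, namely the GNN construction in \Cref{LinInGNN} and the inexpressibility argument in \Cref{thm:LonotC2}, the latter resting on the bounded-\WL{} characterisation of \Cref{mainWLtheorem}. The corollary is a short bookkeeping step that records, for the open problem of \citet{DBLP:conf/iclr/BarceloKM0RS20}, the conclusion that the logical expressiveness of ACR-GNNs is not contained in that of \Ctwo{}.
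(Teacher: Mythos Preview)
Your proposal is correct and matches the paper's approach exactly: the corollary is stated immediately after \Cref{thm:LonotC2} without a separate proof, as it is just the combination of the \FO{}-definability of $\varphi_{\LO}(x)$, \Cref{LinInGNN}, and \Cref{thm:LonotC2}.
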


\section{Logical Expressiveness Over Undirected Graphs}\label{sec:undirected}

In this section, we consider the setting of undirected graphs.
We will
solve the open problem of \citet{DBLP:conf/iclr/BarceloKM0RS20}, asking whether over undirected graphs the \FO{} node properties expressible by ACR-GNNs are exactly those definable in $\Ctwo$.
We will show that, the answer is negative.
In particular, we will show that, similarly to the case of directed graphs  in \Cref{sec:directed}, there is a property expressible by both \FO{} and ACR-GNNs, but which cannot be expressed in \Ctwo.
Our proofs  will build on some ideas from  \Cref{sec:directed}, but no access to directed edges will require  more complex argumentation.


In place of $\vp_\LO(x)$ from \Cref{sec:directed}, we will use now classifier $\vp_\GLO(x)$.
It checks if a node belongs to a 
\emph{gadgetised linear order}, which is an undirected graph $\Gad(G)$ obtained by  encoding (gadgetising) some strict  linear order $G$.
Intuitively, $\Gad(G)$ is obtained by replacing each directed edge $(u,w)$ in $G$ with a path of three  undirected edges---called  \emph{gadgetised edges}---as depicted in \Cref{gadget}.
 Next, we present a formal definition of gadgetisation. 

\begin{figure}[t]
    \centering
\begin{adjustbox}{trim=0 0pt 0 0pt, clip}
\begin{tikzpicture}[
  dotnode/.style={circle, draw, minimum size=8pt, inner sep=0pt},
  fillerleft/.style={circle, draw, minimum size=7pt, inner sep=0pt},
  layerlabel/.style={anchor=east},
  node distance=1.5cm and 2.2cm
]

\begin{scope}
    \node[dotnode, fill=brown!30!yellow!60,  label=below:{\scriptsize $v_a^1$}] (n04) at (0,0) {\scriptsize $P_1$};
    \node[dotnode, fill=brown!30!yellow!60, label=below:{\scriptsize $v^1_b$}] (n05) [right=of n04] {\scriptsize $P_1$};
    \node[dotnode, fill=brown!30!yellow!60, label=below:{\scriptsize $v^1_c$}] (n06) [right=of n05] {\scriptsize $P_1$};
    \node[dotnode, fill=brown!30!yellow!60, label=below:{\scriptsize $v^1_d$}] (n07) [right=of n06] {\scriptsize $P_1$};

    \path (n04) -- (n05) coordinate[pos=0.33] (fL4);
    \path (n04) -- (n05) coordinate[pos=0.66] (fR4);
    \node[fillerleft, fill=red!60, label=below:{\scriptsize $v^2_{(a,b)}$}] (fL4) at (fL4) {\scriptsize $P_2$};
    \node[fillerleft, fill=cyan!60!blue, label=below:{\scriptsize $v^3_{(a,b)}$}] (fR4) at (fR4) {\scriptsize $P_3$};

    \path (n05) -- (n06) coordinate[pos=0.33] (fL5);
    \path (n05) -- (n06) coordinate[pos=0.66] (fR5);
    \node[fillerleft, fill=red!60, label=below:{\scriptsize $v^2_{(b,c)}$}] (fL5) at (fL5) {\scriptsize $P_2$};
    \node[fillerleft, fill=cyan!60!blue, label=below:{\scriptsize $v^3_{(b,c)}$}] (fR5) at (fR5) {\scriptsize $P_3$};
      
    \path (n06) -- (n07) coordinate[pos=0.33] (fL6);
    \path (n06) -- (n07) coordinate[pos=0.66] (fR6);
    \node[fillerleft, fill=red!60, label=below:{\scriptsize $v^2_{(c,d)}$}] (fL6) at (fL6) {\scriptsize $P_2$};
    \node[fillerleft, fill=cyan!60!blue, label=below:{\scriptsize $v^3_{(c,d)}$}] (fR6) at (fR6) {\scriptsize $P_3$};
      
    \foreach \i [evaluate=\i as \next using int(\i+1)] in {4,...,6} {
      \draw (n0\i) -- (fL\i) -- (fR\i) -- (n0\next);
    }  

    \node[fillerleft, fill=red!60, label=above:{\scriptsize $v^2_{(a,c)}$}] (cfL2) [above=0.05cm of fR4] {\scriptsize $P_2$};
    \node[fillerleft, fill=cyan!60!blue, label=above:{\scriptsize $v^3_{(a,c)}$}] (cfR2) [above=0.05cm of fL5] {\scriptsize $P_3$};
    \draw (n04) -- (cfL2) -- (cfR2) -- (n06);

    \node[fillerleft, fill=red!60, label=above:{\scriptsize $v^2_{(b,d)}$}] (cfL1) [above=0.05cm of fR5] {\scriptsize $P_2$};
    \node[fillerleft, fill=cyan!60!blue, label=above:{\scriptsize $v^3_{(b,d)}$}] (cfR1) [above=0.05cm of fL6] {\scriptsize $P_3$};
    \draw (n05) -- (cfL1) -- (cfR1) -- (n07);

    \node[fillerleft, fill=red!60, label=above:{\scriptsize $v^2_{(a,d)}$}] (cfL2a) [above=0.25cm of fL4] {\scriptsize $P_2$};
    \node[fillerleft, fill=cyan!60!blue, label=above:{\scriptsize $v^3_{(a,d)}$}] (cfR2a) [above=0.25cm of fR6] {\scriptsize $P_3$};
    \draw (n04) -- (cfL2a) -- (cfR2a) -- (n07);
\end{scope}

\end{tikzpicture}
\end{adjustbox}

\caption{Gadgetisation of the linear order from \Cref{fig::GNN_LO} assuming its nodes are called $a$, $b$, $c$, and $d$; labels $(1,0,0)$, $(0,1,0)$, and $(0,0,1)$ are represented as $P_1$, $P_2$, and $P_3$, respectively (and also with colours)}
\label{gadget}
\end{figure}
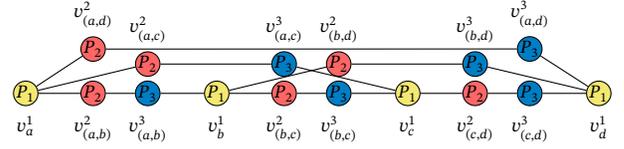

\begin{definition}
The \emph{gadgetisation}, $\Gad(G)$, of a directed graph  $G=(V,E, \lambda)$ is an undirected graph $G'=(V',E',\lambda')$ of dimension 3 such that for each edge $(u,w) \in E$, the graph $G'$ has:
\begin{itemize}
\item nodes $v_u^1$, $v_{(u,w)}^2$, $v_{(u,w)}^3$, $v_w^1$  in $V'$, 
\item edges  $\{v_u^1 ,v_{(u,w)}^2 \}$, $\{v_{(u,w)}^2,v_{(u,w)}^3 \}$, $\{ v_{(u,w)}^3, v_w^1 \}$  in $E'$,
\item  labelling of nodes with  $\lambda'(v_u^1) = \lambda'(v_w^1)  = (1,0,0)$,   $\lambda'(v_{(u,w)}^2) = (0,1,0)$, and
$\lambda'(v_{(u,w)}^3)  = (0,0,1)$.
\end{itemize}
\end{definition}
Recall that we identify undirected graphs with symmetric directed graphs, so an undirected edge, like $\{v_u^1 ,v_{(u,w)}^2 \}$ in the definition above,
can be seen as a pair of directed edges 
$(v_u^1 ,v_{(u,w)}^2 )$, $(v_{(u,w)}^2,v_u^1 )$.
Note also that our construction of $\Gad(G)$ does not depend on the labelling $\lambda$ in $G$.
Now, the formal definition of $\varphi_\GLO(x)$ is as follows:

\begin{definition}
We let $\varphi_\GLO(x)$ be a node classifier accepting a node of a  graph $G$ if and only if $G$ is isomorphic to $\Gad(G')$, for some  strict linear order $G'$.
\end{definition}
It the remaining part of this section,  we will show that $\varphi_\GLO(x)$ is expressible in \FO{} and by ACR-GNNs, but it is not expressible in \Ctwo.

\begin{restatable}{theorem}{FOgadget}\label{gadlinisfo}
Over undirected graphs, $\vp_\GLO(x)$ is expressible in $FO$.
\end{restatable}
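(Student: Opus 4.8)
The plan is to express the graph-level property ``$G$ is isomorphic to $\Gad(G')$ for some strict linear order $G'$'' by an $\FO$ sentence $\Psi$, and then put $\vp_\GLO(x) := (x=x)\land\Psi$, just as $\psi$ was turned into $\vp_\LO(x)$ in the previous section; since we are allowed full $\FO$ with arbitrarily many variables, this is quite comfortable. The underlying observation is that a dimension-$3$ graph is a gadgetised strict linear order exactly when (i) it has the right ``local gadget shape'' and (ii) the directed relation that this shape encodes on the nodes labelled $P_1$ is a strict linear order. Concretely, I would let $\Psi$ be the conjunction of: \textbf{(a)} every node satisfies exactly one of $P_1,P_2,P_3$; \textbf{(b)} there is no edge between two nodes with the same label; \textbf{(c)} every $P_2$-node has exactly one $P_1$-neighbour and exactly one $P_3$-neighbour (hence degree $2$ by (b)), and symmetrically every $P_3$-node has exactly one $P_1$-neighbour and exactly one $P_2$-neighbour; \textbf{(d)} two $P_2$-nodes with the same $P_1$-neighbour and whose $P_3$-neighbours have the same $P_1$-neighbour are equal; and \textbf{(e)} the relation $\chi$ on $P_1$-nodes given by
\[
\chi(x,y) := P_1(x)\land P_1(y)\land \exists z\,\exists z'\bigl(P_2(z)\land P_3(z')\land E(x,z)\land E(z,z')\land E(z',y)\bigr)
\]
is irreflexive, total, and transitive. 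Each of (a)--(e) is first-order (the counting phrases in (c) are the usual $\FO$-definable ``exactly one'', and (e) just spells out irreflexivity, totality and transitivity of $\chi$, using a handful of auxiliary variables once $\chi$ is unfolded).

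Next I would prove that $\Psi$ holds in $G$ if and only if $G\cong\Gad(G')$ for some strict linear order $G'$. For the ``only if'' direction (the substantive one), given $G\models\Psi$ one takes $G'$ to have node set the $P_1$-nodes of $G$ and edge relation $\{(u,w):\chi(u,w)\}$; by (e) this $G'$ is a strict linear order. Conditions (a)--(c) force every $P_2$- and $P_3$-node to lie on a path $x-z-z'-y$ with $P_1(x),P_2(z),P_3(z'),P_1(y)$, and by (c) the pair $(x,y)$ is a function of $z$ (equivalently, of $z'$), which by (d) is injective; this lets one define a map sending $v_u^1\mapsto u$ and $v_{(u,w)}^2,v_{(u,w)}^3$ to the $P_2$- and $P_3$-node of the unique such path from $u$ to $w$, and then check, using (a)--(d), that it is a label-preserving graph isomorphism from $\Gad(G')$ to $G$. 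The ``if'' direction is a routine verification that $\Gad(G')$ satisfies (a)--(e) for any strict linear order $G'$; the only points worth noting are that in $\Gad(G')$ one has $\chi(v_u^1,v_w^1)$ iff $(u,w)\in E'$, so $\chi$ restricted to the $P_1$-nodes is exactly $E'$, and that distinct edges of $G'$ yield distinct $P_2$-nodes, giving (d). One should also check that the degenerate cases behave as expected: a single isolated $P_1$-node is accepted (it is $\Gad$ of the one-element linear order), while, say, two isolated $P_1$-nodes are rejected because totality of $\chi$ fails.

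The main obstacle is getting conditions (d) and (e) exactly right, so that $\Psi$ is neither too permissive nor too restrictive: the local-shape conditions (a)--(c) on their own also accept graphs containing two ``parallel'' gadgets between the same ordered pair of $P_1$-nodes, or a gadget path from a $P_1$-node back to itself, or a family of gadgets whose encoded relation is total and irreflexive but not transitive; and the reconstruction in the ``only if'' direction fails without (d) (the map on $P_2$- and $P_3$-nodes would not be injective) and without transitivity in (e) (the recovered $G'$ would not be a \emph{linear} order). Checking that (a)--(e) jointly force the reconstructed $\Gad(G')$ to really be isomorphic to $G$ --- rather than merely to share its local structure --- is the part that needs care, although each individual verification is elementary.
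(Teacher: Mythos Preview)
Your proposal is correct and follows the same overall plan as the paper: first pin down the local gadget shape, then assert that the directed relation this shape encodes on the $P_1$-nodes is a strict linear order, and finally verify the two directions by reconstructing $G'$ and checking an explicit isomorphism. Your local conditions (a)--(c) match the paper's $\vp_1,\vp_2$ essentially verbatim. The difference lies in how the order on $P_1$ is axiomatised. You state the full linear-order axioms for $\chi$ directly (irreflexivity, totality, transitivity) in (e), and add a separate uniqueness clause (d) ruling out parallel gadgets between the same ordered pair; the paper instead bundles existence, direction, and uniqueness into a single formula $\vp_3$ (``between any two distinct $P_1$-nodes there is exactly one gadgetised edge, in exactly one direction'') and replaces transitivity by a formula $\vp_4$ forbidding directed $3$-cycles of gadgetised edges, then recovers transitivity via the standard observation that a tournament without $3$-cycles is transitive. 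Your route is a little more direct, since it avoids that tournament argument; the paper's choice of $\vp_4$ is slightly more parsimonious with variables. The isomorphism verification in the ``only if'' direction is the same in both cases, and your remark that (d) is exactly what makes the map on $P_2$- and $P_3$-nodes injective is precisely the role the paper's ``exactly one'' in $\vp_3$ plays.
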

\begin{proof}[Proof sketch] 
We will express $\vp_\GLO(x)$ as a conjunction of four $\FO$ formulas $\vp_1$, $\vp_2$, $\vp_3$, and $\vp_4$.
Recall that we identify graphs with $\FO$ structures interpreting unary predicates $P_1, \dots , P_d$, where $d$ is the dimension of the graph, and one binary predicate $E$.
Since gadgetisations are always of dimension $d=3$, our formulas will mention three unary predicated $P_1$, $P_2$, and $P_3$.

Formula $\vp_1$ states that 
$P_1$, $P_2$, and $P_3$
partition the set of nodes.
Formula $\vp_2$ states that every node satisfying $P_2$ has exactly two $E$-neighbours: one satisfying $P_1$ and the other satisfying $P_3$.
It states also that every node satisfying $P_3$ has exactly two $E$-neighbours: one satisfying $P_1$ and the other satisfying $P_2$.
Finally, it states that 
if $u$ and $v$ are nodes satisfying $P_1$, then $E(u,v)$ cannot be true.
Formulas $\vp_3$ and $\vp_4$ are about \emph{gadgetised edges}, which are paths in $\Gad(G)$ that correspond to directed edges in $G$.
In particular, we let a gadgetised edge from $u$ to $z$ be a path of the form $E(u,w)$, $E(w,v)$, $E(v,z)$ with
$P_1(u)$, $P_2(w)$, $P_3(v)$, and $P_1(z)$.
Formula $\vp_3$ states that between any two distinct nodes satisfying $P_1$ there is exactly one gadgetised edge.
Formula $\vp_4$, in turn, states that there are no nodes $u,w,v$ with gadgetised edges from $v$ to $w$, from $w$ to $u$, and from $u$ to $v$.

All formulas $\vp_1$--$\vp_4$ can be written in $\FO$, and we can show that a graph satisfies all of them if and only if the graph is a gadgetised linear order.
\end{proof}

In \Cref{gadlinisfo} we have showed how to  express $\vp_{\GLO}(x)$ with 
\FO{} formulas $\vp_1$--$\vp_4$.
We observe that $\vp_1$ and $\vp_2$ are in $\Ctwo$, so by the result of 
\citet{DBLP:conf/iclr/BarceloKM0RS20}, we can express them with ACR-GNNs. 
However  $\vp_3$ and $\vp_4$ cannot be expressed by ACR-GNNs.
However, as will show, $\vp_3$ and $\vp_4$ can be replaced with a property that  is expressible by  ACR-GNNs.
This will show that $\vp_{\GLO}(x)$ is expressible by ACR-GNNs.

\begin{restatable}{theorem}{GLOisGNN}\label{gloisacrgnn}
Over undirected graphs, 
$\vp_{\GLO}(x)$ is expressible by an ACR-GNN. 
\end{restatable}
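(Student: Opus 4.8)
The plan is to follow the recipe of \Cref{sec:directed}: retain the two $\Ctwo$ conjuncts $\vp_1,\vp_2$ of \Cref{gadlinisfo} — expressible by ACR-GNNs thanks to \citet{DBLP:conf/iclr/BarceloKM0RS20} — and replace the two non-local conjuncts $\vp_3,\vp_4$ by a \emph{counting} condition modelled on \Cref{linear}. Whenever $\vp_1\wedge\vp_2$ holds in a graph $G$, the gadgetised edges induce a directed graph $D_G$ on the set of $P_1$-nodes, with $u\rightarrow z$ for every path $u,w,v,z$ satisfying $P_1(u),P_2(w),P_3(v),P_1(z)$; moreover $\vp_2$ forces every $P_2$-node to have a unique $P_1$- and a unique $P_3$-neighbour (and symmetrically for $P_3$-nodes), so each $P_2$-neighbour of a $P_1$-node $u$ contributes exactly one gadgetised edge leaving $u$: if $m_u$ denotes the number of $P_2$-neighbours of $u$, then $u$ has exactly $m_u$ outgoing gadgetised edges. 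We aim to recognise ``$G\cong\Gad(G')$ for a strict linear order $G'$'' from a counting property of the $m_u$.

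\emph{Combinatorial core.} For $G$ satisfying $\vp_1\wedge\vp_2$ I claim the following are equivalent: (a) $G\cong\Gad(G')$ for some strict linear order $G'$; (b) the $m_u$ are pairwise distinct over the $P_1$-nodes, and for each $P_1$-node $u$ the multiset formed by taking one copy of $m_z$ for every gadgetised edge $u\rightarrow z$ equals $\{0,1,\dots,m_u-1\}$. Implication (a)$\Rightarrow$(b) follows from \Cref{linear} applied to $G'\cong D_G$, together with the fact that in a strict linear order the successors of an element with $k$ successors have successor-counts $0,\dots,k-1$. For (b)$\Rightarrow$(a): the second clause equates an $m_u$-element multiset with an $m_u$-element set, so this multiset is a set avoiding $m_u$; hence no two gadgetised edges out of $u$ reach the same node and none is a loop, and if $m_u=k$ then $u$ reaches $k$ further $P_1$-nodes, whence $m_u\le N-1$ with $N$ the number of $P_1$-nodes. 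Together with distinctness this forces $\{m_u\}=\{0,\dots,N-1\}$; enumerating the $P_1$-nodes as $u_0,\dots,u_{N-1}$ with $m_{u_j}=j$, a strong induction exactly as in the proof of \Cref{linear} yields $D_G=\{(u_i,u_j):j<i\}$, a strict linear order — and since $D_G$ has no doubled edges or loops, $G\cong\Gad(D_G)$.

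\emph{Construction of the ACR-GNN.} It remains to realise ``$\vp_1\wedge\vp_2$ and (b)'' by an ACR-GNN. Using \citet{DBLP:conf/iclr/BarceloKM0RS20}, we run in parallel coordinates an ACR-GNN deciding $\vp_1\wedge\vp_2$ and let a single readout broadcast a bit recording failure of $\vp_1\wedge\vp_2$; if this bit is raised the classifier outputs $\false$ at every node. Otherwise: one layer has each node compute $10$ raised to the power of its number of $P_2$-neighbours (aggregate the $P_2$-indicator over neighbours, then combine by exponentiating), so a $P_1$-node $u$ then holds $10^{m_u}$; three further layers push these values backwards along gadgetised edges $z\to v\to w\to u$ — at a $P_3$-node copy the value of its unique $P_1$-neighbour, at a $P_2$-node copy the value of its unique $P_3$-neighbour, and at a $P_1$-node add up the values of all its $P_2$-neighbours — the labels $P_1,P_2,P_3$ letting the combination functions pick the right contribution, while each $P_1$-node keeps $10^{m_u}$ frozen in a separate coordinate. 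After these layers each $P_1$-node $u$ holds $(10^{m_u},\ \sum_{u\rightarrow z}10^{m_z})$, and a final readout checks, exactly as in the proof of \Cref{LinInGNN}, that (i) the first coordinates are pairwise distinct over the $P_1$-nodes, and (ii) every $P_1$-node's second coordinate equals $\underbrace{1\ldots1}_{m_u}$ (equivalently $\tfrac{10^{m_u}-1}{9}$). Clause (i) is distinctness of the $m_u$, and clause (ii) is the multiset condition of (b), since $m_u$ powers of $10$ summing to $\underbrace{1\ldots1}_{m_u}$ must be $10^{0},\dots,10^{m_u-1}$. The classifier returns $\true$ precisely when (i), (ii) hold and the failure bit is down, which by the combinatorial core equals $\vp_\GLO(x)$.

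\emph{Main obstacle.} The subtle point is the ACR-GNN rather than the combinatorics: ``propagation along gadgets'' is only meaningful when all $P_2$- and $P_3$-nodes have exactly their prescribed neighbours, which need not hold on an arbitrary input graph. We circumvent this by having the network first certify $\vp_1\wedge\vp_2$ globally through the broadcast bit and return $\false$ on failure, so the degree-counting layers only have to be correct on genuine gadget structures; what is left is routine bookkeeping — fixing the layer count, arranging the parallel coordinates, and threading the failure bit through to the output.
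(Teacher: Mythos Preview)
Your proof is correct and follows essentially the same strategy as the paper: reduce $\vp_\GLO$ to $\vp_1\wedge\vp_2$ together with a counting condition on the $P_2$-degrees $m_u$, handle $\vp_1\wedge\vp_2$ via \citet{DBLP:conf/iclr/BarceloKM0RS20}, and realise the counting condition by propagating powers of $10$ along the three-step gadget paths before a global readout check. The only notable difference is cosmetic: you accumulate with \emph{sum} and test $\sum 10^{m_z}=\tfrac{10^{m_u}-1}{9}$ (mirroring \Cref{LinInGNN} exactly), whereas the paper accumulates with bitwise \emph{OR} and tests the bit pattern --- your version relies on the small digit-sum observation that $m_u$ powers of $10$ summing to a number of digit sum $m_u$ admit no carries, which the OR version sidesteps, but both are valid.
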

\begin{proof}[Proof sketch]
We can show that a graph $G$ is a gadgetised linear  order if and only if $G$ satisfies $\vp_1$, $\vp_2$ (see the proof of \Cref{gadlinisfo}) and a property $\psi$ explained next.
Property $\psi$ states that for all $i<j<|P_1|$, the graph has  nodes $v_i$ and $v_j$
such that 
(1) both $v_i$ and $v_j$  satisfy $P_1$,
(2) $v_i$ has $i$ neighbours satisfying $P_2$ and $v_j$ has $j$ such neighbours, and
(3) there is a gadgetised edge (see the proof of \Cref{gadlinisfo}) from $v_j$ to $v_i$.
Since $\vp_1$ and $\vp_2$ are $\Ctwo$ formulas, they can be expressed by ACR-GNNs \cite[Theorem 5.1]{DBLP:conf/iclr/BarceloKM0RS20}.
It remains to construct an ACR-GNN $\N$ which expresses $\psi$, since
it is straightforward to combine the three ACR-GNNs into a single GNN. 

Recall that  gadgetised linear orders are graphs of dimension three, so we will consider application of $\N$ to such graphs $G$.
In each layer,  $\N$ will assign to nodes vectors of dimesion five, where the first three positions are always as in the input graph $G$, so  information about  $P_1$, $P_2$, and $P_3$ in the input graph is preserved across all layers. 
The fourth and fifth positions will always keep binary numbers.
The details of $\N$ are provided next and and example of its application is visualised in \Cref{GNNforGLO}.

\begin{figure}[ht]
    \centering
\begin{adjustbox}{trim=0 3pt 0 0pt, clip}
\begin{tikzpicture}[
  dotnode/.style={circle, draw, minimum size=8pt, inner sep=0pt},
  fillerleft/.style={circle, draw, minimum size=7pt, inner sep=0pt},
  layerlabel/.style={anchor=east},
  node distance=1.5cm and 2.2cm
]

\begin{scope}[yshift=-2cm]
    \node[dotnode, fill=brown!30!yellow!60,  label=below:{\scriptsize $(\mathbf{1000},0)$}] (n04) at (0,0) {\scriptsize $P_1$};
    \node[layerlabel] at ([xshift=0.2cm, yshift=0.7cm]n04) {L1:};
    \node[dotnode, fill=brown!30!yellow!60, label=below:{\scriptsize $(\mathbf{100},0)$}] (n05) [right=of n04] {\scriptsize $P_1$};
    \node[dotnode, fill=brown!30!yellow!60, label=below:{\scriptsize $(\mathbf{10},0)$}] (n06) [right=of n05] {\scriptsize $P_1$};
    \node[dotnode, fill=brown!30!yellow!60, label=below:{\scriptsize $(\mathbf{1},0)$}] (n07) [right=of n06] {\scriptsize $P_1$};

    \path (n04) -- (n05) coordinate[pos=0.33] (fL4);
    \path (n04) -- (n05) coordinate[pos=0.66] (fR4);
    \node[fillerleft, fill=red!60, label=below:{\scriptsize $(0,0)$}] (fL4) at (fL4) {\scriptsize $P_2$};
    \node[fillerleft, fill=cyan!60!blue, label=below:{\scriptsize $(0,0)$}] (fR4) at (fR4) {\scriptsize $P_3$};

    \path (n05) -- (n06) coordinate[pos=0.33] (fL5);
    \path (n05) -- (n06) coordinate[pos=0.66] (fR5);
    \node[fillerleft, fill=red!60, label=below:{\scriptsize $(0,0)$}] (fL5) at (fL5) {\scriptsize $P_2$};
    \node[fillerleft, fill=cyan!60!blue, label=below:{\scriptsize $(0,0)$}] (fR5) at (fR5) {\scriptsize $P_3$};
      
    \path (n06) -- (n07) coordinate[pos=0.33] (fL6);
    \path (n06) -- (n07) coordinate[pos=0.66] (fR6);
    \node[fillerleft, fill=red!60, label=below:{\scriptsize $(0,0)$}] (fL6) at (fL6) {\scriptsize $P_2$};
    \node[fillerleft, fill=cyan!60!blue, label=below:{\scriptsize $(0,0)$}] (fR6) at (fR6) {\scriptsize $P_3$};
      
    \foreach \i [evaluate=\i as \next using int(\i+1)] in {4,...,6} {
      \draw (n0\i) -- (fL\i) -- (fR\i) -- (n0\next);
    }  

    \node[fillerleft, fill=red!60, label=above:{\scriptsize $(0,0)$}] (cfL2) [above=0.05cm of fR4] {\scriptsize $P_2$};
    \node[fillerleft, fill=cyan!60!blue, label=above:{\scriptsize $(0,0)$}] (cfR2) [above=0.05cm of fL5] {\scriptsize $P_3$};
    \draw (n04) -- (cfL2) -- (cfR2) -- (n06);

    \node[fillerleft, fill=red!60, label=above:{\scriptsize $(0,0)$}] (cfL1) [above=0.05cm of fR5] {\scriptsize $P_2$};
    \node[fillerleft, fill=cyan!60!blue, label=above:{\scriptsize $(0,0)$}] (cfR1) [above=0.05cm of fL6] {\scriptsize $P_3$};
    \draw (n05) -- (cfL1) -- (cfR1) -- (n07);

    \node[fillerleft, fill=red!60, label=above:{\scriptsize $(0,0)$}] (cfL2a) [above=0.25cm of fL4] {\scriptsize $P_2$};
    \node[fillerleft, fill=cyan!60!blue, label=above:{\scriptsize $(0,0)$}] (cfR2a) [above=0.25cm of fR6] {\scriptsize $P_3$};
    \draw (n04) -- (cfL2a) -- (cfR2a) -- (n07);
\end{scope}

\begin{scope}[yshift=-4cm]
    \node[dotnode, fill=brown!30!yellow!60,  label=below:{\scriptsize $(1000,0)$}] (n04) at (0,0) {\scriptsize $P_1$};
    \node[layerlabel] at ([xshift=0.2cm, yshift=0.7cm]n04) {L2:};
    \node[dotnode, fill=brown!30!yellow!60, label=below:{\scriptsize $(100,0)$}] (n05) [right=of n04] {\scriptsize $P_1$};
    \node[dotnode, fill=brown!30!yellow!60, label=below:{\scriptsize $(10,0)$}] (n06) [right=of n05] {\scriptsize $P_1$};
    \node[dotnode, fill=brown!30!yellow!60, label=below:{\scriptsize $(1,0)$}] (n07) [right=of n06] {\scriptsize $P_1$};

    \path (n04) -- (n05) coordinate[pos=0.33] (fL4);
    \path (n04) -- (n05) coordinate[pos=0.66] (fR4);
    \node[fillerleft, fill=red!60, label=below:{\scriptsize $(0,0)$}] (fL4) at (fL4) {\scriptsize $P_2$};
    \node[fillerleft, fill=cyan!60!blue, label=below:{\scriptsize $(\mathbf{100},0)$}] (fR4) at (fR4) {\scriptsize $P_3$};

    \path (n05) -- (n06) coordinate[pos=0.33] (fL5);
    \path (n05) -- (n06) coordinate[pos=0.66] (fR5);
    \node[fillerleft, fill=red!60, label=below:{\scriptsize $(0,0)$}] (fL5) at (fL5) {\scriptsize $P_2$};
    \node[fillerleft, fill=cyan!60!blue, label=below:{\scriptsize $(\mathbf{10},0)$}] (fR5) at (fR5) {\scriptsize $P_3$};
      
    \path (n06) -- (n07) coordinate[pos=0.33] (fL6);
    \path (n06) -- (n07) coordinate[pos=0.66] (fR6);
    \node[fillerleft, fill=red!60, label=below:{\scriptsize $(0,0)$}] (fL6) at (fL6) {\scriptsize $P_2$};
    \node[fillerleft, fill=cyan!60!blue, label=below:{\scriptsize $(\mathbf{1},0)$}] (fR6) at (fR6) {\scriptsize $P_3$};
      
    \foreach \i [evaluate=\i as \next using int(\i+1)] in {4,...,6} {
      \draw (n0\i) -- (fL\i) -- (fR\i) -- (n0\next);
    }  

    \node[fillerleft, fill=red!60, label=above:{\scriptsize $(0,0)$}] (cfL2) [above=0.05cm of fR4] {\scriptsize $P_2$};
    \node[fillerleft, fill=cyan!60!blue, label=above:{\scriptsize $(\mathbf{10},0)$}] (cfR2) [above=0.05cm of fL5] {\scriptsize $P_3$};
    \draw (n04) -- (cfL2) -- (cfR2) -- (n06);

    \node[fillerleft, fill=red!60, label=above:{\scriptsize $(0,0)$}] (cfL1) [above=0.05cm of fR5] {\scriptsize $P_2$};
    \node[fillerleft, fill=cyan!60!blue, label=above:{\scriptsize $(\mathbf{1},0)$}] (cfR1) [above=0.05cm of fL6] {\scriptsize $P_3$};
    \draw (n05) -- (cfL1) -- (cfR1) -- (n07);

    \node[fillerleft, fill=red!60, label=above:{\scriptsize $(0,0)$}] (cfL2a) [above=0.25cm of fL4] {\scriptsize $P_2$};
    \node[fillerleft, fill=cyan!60!blue, label=above:{\scriptsize $(\mathbf{1},0)$}] (cfR2a) [above=0.25cm of fR6] {\scriptsize $P_3$};
    \draw (n04) -- (cfL2a) -- (cfR2a) -- (n07);
\end{scope}

\begin{scope}[yshift=-6cm]
    \node[dotnode, fill=brown!30!yellow!60,  label=below:{\scriptsize \hspace{-3mm} $(1000,0)$}] (n04) at (0,0) {\scriptsize $P_1$};
    \node[layerlabel] at ([xshift=0.2cm, yshift=0.7cm]n04) {L3:};
    \node[dotnode, fill=brown!30!yellow!60, label=below:{\scriptsize $(100,0)$}] (n05) [right=of n04] {\scriptsize $P_1$};
    \node[dotnode, fill=brown!30!yellow!60, label=below:{\scriptsize $(10,0)$}] (n06) [right=of n05] {\scriptsize $P_1$};
    \node[dotnode, fill=brown!30!yellow!60, label=below:{\scriptsize $(1,0)$}] (n07) [right=of n06] {\scriptsize $P_1$};

    \path (n04) -- (n05) coordinate[pos=0.33] (fL4);
    \path (n04) -- (n05) coordinate[pos=0.66] (fR4);
    \node[fillerleft, fill=red!60, label=below:{\scriptsize\hspace{-2mm}  $(\mathbf{100},0)$}] (fL4) at (fL4) {\scriptsize $P_2$};
    \node[fillerleft, fill=cyan!60!blue, label=below:{\scriptsize $(100,0)$}] (fR4) at (fR4) {\scriptsize $P_3$};

    \path (n05) -- (n06) coordinate[pos=0.33] (fL5);
    \path (n05) -- (n06) coordinate[pos=0.66] (fR5);
    \node[fillerleft, fill=red!60, label=below:{\scriptsize $(\mathbf{10},0)$}] (fL5) at (fL5) {\scriptsize $P_2$};
    \node[fillerleft, fill=cyan!60!blue, label=below:{\scriptsize $(10,0)$}] (fR5) at (fR5) {\scriptsize $P_3$};
      
    \path (n06) -- (n07) coordinate[pos=0.33] (fL6);
    \path (n06) -- (n07) coordinate[pos=0.66] (fR6);
    \node[fillerleft, fill=red!60, label=below:{\scriptsize $(\mathbf{1},0)$}] (fL6) at (fL6) {\scriptsize $P_2$};
    \node[fillerleft, fill=cyan!60!blue, label=below:{\scriptsize $(1,0)$}] (fR6) at (fR6) {\scriptsize $P_3$};
      
    \foreach \i [evaluate=\i as \next using int(\i+1)] in {4,...,6} {
      \draw (n0\i) -- (fL\i) -- (fR\i) -- (n0\next);
    }  

    \node[fillerleft, fill=red!60, label=above:{\scriptsize $(\mathbf{10},0)$}] (cfL2) [above=0.05cm of fR4] {\scriptsize $P_2$};
    \node[fillerleft, fill=cyan!60!blue, label=above:{\scriptsize $(10,0)$}] (cfR2) [above=0.05cm of fL5] {\scriptsize $P_3$};
    \draw (n04) -- (cfL2) -- (cfR2) -- (n06);

    \node[fillerleft, fill=red!60, label=above:{\scriptsize $(\mathbf{1},0)$}] (cfL1) [above=0.05cm of fR5] {\scriptsize $P_2$};
    \node[fillerleft, fill=cyan!60!blue, label=above:{\scriptsize $(1,0)$}] (cfR1) [above=0.05cm of fL6] {\scriptsize $P_3$};
    \draw (n05) -- (cfL1) -- (cfR1) -- (n07);

    \node[fillerleft, fill=red!60, label=above:{\scriptsize $(\mathbf{1},0)$}] (cfL2a) [above=0.25cm of fL4] {\scriptsize $P_2$};
    \node[fillerleft, fill=cyan!60!blue, label=above:{\scriptsize $(1,0)$}] (cfR2a) [above=0.25cm of fR6] {\scriptsize $P_3$};
    \draw (n04) -- (cfL2a) -- (cfR2a) -- (n07);
\end{scope}

\begin{scope}[yshift=-8cm]
    \node[dotnode, fill=brown!30!yellow!60,  label=below:{\scriptsize\hspace{-1mm} $(1000,\textbf{111})$}] (n04) at (0,0) {\scriptsize $P_1$};
    \node[layerlabel] at ([xshift=0.2cm, yshift=0.7cm]n04) {L4:};
    \node[dotnode, fill=brown!30!yellow!60, label=below:{\scriptsize \hspace{1mm} $(100,\textbf{11})$}] (n05) [right=of n04] {\scriptsize $P_1$};
    \node[dotnode, fill=brown!30!yellow!60, label=below:{\scriptsize $(10,\textbf{1})$}] (n06) [right=of n05] {\scriptsize $P_1$};
    \node[dotnode, fill=brown!30!yellow!60, label=below:{\scriptsize $(1,\textbf{0})$}] (n07) [right=of n06] {\scriptsize $P_1$};

    \path (n04) -- (n05) coordinate[pos=0.33] (fL4);
    \path (n04) -- (n05) coordinate[pos=0.66] (fR4);
    \node[fillerleft, fill=red!60, label=below:{\scriptsize \hspace{1mm} $(100,0)$}] (fL4) at (fL4) {\scriptsize $P_2$};
    \node[fillerleft, fill=cyan!60!blue, label=below:{\scriptsize\hspace{2mm} $(100,0)$}] (fR4) at (fR4) {\scriptsize $P_3$};

    \path (n05) -- (n06) coordinate[pos=0.33] (fL5);
    \path (n05) -- (n06) coordinate[pos=0.66] (fR5);
    \node[fillerleft, fill=red!60, label=below:{\scriptsize $(10,0)$}] (fL5) at (fL5) {\scriptsize $P_2$};
    \node[fillerleft, fill=cyan!60!blue, label=below:{\scriptsize $(10,0)$}] (fR5) at (fR5) {\scriptsize $P_3$};
      
    \path (n06) -- (n07) coordinate[pos=0.33] (fL6);
    \path (n06) -- (n07) coordinate[pos=0.66] (fR6);
    \node[fillerleft, fill=red!60, label=below:{\scriptsize $(1,0)$}] (fL6) at (fL6) {\scriptsize $P_2$};
    \node[fillerleft, fill=cyan!60!blue, label=below:{\scriptsize $(1,0)$}] (fR6) at (fR6) {\scriptsize $P_3$};
      
    \foreach \i [evaluate=\i as \next using int(\i+1)] in {4,...,6} {
      \draw (n0\i) -- (fL\i) -- (fR\i) -- (n0\next);
    }  

    \node[fillerleft, fill=red!60, label=above:{\scriptsize $(10,0)$}] (cfL2) [above=0.05cm of fR4] {\scriptsize $P_2$};
    \node[fillerleft, fill=cyan!60!blue, label=above:{\scriptsize $(10,0)$}] (cfR2) [above=0.05cm of fL5] {\scriptsize $P_3$};
    \draw (n04) -- (cfL2) -- (cfR2) -- (n06);

    \node[fillerleft, fill=red!60, label=above:{\scriptsize $(1,0)$}] (cfL1) [above=0.05cm of fR5] {\scriptsize $P_2$};
    \node[fillerleft, fill=cyan!60!blue, label=above:{\scriptsize $(1,0)$}] (cfR1) [above=0.05cm of fL6] {\scriptsize $P_3$};
    \draw (n05) -- (cfL1) -- (cfR1) -- (n07);

    \node[fillerleft, fill=red!60, label=above:{\scriptsize $(1,0)$}] (cfL2a) [above=0.25cm of fL4] {\scriptsize $P_2$};
    \node[fillerleft, fill=cyan!60!blue, label=above:{\scriptsize $(1,0)$}] (cfR2a) [above=0.25cm of fR6] {\scriptsize $P_3$};
    \draw (n04) -- (cfL2a) -- (cfR2a) -- (n07);
\end{scope}

\begin{scope}[yshift=-10cm]
    \node[dotnode, fill=brown!30!yellow!60,  label=below:{\scriptsize $(\textbf{1})$}] (n04) at (0,0) {\scriptsize $P_1$};
    \node[layerlabel] at ([xshift=0.2cm, yshift=0.7cm]n04) {L5:};
    \node[dotnode, fill=brown!30!yellow!60, label=below:{\scriptsize $(\textbf{1})$}] (n05) [right=of n04] {\scriptsize $P_1$};
    \node[dotnode, fill=brown!30!yellow!60, label=below:{\scriptsize $(\textbf{1})$}] (n06) [right=of n05] {\scriptsize $P_1$};
    \node[dotnode, fill=brown!30!yellow!60, label=below:{\scriptsize $(\textbf{1})$}] (n07) [right=of n06] {\scriptsize $P_1$};

    \path (n04) -- (n05) coordinate[pos=0.33] (fL4);
    \path (n04) -- (n05) coordinate[pos=0.66] (fR4);
    \node[fillerleft, fill=red!60, label=below:{\scriptsize $(\textbf{1})$}] (fL4) at (fL4) {\scriptsize $P_2$};
    \node[fillerleft, fill=cyan!60!blue, label=below:{\scriptsize $(\textbf{1})$}] (fR4) at (fR4) {\scriptsize $P_3$};

    \path (n05) -- (n06) coordinate[pos=0.33] (fL5);
    \path (n05) -- (n06) coordinate[pos=0.66] (fR5);
    \node[fillerleft, fill=red!60, label=below:{\scriptsize $(\textbf{1})$}] (fL5) at (fL5) {\scriptsize $P_2$};
    \node[fillerleft, fill=cyan!60!blue, label=below:{\scriptsize $(\textbf{1})$}] (fR5) at (fR5) {\scriptsize $P_3$};
      
    \path (n06) -- (n07) coordinate[pos=0.33] (fL6);
    \path (n06) -- (n07) coordinate[pos=0.66] (fR6);
    \node[fillerleft, fill=red!60, label=below:{\scriptsize $(\textbf{1})$}] (fL6) at (fL6) {\scriptsize $P_2$};
    \node[fillerleft, fill=cyan!60!blue, label=below:{\scriptsize $(\textbf{1})$}] (fR6) at (fR6) {\scriptsize $P_3$};
      
    \foreach \i [evaluate=\i as \next using int(\i+1)] in {4,...,6} {
      \draw (n0\i) -- (fL\i) -- (fR\i) -- (n0\next);
    }  

    \node[fillerleft, fill=red!60, label=above:{\scriptsize $(\textbf{1})$}] (cfL2) [above=0.05cm of fR4] {\scriptsize $P_2$};
    \node[fillerleft, fill=cyan!60!blue, label=above:{\scriptsize $(\textbf{1})$}] (cfR2) [above=0.05cm of fL5] {\scriptsize $P_3$};
    \draw (n04) -- (cfL2) -- (cfR2) -- (n06);

    \node[fillerleft, fill=red!60, label=above:{\scriptsize $(\textbf{1})$}] (cfL1) [above=0.05cm of fR5] {\scriptsize $P_2$};
    \node[fillerleft, fill=cyan!60!blue, label=above:{\scriptsize $(\textbf{1})$}] (cfR1) [above=0.05cm of fL6] {\scriptsize $P_3$};
    \draw (n05) -- (cfL1) -- (cfR1) -- (n07);

    \node[fillerleft, fill=red!60, label=above:{\scriptsize $(\textbf{1})$}] (cfL2a) [above=0.25cm of fL4] {\scriptsize $P_2$};
    \node[fillerleft, fill=cyan!60!blue, label=above:{\scriptsize $(\textbf{1})$}] (cfR2a) [above=0.25cm of fR6] {\scriptsize $P_3$};
    \draw (n04) -- (cfL2a) -- (cfR2a) -- (n07);
\end{scope}

\end{tikzpicture}
\end{adjustbox}
\caption{Application of the ACR-GNN from Theorem \ref{gloisacrgnn} to
the graph from \Cref{gadget}; we present only the fourth and fifth components of vectors, and write in bold values updated in a given layer}
\label{GNNforGLO}
\end{figure}
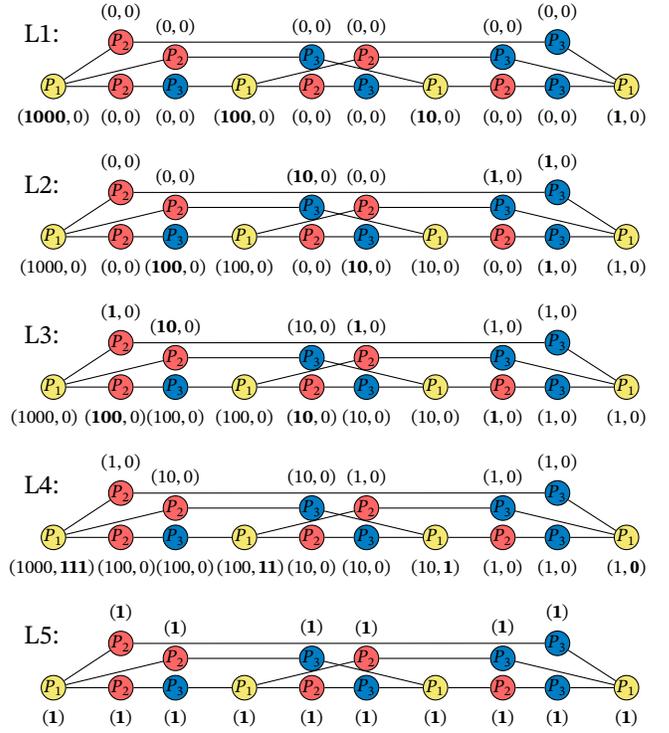

The first layer assigns to the fourth position of nodes $v$ satisfying $P_1$ the number $10^n$,  where $n$ is the number of 
neighbours of $v$ satisfying $P_2$.
Fourth and fifth positions of other nodes are set to $0$.
The next three layers will compute bitwise $OR$ applied to binary numbers,  for example  $OR(100, 10, 10) = 110$.
The second layer
assigns to the fourth position of nodes $v$ satisfying $P_3$ the value of $OR$ over the fourth positions of $v$ neighbours satisfying satisfy $P_1$.
The third layer
assigns to the fourth position of nodes $v$ satisfying $P_2$ the value of $OR$ over the fourth positions of $v$ neighbours satisfying $P_3$.
The fourth layer
assigns to the fifth position of nodes $v$ satisfying $P_1$ the value of $OR$ over the fourth positions of $v$ neighbours satisfying $P_2$.
Finally, the fifth 
layer uses a global readout to assign 1 to each node if 
for all $i<j<|P_1|$ there exists a node whose fourth position of the vector is $10^j$ and the fifth position of the vector has $1$ as the $i$th bit from the right (when counting from 0).



The  first four layers can be implemented without readout functions.
The fifth layer, in contrast, requires using readout, but no aggregation.
To show that the construction is correct, we
can show that in layer 4, each node $v$ satisfying $P_1$ has on the fourth position of its vector  $10^j$, where $j$ is the number of $v$ neighbours satisfying $P_2$.
On the fifth position $v$ has a binary number, whose $i$th bit is $1$ if there is a gadgetised edge from $v$ to some node with $i$ neighbours satisfying $P_2$.
Therefore, the fifth layer assigns 1 to all nodes if the graphs satisfies $\psi$, and otherwise it assigns $0$ to all nodes.
\end{proof}

To finish this section, it remains to show that gadgetised linear orders are not expressible in \Ctwo{}.
To this end, we will again use bounded WL from \Cref{sec:WL}, as it is applicable to both directed and undirected graphs.

\begin{restatable}{theorem}{GLonotC}\label{thm:GLonotC2}
Over undirected graphs,
the classifier $\varphi_{\GLO{}}(x)$ is not expressible in $\Ctwo$.
\end{restatable}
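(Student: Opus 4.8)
The plan is to run the argument of \Cref{thm:LonotC2} through the gadgetisation, once more reducing inexpressibility in $\Ctwo$ to an indistinguishability statement for $\WL_c$ via \Cref{mainWLtheorem}. Suppose towards a contradiction that $\varphi_\GLO(x)$ is expressible in $\Ctwo$; since every $\Ctwo$ formula has some finite quantifier depth and counting rank, it is already expressible in $\Ctwo_{\ell,c}$ for some $\ell,c\in\mathbb{N}$. Let $G$ be the strict linear order on nodes $v_{-n},\dots,v_n$ and let $G'$ be obtained from it by reversing the central edge $(v_{-1},v_1)$, exactly as in the proof of \Cref{thm:LonotC2}, except that $n$ is now taken of order $\Theta(\ell c)$ (a concrete value works; the constant is immaterial). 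Set $H=\Gad(G)$ and $H'=\Gad(G')$, and fix the obvious node bijection between them: the identity on every node of an unchanged gadget, together with $v^2_{(v_{-1},v_1)}\mapsto v^2_{(v_1,v_{-1})}$ and $v^3_{(v_{-1},v_1)}\mapsto v^3_{(v_1,v_{-1})}$.

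First I would dispatch the two ``easy'' directions. By construction $H$ is a gadgetised linear order, so $H\models\varphi_\GLO(w)$ for every node $w$ of $H$. For $H'$ it suffices that $\Gad$ is injective up to isomorphism on simple loopless digraphs without isolated vertices: in $\Gad(M)$ the $P_1$-nodes biject with $V(M)$, and $(u,w)\in E(M)$ holds if and only if $\Gad(M)$ has a path $v_u^1, a, b, v_w^1$ with $a\in P_2$ and $b\in P_3$, and such a path is forced once it exists, because a $P_2$-node has a unique $P_1$-neighbour (its source) and a unique $P_3$-neighbour, while a $P_3$-node has a unique $P_1$-neighbour (its target). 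Hence $\Gad(G')\cong\Gad(L)$ for a strict linear order $L$ would force $G'\cong L$, contradicting that $G'$ is not transitive; so $H'\not\models\varphi_\GLO(w')$ for every node $w'$.

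It then remains to show $W^\ell_c(w)=W^\ell_c(w')$ for every node $w$ of $H$ and its image $w'$ in $H'$, since \Cref{mainWLtheorem} then gives $H,w\equiv_{\Ctwo_{\ell,c}}H',w'$, contradicting that the $\Ctwo_{\ell,c}$ classifier $\varphi_\GLO(x)$ separates them. I would prove this by a simultaneous induction on $m\le\ell$ maintaining two invariants: \textbf{(a)} corresponding nodes of $H$ and $H'$ have equal $W^m_c$-colour, and \textbf{(b)} \emph{middle-band uniformity} --- for a band $B_m=\{-(n-\rho m),\dots,n-\rho m\}$ with some rate $\rho=\Theta(c)$, all $P_1$-nodes $v_i^1$ with $i\in B_m$ share one $W^m_c$-colour, all $P_2$-nodes $v^2_{(v_i,v_j)}$ with $i,j\in B_m$ share one colour, and likewise for $P_3$-nodes. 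The mechanics are as follows. Since a gadget is a length-$3$ path, one $\WL_c$-round propagates positional information only across a single gadget-hop, and the $P_2$- and $P_3$-nodes acquire it in phases (first the colour of their $P_1$-source, then of their $P_1$-target); in the update of a $P_1$-node $v_i^1$ all incident $P_2$-gadget-starts share the source $v_i$ and hence, after a constant number of rounds, share a colour, so $v_i^1$ effectively only reads $c$-bounded counts and $c$-bounded multisets of colours of its out- and in-gadgets, just as a node does in the directed $\WL_c$ of \Cref{thm:LonotC2} --- which is why $\rho$ can be taken $\Theta(c)$. Under the reversal the only changes are that the $P_2$- and $P_3$-neighbour counts of $v_{-1}^1$ and $v_1^1$ shift between values that are all $\ge c$, hence invisible to $c$-bounding, and that $v_{-1}^1$ trades one $P_2$-neighbour for one $P_3$-neighbour (and $v_1^1$ the reverse), where the traded node carries the ``middle'' colour, which already occurs at least $c$ times among the unchanged neighbours on the relevant side; hence every $c$-bounded multiset entering the update is unchanged. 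Any node outside the $\rho\ell$-neighbourhood of the centre never sees the reversal at all. Feeding these observations into the induction yields (a) and (b) for all $m\le\ell$, and in particular $W^\ell_c(w)=W^\ell_c(w')$.

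The principal obstacle, as in \Cref{thm:LonotC2} but more delicate here, is the bookkeeping for invariant (b): one must fix $\rho$ and the quantitative meaning of ``deep in $B_m$'' so that every $c$-bounded multiset a middle-band node reads --- over its $P_2$-neighbours, its $P_3$-neighbours, and its non-neighbours --- is provably independent both of the node's exact position and of the reversal. The subtlest point is the non-neighbour component, which is \emph{not} globally constant (an extremal node such as $v_n^1$ is the unique node of its colour); the resolution is that such scarce colours are carried only by boundary nodes, that every middle-band node of a fixed type is non-adjacent to exactly the same set of boundary nodes, and that between $H$ and $H'$ the non-neighbour sets differ only in a bounded number of nodes carrying abundant ``middle'' colours, so the $c$-bounded multisets coincide. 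With (b) secured, the absorption argument for $v_{\pm1}^1$ goes through and the induction closes.
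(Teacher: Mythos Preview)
Your proposal is correct and follows essentially the same route as the paper: gadgetise the directed-case counterexamples $G,G'$ from \Cref{thm:LonotC2} and prove $\WL_c$-indistinguishability by simultaneous induction carrying a middle-band uniformity invariant (your (b) is exactly the paper's statements (ii)+(iii), with shrink rate $\rho=c$ and $n=\ell c+1$). The only cosmetic differences are that the paper shows $H'\not\models\varphi_\GLO$ by exhibiting a forbidden $9$-cycle rather than via injectivity of $\Gad$, and it handles the delicate nodes $v^1_{\pm1}$ by proving the band invariant separately in $H$ and in $H'$ and then transferring through the unaffected node $v^1_0$, rather than by your direct absorption of the swapped neighbour into an already-saturated $c$-bounded multiset.
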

\begin{proof}[Proof sketch]
The proof is  similar to the one for \Cref{thm:LonotC2},
namely we suppose towards a contradiction that 
$\varphi_{\GLO}(x)$ is expressible  by a $C^2_{\ell, c}$ formula, for some $\ell,c \in \mathbb{N}$.
In the proof of \Cref{thm:LonotC2} we have obtained contradiction by 
applying $WL_c$ to  directed graphs $G$ and $G'$.
Now, we will apply $WL_c$ to their  gadgetisations $H=\Gad(G)$ and $H'=\Gad(G')$.
Since $G$ is a strict linear order, but $G'$ is not, we obtain that $H$ is a gadgetised linear order, but $H'$ is not.
Hence,  by \Cref{mainWLtheorem},
it remains to show that 
$W^\ell_c$ outputs the same colourings on $H$ and $H'$.
The proof is similar as in \Cref{thm:LonotC2}.  
Colourings obtained by applying  $W^\ell_c$ to $H$ are presented in \Cref{fig::WL_LOGLO};
application of $W^\ell_c$ to $H'$ results in the exactly same colourings.
\end{proof}





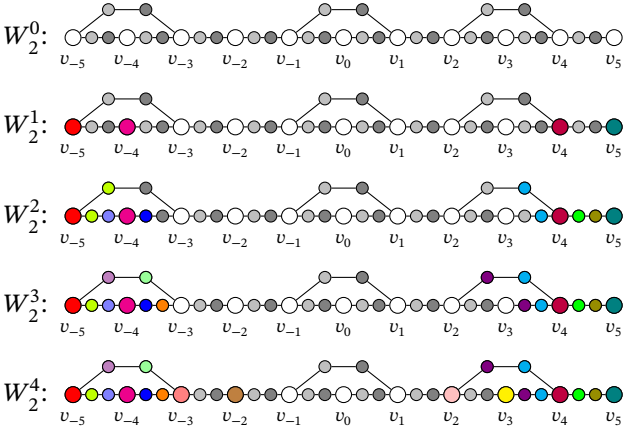
\begin{figure}[t] 
    \centering 
\begin{tikzpicture}[
  dotnode/.style={circle, draw, minimum size=6pt, inner sep=0pt,fill=white},
  smalldotnode/.style={circle, draw, minimum size=4.5pt, inner sep=0pt, fill=black!25},
  squaresmalldotnode/.style={circle, draw, minimum size=4.5pt, inner sep=0pt, fill=black!50},
  layerlabel/.style={anchor=east},
  >={Stealth},
  node distance=0.5cm and 0.05cm,
]

\node[layerlabel] (l0) at (-1.8,0) {$W^0_2$:};
\node[layerlabel] (l1) [below=of l0] {$W^1_2$:};
\node[layerlabel] (l2) [below=of l1] {$W^2_2$:};
\node[layerlabel] (l3) [below=of l2] {$W^3_2$:};
\node[layerlabel] (l4) [below=of l3] {$W^4_2$:};

\node[dotnode, label=below:{\scriptsize  $v_{-5}$}] (n00) [right=0.1cm of l0] {};
\foreach \i [evaluate=\i as \prev using int(\i-1),
             evaluate=\i as \labelval using int((\i - 15)/3),
             evaluate=\i as \smallcount using int((\i - 1) - floor((\i - 1)/3))] in {1,...,30} {
  \ifnum\i=1
    \node[smalldotnode] (n0\i) [right=of n00] {};
  \else
    \pgfmathparse{mod(\i,3)==0}
    \ifnum\pgfmathresult=1
      \node[dotnode, label=below:{\scriptsize $v_{\labelval}$}] (n0\i) [right=of n0\prev] {};
    \else
      \pgfmathparse{mod(\smallcount,2)==0}
      \ifnum\pgfmathresult=1
        \node[smalldotnode] (n0\i) [right=of n0\prev] {};
      \else
        \node[squaresmalldotnode] (n0\i) [right=of n0\prev] {};
      \fi
    \fi
  \fi
}
\node[smalldotnode] (m1) [above=0.2cm of n014] {};
\node[squaresmalldotnode] (m2) [above=0.2cm of n016] {};
\draw (m1) -- (m2);
\draw (m1) -- (n012);
\draw (n018) -- (m2);

\node[smalldotnode] (p11) [above=0.2cm of n02] {};
\node[squaresmalldotnode] (p12) [above=0.2cm of n04] {};
\draw (p11) -- (p12);
\draw (p11) -- (n00);
\draw (n06) -- (p12);

\node[smalldotnode] (q11) [above=0.2cm of n023] {};
\node[squaresmalldotnode] (q12) [above=0.2cm of n025] {};
\draw (q11) -- (q12);
\draw (q11) -- (n021);
\draw (n027) -- (q12);

\foreach \i [evaluate=\i as \prev using int(\i-1)] in {1,...,30} {
  \draw (n0\prev) -- (n0\i);
}

\node[dotnode, label=below:{\scriptsize  $v_{-5}$}] (n10) [right=0.1cm of l1] {};
\foreach \i [evaluate=\i as \prev using int(\i-1),
             evaluate=\i as \labelval using int((\i - 15)/3),
             evaluate=\i as \smallcount using int((\i - 1) - floor((\i - 1)/3))] in {1,...,30} {
  \ifnum\i=1
    \node[smalldotnode] (n1\i) [right=of n10] {};
  \else
    \pgfmathparse{mod(\i,3)==0}
    \ifnum\pgfmathresult=1
      \node[dotnode, label=below:{\scriptsize $v_{\labelval}$}] (n1\i) [right=of n1\prev] {};
    \else
      \pgfmathparse{mod(\smallcount,2)==0}
      \ifnum\pgfmathresult=1
        \node[smalldotnode] (n1\i) [right=of n1\prev] {};
      \else
        \node[squaresmalldotnode] (n1\i) [right=of n1\prev] {};
      \fi
    \fi
  \fi
}
\node[smalldotnode] (m11) [above=0.2cm of n114] {};
\node[squaresmalldotnode] (m12) [above=0.2cm of n116] {};
\draw (m11) -- (m12);
\draw (m11) -- (n112);
\draw (n118) -- (m12);

\node[smalldotnode] (p11) [above=0.2cm of n12] {};
\node[squaresmalldotnode] (p12) [above=0.2cm of n14] {};
\draw (p11) -- (p12);
\draw (p11) -- (n10);
\draw (n16) -- (p12);

\node[smalldotnode] (q11) [above=0.2cm of n123] {};
\node[squaresmalldotnode] (q12) [above=0.2cm of n125] {};
\draw (q11) -- (q12);
\draw (q11) -- (n121);
\draw (n127) -- (q12);

\foreach \i [evaluate=\i as \prev using int(\i-1)] in {1,...,30} {
  \draw (n1\prev) -- (n1\i);
}

\node[dotnode, fill=red] at (n10) {};
\node[dotnode, fill=magenta] at (n13) {};

\node[dotnode, fill=purple] at (n127) {};
\node[dotnode, fill=teal] at (n130) {};

\node[dotnode, label=below:{\scriptsize  $v_{-5}$}] (n20) [right=0.1cm of l2] {};
\foreach \i [evaluate=\i as \prev using int(\i-1),
             evaluate=\i as \labelval using int((\i - 15)/3),
             evaluate=\i as \smallcount using int((\i - 1) - floor((\i - 1)/3))] in {1,...,30} {
  \ifnum\i=1
    \node[smalldotnode] (n2\i) [right=of n20] {};
  \else
    \pgfmathparse{mod(\i,3)==0}
    \ifnum\pgfmathresult=1
      \node[dotnode, label=below:{\scriptsize $v_{\labelval}$}] (n2\i) [right=of n2\prev] {};
    \else
      \pgfmathparse{mod(\smallcount,2)==0}
      \ifnum\pgfmathresult=1
        \node[smalldotnode] (n2\i) [right=of n2\prev] {};
      \else
        \node[squaresmalldotnode] (n2\i) [right=of n2\prev] {};
      \fi
    \fi
  \fi
}
\node[smalldotnode] (m21) [above=0.2cm of n214] {};
\node[squaresmalldotnode] (m22) [above=0.2cm of n216] {};
\draw (m21) -- (m22);
\draw (m21) -- (n212);
\draw (n218) -- (m22);

\node[smalldotnode] (p11) [above=0.2cm of n22] {};
\node[squaresmalldotnode] (p12) [above=0.2cm of n24] {};
\draw (p11) -- (p12);
\draw (p11) -- (n20);
\draw (n26) -- (p12);

\node[smalldotnode] (q11) [above=0.2cm of n223] {};
\node[squaresmalldotnode] (q12) [above=0.2cm of n225] {};
\draw (q11) -- (q12);
\draw (q11) -- (n221);
\draw (n227) -- (q12);

\node[squaresmalldotnode, fill=cyan] at (q12) {};
\node[squaresmalldotnode, fill=lime] at (p11) {};

\foreach \i [evaluate=\i as \prev using int(\i-1)] in {1,...,30} {
  \draw (n2\prev) -- (n2\i);
}

\node[dotnode, fill=red] at (n20) {};
\node[smalldotnode, fill=lime] at (n21) {};
\node[squaresmalldotnode, fill=blue!50] at (n22) {};
\node[dotnode, fill=magenta] at (n23) {};
\node[smalldotnode, fill=blue] at (n24) {};

\node[squaresmalldotnode, fill=cyan] at (n226) {};
\node[dotnode, fill=purple] at (n227) {};
\node[smalldotnode, fill=green] at (n228) {};
\node[squaresmalldotnode, fill=olive] at (n229) {};
\node[dotnode, fill=teal] at (n230) {};

\node[dotnode, label=below:{\scriptsize  $v_{-5}$}] (n30) [right=0.1cm of l3] {};
\foreach \i [evaluate=\i as \prev using int(\i-1),
             evaluate=\i as \labelval using int((\i - 15)/3),
             evaluate=\i as \smallcount using int((\i - 1) - floor((\i - 1)/3))] in {1,...,30} {
  \ifnum\i=1
    \node[smalldotnode] (n3\i) [right=of n30] {};
  \else
    \pgfmathparse{mod(\i,3)==0}
    \ifnum\pgfmathresult=1
      \node[dotnode, label=below:{\scriptsize $v_{\labelval}$}] (n3\i) [right=of n3\prev] {};
    \else
      \pgfmathparse{mod(\smallcount,2)==0}
      \ifnum\pgfmathresult=1
        \node[smalldotnode] (n3\i) [right=of n3\prev] {};
      \else
        \node[squaresmalldotnode] (n3\i) [right=of n3\prev] {};
      \fi
    \fi
  \fi
}
\node[smalldotnode] (m31) [above=0.2cm of n314] {};
\node[squaresmalldotnode] (m32) [above=0.2cm of n316] {};
\draw (m31) -- (m32);
\draw (m31) -- (n312);
\draw (n318) -- (m32);

\node[smalldotnode] (p11) [above=0.2cm of n32] {};
\node[squaresmalldotnode] (p12) [above=0.2cm of n34] {};
\draw (p11) -- (p12);
\draw (p11) -- (n30);
\draw (n36) -- (p12);

\node[smalldotnode] (q11) [above=0.2cm of n323] {};
\node[squaresmalldotnode] (q12) [above=0.2cm of n325] {};
\draw (q11) -- (q12);
\draw (q11) -- (n321);
\draw (n327) -- (q12);

\node[squaresmalldotnode, fill=cyan] at (q12) {};
\node[squaresmalldotnode, fill=violet!50] at (p11) {};

\node[squaresmalldotnode, fill=violet] at (q11) {};
\node[squaresmalldotnode, fill=green!40] at (p12) {};

\foreach \i [evaluate=\i as \prev using int(\i-1)] in {1,...,30} {
  \draw (n3\prev) -- (n3\i);
}

\node[dotnode, fill=red] at (n30) {};
\node[smalldotnode, fill=lime] at (n31) {};
\node[squaresmalldotnode, fill=blue!50] at (n32) {};
\node[dotnode, fill=magenta] at (n33) {};
\node[smalldotnode, fill=blue] at (n34) {};
\node[squaresmalldotnode, fill=orange] at (n35) {};

\node[smalldotnode, fill=violet] at (n325) {};
\node[squaresmalldotnode, fill=cyan] at (n326) {};
\node[dotnode, fill=purple] at (n327) {};
\node[smalldotnode, fill=green] at (n328) {};
\node[squaresmalldotnode, fill=olive] at (n329) {};
\node[dotnode, fill=teal] at (n330) {};

\node[dotnode, label=below:{\scriptsize  $v_{-5}$}] (n40) [right=0.1cm of l4] {};
\foreach \i [evaluate=\i as \prev using int(\i-1),
             evaluate=\i as \labelval using int((\i - 15)/3),
             evaluate=\i as \smallcount using int((\i - 1) - floor((\i - 1)/3))] in {1,...,30} {
  \ifnum\i=1
    \node[smalldotnode] (n4\i) [right=of n40] {};
  \else
    \pgfmathparse{mod(\i,3)==0}
    \ifnum\pgfmathresult=1
      \node[dotnode, label=below:{\scriptsize $v_{\labelval}$}] (n4\i) [right=of n4\prev] {};
    \else
      \pgfmathparse{mod(\smallcount,2)==0}
      \ifnum\pgfmathresult=1
        \node[smalldotnode] (n4\i) [right=of n4\prev] {};
      \else
        \node[squaresmalldotnode] (n4\i) [right=of n4\prev] {};
      \fi
    \fi
  \fi
}
\node[smalldotnode] (m41) [above=0.2cm of n414] {};
\node[squaresmalldotnode] (m42) [above=0.2cm of n416] {};
\draw (m41) -- (m42);
\draw (m41) -- (n412);
\draw (n418) -- (m42);

\node[smalldotnode] (p11) [above=0.2cm of n42] {};
\node[squaresmalldotnode] (p12) [above=0.2cm of n44] {};
\draw (p11) -- (p12);
\draw (p11) -- (n40);
\draw (n46) -- (p12);

\node[smalldotnode] (q11) [above=0.2cm of n423] {};
\node[squaresmalldotnode] (q12) [above=0.2cm of n425] {};
\draw (q11) -- (q12);
\draw (q11) -- (n421);
\draw (n427) -- (q12);

\node[squaresmalldotnode, fill=cyan] at (q12) {};
\node[squaresmalldotnode, fill=violet!50] at (p11) {};

\node[squaresmalldotnode, fill=violet] at (q11) {};
\node[squaresmalldotnode, fill=green!40] at (p12) {};

\foreach \i [evaluate=\i as \prev using int(\i-1)] in {1,...,30} {
  \draw (n4\prev) -- (n4\i);
}

\node[dotnode, fill=red] at (n40) {};
\node[smalldotnode, fill=lime] at (n41) {};
\node[squaresmalldotnode, fill=blue!50] at (n42) {};
\node[dotnode, fill=magenta] at (n43) {};
\node[smalldotnode, fill=blue] at (n44) {};
\node[squaresmalldotnode, fill=orange] at (n45) {};
\node[dotnode, fill=red!50] at (n46) {};
\node[dotnode, fill=brown] at (n49) {};

\node[dotnode, fill=pink] at (n421) {};

\node[dotnode, fill=yellow] at (n424) {};
\node[smalldotnode, fill=violet] at (n425) {};
\node[squaresmalldotnode, fill=cyan] at (n426) {};
\node[dotnode, fill=purple] at (n427) {};
\node[smalldotnode, fill=green] at (n428) {};
\node[squaresmalldotnode, fill=olive] at (n429) {};
\node[dotnode, fill=teal] at (n430) {};

\end{tikzpicture}
    \caption{Application of $\WL_2$ to $H=\Gad(G)$, for  $G$ from \Cref{thm:LonotC2}; for  readability we draw  only 
    gadgetised edges corresponding to $v_i,v_{i+1}$ in $G$, as well as to edges $(v_{-5}, v_{-3})$, $(v_{-1}, v_{-1})$, and $(v_{2}, v_{4})$, which helps to  understand better the colourings}
    \label{fig::WL_LOGLO}
\end{figure}

By combining \Cref{gadlinisfo,gloisacrgnn}, 
we  obtain  a solution to the open problem of \citet{DBLP:conf/iclr/BarceloKM0RS20}.

\begin{corollary}
Over undirected graphs,
there are \FO{} node classifiers expressible by ACR-GNNs which are not expressible in $\Ctwo$.
In particular, $\varphi_{\GLO}(x)$ is such a classifier.
\end{corollary}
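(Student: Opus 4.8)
Because the corollary is a direct consequence of the three properties of $\vp_{\GLO}(x)$ already established, the plan is simply to assemble them and record what they yield. By \Cref{gadlinisfo}, $\vp_{\GLO}(x)$ is an \FO{} node classifier; by \Cref{gloisacrgnn}, it is computed by some ACR-GNN; and by \Cref{thm:GLonotC2}, no \Ctwo{} formula defines it. Together these are exactly the content of the corollary, with $\vp_{\GLO}(x)$ as the witness. Combining this with the result of \citet{DBLP:conf/iclr/BarceloKM0RS20} that every \Ctwo{} node classifier is ACR-GNN-expressible, we moreover obtain that over undirected graphs the \FO{} classifiers expressible by ACR-GNNs form a \emph{strict} superset of those definable in \Ctwo, which is the negative answer to the open problem.

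Since the corollary itself is just this combination, the part of the argument that really needs planning is the non-trivial ingredient \Cref{thm:GLonotC2}, which I would obtain by mirroring the strategy of \Cref{thm:LonotC2}. Assume for contradiction that $\vp_{\GLO}(x)$ is defined by a $\Ctwo_{\ell,c}$ formula. Put $n = \ell c + 1$, let $G$ be the strict linear order on $v_{-n},\dots,v_n$, and let $G'$ be obtained from $G$ by reversing the single edge between $v_{-1}$ and $v_1$; then form the undirected gadgetisations $H = \Gad(G)$ and $H' = \Gad(G')$. Since $G$ is a strict linear order and $G'$ is not, $H$ is a gadgetised linear order and $H'$ is not, so by \Cref{mainWLtheorem} it suffices to show that $\ell$ rounds of $\WL_c$ produce identical colourings on $H$ and $H'$. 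I would prove this by a simultaneous induction on the round number $k \le \ell$ with two invariants: (a) every node of $H$ carries the same $\WL_c$ colour as its counterpart in $H'$; and (b) all ``central'' original nodes $v_i$ — those whose index lies in a band whose width is controlled by $n = \ell c + 1$ and which shrinks by $c$ at each round — together with the gadget vertices incident to them, share a single colour. Invariant (b) supports (a): because $\WL_c$ caps multiplicities at $c$ and the reversal is localized near position $0$, after $\ell$ rounds the $c$-bounded neighbour multisets around $v_{-1}$, $v_1$, and their gadget vertices agree in $H$ and $H'$, so the two graphs never get separated.

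The main obstacle is carrying invariant (b) through the three-edge gadgets rather than through single directed edges as in \Cref{thm:LonotC2}: one has to check that a $P_2$-vertex and a $P_3$-vertex sitting on a \emph{reversed} gadgetised edge acquire exactly the same $c$-bounded label-multiset signature as those on a forward one — so that the difference between $v_{-1}$ having one more/fewer $P_2$-neighbour than $v'_{-1}$ is washed out by $c$-bounding — and that the additional chord gadgets for non-consecutive pairs $v_i,v_j$ do not leak the orientation into the capped counts. Equivalently, the delicate point is the bookkeeping for the central band: choosing $n$ large enough, and accounting for the slower, multi-step propagation of colours along gadgetised edges, so that the band genuinely survives all $\ell$ refinement rounds. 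Once invariant (b) is in place, \Cref{mainWLtheorem} converts $\WL_c$-indistinguishability of $H$ and $H'$ into $\Ctwo_{\ell,c}$-indistinguishability, contradicting the assumed defining formula; this proves \Cref{thm:GLonotC2}, and with \Cref{gadlinisfo,gloisacrgnn} it gives the corollary.
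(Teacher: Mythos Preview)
Your proposal is correct and follows essentially the same route as the paper: the corollary itself is obtained by combining \Cref{gadlinisfo}, \Cref{gloisacrgnn}, and \Cref{thm:GLonotC2}, and your sketch for the ingredient \Cref{thm:GLonotC2}---gadgetise the graphs $G$ and $G'$ from \Cref{thm:LonotC2}, then run a simultaneous induction on $\WL_c$ rounds with a ``central band'' invariant that shrinks by $c$ each round---matches the paper's argument (the paper's proof sketch also takes $H=\Gad(G)$ and $H'=\Gad(G')$, and the appendix spells out invariants (i)--(iii) of exactly the type you describe).
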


The above result,
shows that ACR-GNNs can express \FO{} node classifiers beyond $\Ctwo$.
Consequently, we establish that the converse of the result of~\citet[Theorem~5.1]{DBLP:conf/iclr/BarceloKM0RS20}
does not hold.
As we show in the following short section, our results have interesting implications beyond the expressive power of  GNNs, 
contributing to a better understanding of the expressiveness of logics.


\section{Impact on the Expressiveness of Logics}\label{sec:impact}

It turns out that our results can be used to show an interesting relation between the expressive power of finitary and infinitary logics.
To formulate this result, let us use $\LCtwo$ for an extension of $\Ctwo$ which allows for infinitary conjunctions and disjunctions.
Notice that the expressive power of $\LCtwo$ is not only beyond $\Ctwo$, but also beyond the whole $\FO$.
For example $\LCtwo$ allows us to express parity of a graph size using the infinite formula:
$$
\exists_{=2} x (x=x) \lor 
\exists_{=4} x (x=x) \lor
\exists_{=6} x (x=x) \lor  \dots
$$
which is well-known to be inexpressible in \FO{}---it can be  shown by a standard application of  Ehrenfeucht--Fraïssé games \cite{DBLP:books/sp/Libkin04}.

This naturally leads us to the question: what are the \FO{} properties expressible in $\LCtwo$?
It maybe tempting to  assume that those are exactly the properties expressible in $\Ctwo$.
In other words, that the (semantical) intersection of $\LCtwo$ and $\FO$ is exactly $\Ctwo$.
As we show next, it is not true.

\begin{theorem}
There are strictly more $\FO$ properties expressible in $\LCtwo$ than the properties expressible in $\Ctwo$.
This result holds both  over directed and undirected graphs.
\end{theorem}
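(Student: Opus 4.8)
The plan is to reduce this purely logical statement to the GNN results already established in the paper. The key observation is that the $\Ctwo_{\ell,c}$-equivalence captured by $\WL_c$ (\Cref{mainWLtheorem}) is precisely the tool needed: two graphs indistinguishable by all $\Ctwo_{\ell,c}$ formulas for every fixed $\ell$ and $c$ are also indistinguishable by $\LCtwo$, because an infinitary $\Ctwo$ formula, although it may have infinite conjunctions/disjunctions, still has bounded quantifier depth and bounded counting rank (these are syntactic parameters that do not grow under infinitary Boolean combination). So the properties separating ACR-GNNs from $\Ctwo$ in \Cref{thm:LonotC2,thm:GLonotC2} are in fact also not expressible in $\LCtwo$.

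\textbf{Directed case.} I would argue that $\vp_\LO(x)$ is an $\FO$ property (shown before \Cref{LinInGNN}) that is \emph{not} expressible in $\LCtwo$. Suppose for contradiction $\vp_\LO(x)$ were defined by some $\LCtwo$ formula $\chi(x)$; let $\ell$ be its quantifier depth and $c$ its counting rank — both finite since each is the supremum of finitely many nesting levels / finitely many subscripts appearing in the (possibly infinite) formula, and these suprema are finite by the standard inductive definition of these parameters for infinitary formulas of bounded rank. Now reuse exactly the graphs $G,G'$ from the proof of \Cref{thm:LonotC2} built from this very $\ell,c$: we have $W^\ell_c(v_i)=W^\ell_c(v_i')$ for all $i$, hence $G,v_i\equiv_{\Ctwo_{\ell,c}}G',v_i'$. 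A short lemma — provable by induction on the structure of $\LCtwo$ formulas of depth $\le\ell$ and counting rank $\le c$, with the infinitary Boolean connectives handled trivially since equivalence is preserved under arbitrary conjunctions and disjunctions — shows $G,v_i\equiv_{\Ctwo_{\ell,c}}G',v_i'$ implies $G\models\chi(v_i)\iff G'\models\chi(v_i')$. But $G\models\vp_\LO(v_i)$ and $G'\not\models\vp_\LO(v_i')$, a contradiction. Combined with the fact that $\Ctwo\subseteq\LCtwo$ trivially, and that $\LCtwo$ already expresses non-$\FO$ properties (the parity example, or indeed $\vp_\LO$ is in $\FO\setminus\Ctwo$ yet $\FO\subseteq$... no — rather: every $\Ctwo$-property is an $\FO$-property expressible in $\LCtwo$, and there is an $\FO$-property, e.g.\ any one separating ACR-GNNs from $\Ctwo$ that \emph{is} in $\LCtwo$), we get the strict inclusion once we also exhibit \emph{one} $\FO$ property that \emph{is} in $\LCtwo$ but not $\Ctwo$. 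Actually the cleanest route: the witnessing $\FO$-property should be chosen to lie in $\LCtwo\setminus\Ctwo$, not in $\FO\setminus\LCtwo$ — so I would instead produce an $\FO$ node classifier that is expressible in $\LCtwo$ (indeed, I expect something like "the graph is a strict linear order" to be expressible as a countable disjunction over all finite linear orders up to isomorphism — each such finite order is describable up to iso in $\Ctwo_{\ell,c}$ for suitable parameters, and the disjunction is infinitary) but, by \Cref{thm:LonotC2}, not in $\Ctwo$. The undirected case is identical with $\vp_\GLO(x)$ in place of $\vp_\LO(x)$, invoking \Cref{gadlinisfo} and \Cref{thm:GLonotC2}.

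\textbf{The main obstacle} will be making precise that $\vp_\LO$ (resp.\ $\vp_\GLO$) really is expressible in $\LCtwo$. One must write it as an infinitary disjunction $\bigvee_{n\ge 1}\psi_n(x)$ where $\psi_n(x)$ says "$x=x$ and the ambient graph is (isomorphic to) the strict linear order on $n$ elements (resp.\ its gadgetisation)". The content here is that each single finite graph is characterised up to isomorphism, among all finite graphs, by a single $\Ctwo$ sentence — this is the classical fact underlying the Cai–Fürer–Immerman correspondence (a finite structure's $\Ctwo$-type of sufficiently high depth and rank pins it down up to $\WL$-equivalence, and for linear orders $\WL$-equivalence is isomorphism since all nodes get distinct colours after one round). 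One must take a tiny bit of care that this is a $\Ctwo$ \emph{sentence} relativised to a free variable $x$ via $(x=x)\land(\cdot)$, and that the infinite disjunction over $n$ is a legitimate $\LCtwo$ formula. Everything else — the depth/rank boundedness of infinitary $\Ctwo$ formulas, the preservation of $\equiv_{\Ctwo_{\ell,c}}$ under infinitary connectives, and the reuse of the graph families from \Cref{thm:LonotC2,thm:GLonotC2} — is routine.
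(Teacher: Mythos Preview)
Your opening move contains a genuine error. You claim that any $\LCtwo$ formula has finite quantifier depth and finite counting rank because these are ``suprema of finitely many nesting levels / finitely many subscripts.'' This is false: an infinitary disjunction $\bigvee_{i\in\mathbb{N}}\varphi_i$ ranges over \emph{infinitely} many subformulas, and the suprema of their depths and ranks may well be infinite. The paper's own parity example $\exists_{=2}x(x{=}x)\lor\exists_{=4}x(x{=}x)\lor\cdots$ already has unbounded counting rank. Consequently the attempted argument that $\varphi_\LO\notin\LCtwo$ collapses --- and in any case it was aimed at the wrong target, since the theorem requires exhibiting a property \emph{inside} $\LCtwo\cap\FO$ but outside $\Ctwo$, not one outside $\LCtwo$.

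You do catch this mid-paragraph and pivot to the right objective: show $\varphi_\LO$ (resp.\ $\varphi_\GLO$) lies in $\LCtwo$, then invoke \Cref{thm:LonotC2} (resp.\ \Cref{thm:GLonotC2}) for the $\Ctwo$-inexpressibility. Your route for the $\LCtwo$-expressibility --- write $\varphi_\LO$ as $\bigvee_{n\ge 1}\psi_n$ where $\psi_n$ is a $\Ctwo$ sentence pinning down the $n$-element linear order up to isomorphism --- does work, since linear orders (and their gadgetisations) are WL-identified, hence $\Ctwo$-characterisable among all finite graphs. The paper takes a more direct and explicit path: rather than characterising each finite order individually, it reuses \Cref{linear} and writes a single uniform $\LCtwo$ formula
\[
\bigwedge_{i\in\mathbb{N}}\forall x\forall y\big(\exists_{=i}y\,E(x,y)\land\exists_{=i}x\,E(y,x)\to x=y\big)
\]
expressing ``all out-degrees are distinct'' (combined with the $\Ctwo$-expressible irreflexivity and totality); analogously it writes out the property $\psi$ from \Cref{gloisacrgnn} for the undirected case. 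Your approach is more abstract and buys generality (it would apply to any $\FO$-definable, WL-identified class), while the paper's is concrete and avoids appealing to the WL-identification folklore.
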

\begin{proof}[Proof sketch]
Clearly each $\Ctwo$ property can be expressed in both $\FO$ and in $\LCtwo$.
Thus, it suffices to show properties which disprove  the opposite implication. For this, we can show that both $\vp_\LO(x)$ and $\vp_{\GLO}(x)$ are expressible in $\LCtwo$.
Indeed, by the results obtained in the paper it suffices to show that the third condition from \Cref{linear} can be expressed in $\LCtwo$ over directed graphs as 
$$
\bigwedge_{i\in \mathbb{N}} 
\forall x \forall y 
\left( 
\exists_{=i} y E(x,y)
\land
\exists_{=i} x E(y,x) 
 \to x = y \right)
$$
whereas $\psi$ from \Cref{gloisacrgnn} is expressed in $\LCtwo$   over undirected graphs as
$$
\begin{aligned}
&\bigwedge_{i\in \mathbb{N}}  \; \bigwedge_{  j \in \mathbb{N} : i<j} \Bigg[
  \exists_{j+1} x P_1(x) 
  \to  
 \exists x    \Bigg( \exists_{=j}y ( P_2(y)\land E(x,y))
  \\ 
  & \land P_1(x)\land \exists y \bigg( P_2(y) \wedge E(x,y) 
  \wedge 
  \exists x \Big( P_3(x) \wedge E(y,x) 
  \\ 
  & \wedge  \exists y \big(P_1(y) \wedge E(x,y) 
  \wedge 
  \exists_{=i} x (P_2(x) \wedge E(y,x)) \big) 
  \Big) 
  \bigg)  
  \Bigg) 
  \Bigg].
\end{aligned}
$$
Note that both formulas    rely on infinite conjunctions. 
\end{proof}

\section{Conclusions}

In this paper, we have solved the open problem  asking whether \FO{} classifiers expressible by aggregate-combine-readout GNNs are exactly the classifiers expressible in logic \Ctwo{} \cite{DBLP:conf/iclr/BarceloKM0RS20}.
As we show, the answer is negative. In particular, over both directed and undirected graphs, \FO{} classifiers expressible by ACR-GNNs have a strictly higher expressive power than \Ctwo{}.
Recall that the distinguishing power of AC-GNNs is the same as of the 1-dimensional Weisfeiler-Leman algorithm, and so, the same as of \Ctwo{}.
It turns out, however, that  the logical (\FO) expressive power of standard GNN architectures cannot be characterised by \Ctwo{}.
In particular, AC-GNNs can express strictly less $\FO$ properties  than $\Ctwo$, whereas ACR-GNNs can express strictly more  $\FO$ properties than $\Ctwo$.
Interestingly our results transfer to results on the expressive power of infinitary logics.
As we have shown, the infinitary version of \Ctwo{} can express strictly more \FO{} properties than the standard, finitary, $\Ctwo$.

\bibliography{GNN}


\onecolumn
\appendix

\section*{Technical Appendix}

Please note that we plan to simplify and polish some proofs in the appendix to further improve readibility.

\section{Proofs for \Cref{sec:WL}}

\normalform*

\begin{proof}
If the given formula does not have $x$ or $y$ as free variable, just consider the conjunction of the given formula with $x=x$ and $y=y$. This ensures that the given formula has both $x$ and $y$ as free variables. Thus we can denote the given formula as $\vp(x,y)$.

To obtain the required form, we transform $\varphi(x, y)$ into an equivalent $\Ctwo_{\ell,c}$ formula $\bigvee_{i=1}^n \bigwedge_{j=1}^{m_i} \psi_{i,j}$, where each $\psi_{i,j}$
has at most two free variables $x$ and $y$, and is either a literal (an atom or its negation), or starts with $\exists_k$, or starts with  $\neg \exists_k$. The process of constructing $\bigvee_{i=1}^n \bigwedge_{j=1}^{m_i} \psi_{i,j}$  is as follows.
Firstly, we write $\vp(x,y)$ in a form, where every negation is immediately followed by $\exists_k$ or by an atom. This is done by applying recursively De Morgan laws:
 $$\neg (a\land b) \equiv  \neg a\lor \neg b,$$
 $$\neg (a\lor b) \equiv \neg a\land \neg b.$$
 After this process, we arrived at a formula which is a positive Boolean combination (i.e. uses only disjunctions and conjunctions) of formulas which are literals, start with $\exists_k$, or with $\neg \exists_k$.
 Now, we apply distributivity of $\land$ over $\lor$, namely: 
 $$(a\lor b)\land c \equiv (a\land c)\lor (b\land c),$$
 to obtain the form  $\bigvee_{i=1}^n \bigwedge_{j=1}^{m_i} \psi_{i,j}$.

Next, we partition each $ \bigwedge_{j=1}^{m_i} \psi_{i,j}$ into three conjunctions: $\alpha_i(x)$ which is a conjunction of those $\psi_{i,j}$ that have  just $x$ as a free variable (if some conjunct $\psi_{i,j}$ has no free variables we  write it as $\psi_{i,j} \land (x=x)$), $\beta_i(y)$ which is a conjunction of those $\psi_{i,j}$ that have just $y$ as a free variable and $\gamma_i(x,y)$ that have both $x$ and $y$ as free variables.
We observe that no $\psi_{i,j}$ that is a conjunct of $\gamma_i(x,y)$ can start with $\exists_k$ or $\neg \exists_k$, as then the quantified variable would not be free in $\psi_{i,j}$, thus each has to be a literal, so $\gamma_i(x,y)$ is a conjunction of literals each having two free variables.

Hence $\vp(x,y)$ is  equivalent to
$\bigvee_{i=1}^n \big( \alpha_i(x) \land \beta_i(y) \land \gamma_i(x,y) \big)$. Formulas $\alpha_i(x)$ and $\beta_i(x)$ are as required by the lemma, so it remains  to show how to transform the formula to put each $\gamma_{i}(x, y)$ to a desired form.
Recall that $\gamma_i(x,y)$ is a conjunction of literals each having two free variables.
Six of such atoms exist:
$$E(x,y),E(y,x),x=y,$$ 
$$\neg E(x,y),\neg E(y,x),x\neq y,$$

Conjunction of any subset of above is  equivalent to a disjunction of a non-empty subset of  the following (all combinations of (negated) atoms from the set above, and $\bot$):
\begin{enumerate}[label=\arabic*.]
\item $\bot$
\item $E(x, y) \land E(y, x) \land x = y$
\item $\neg E(x, y) \land E(y, x) \land x = y$
\item $E(x, y) \land \neg E(y, x) \land x =y$
\item $E(x, y) \land E(y, x) \land x \neq y$
\item  $\neg E(x, y) \land \neg E(y, x) \land x = y$
\item $\neg E(x, y) \land E(y, x) \land x \neq y$
\item $E(x, y) \land \neg E(y, x) \land x \neq y$
\item $\neg E(x, y) \land \neg E(y, x) \land x \neq y$
\end{enumerate} 

Since we are considering simple graphs, only Formulas 5--9 are satisfiable. Moreover, over simple graphs they are equivalent to the following:
\begin{enumerate}
\item[5'.] $E(x, y) \land E(y, x) \land x\neq y$
\item[6'.] $x=y$
\item[7'.] $\neg E(x, y) \land E(y, x)$
\item[8'.] $E(x,y)\land \neg E(y,x)$
\item[9'.] $\neg E(x, y) \land \neg E(y, x) \land x \neq y$
\end{enumerate} 


Hence $\vp(x,y)$ is  equivalent to a formula of the form
$\bigvee_{i=1}^n \big( \alpha_i(x) \land \beta_i(y) \land \gamma_i(x,y) \big)$, where each $\gamma_{i}(x, y)$ is a disjunction of some of the Formulas 5'.--9' or it is $\bot$.
Then, we apply  exhaustively the following equivalence-preserving transformation: 
$$a\land b\land (c\lor c') \equiv  (a\land b\land c)\lor (a\land b\land c'),$$ 
to obtain  a formula of the form
$\bigvee_{i=1}^{n'} \big( \alpha_i(x) \land \beta_i(y) \land \gamma_i(x,y) \big)$, where each $\gamma_{i}(x, y)$ is equal to some of the Formulas 5'.--9' or is $\bot$.

Now remove the disjuncts $\alpha_i(x) \land \beta_i(y) \land \gamma_i(x,y)$, where $\gamma_i(x,y)$ is equal to $\bot$, because 
$$d\lor (a\land b\land \bot) \equiv d$$
for any formula $d$. 

Thus we get that $\vp(x,y)$ is  equivalent to a formula of the form
$\bigvee_{i=1}^{n''} \big( \alpha_i(x) \land \beta_i(y) \land \gamma_i(x,y) \big)$, where each $\gamma_{i}(x, y)$ is equal to some of the Formulas 5'.--9'. as required by the lemma, but this disjunction is possibly empty, i.e. $n''=0$.
If $n''=0$, then that $\phi(x,y)\equiv \bot$, so we write $\phi(x,y)$  as $ (x\neq x\land y\neq y\land x=y))$, which is in the correct form.
\end{proof}

\WLtheorem*
\begin{proof}
We show the equivalence by induction on $i \leq  \ell$.
In the base case, since graphs are simple, we have $G,u \equiv_{\Ctwo_{0, c}} H,v$ if and only if $u$ and $v$ satisfy the same unary predicates, which is equivalent to $W^0_c(u)=W^0_c(v)$.
In the inductive step we assume that the equivalence holds for some $i$, and we show separately each implication for $i+1$.

\bigskip 

For the forward implication, assume  that $ W^{i +1}_c(u) \neq W^{i +1}_c(v) $. 
We will construct a  $\Ctwo_{{i +1},c}$ formula $\varphi(x)$ such that \( G \models \varphi(u) \), but \( H \not\models \varphi (v) \).
We start the construction by defining formulas $\psi_t^i(x)$ for every colour $t$, with
$t=W^i_c(w)$ for some node $w$ in $G$ or $H$ which will be later shown to capture the properties of nodes that have colour $t$ in the $i$th iteration of $W_c$.
To this end, we let $\psi_t^i(x)$ be the conjunction of all
$\Ctwo_{{i},c}$
formulas $\psi(x)$ such that $G \models  \psi(w)$ or $H \models  \psi(w)$, for some $w$ with $W_c^i(w)=t$.
Note that,
up to the logical equivalence, there are  finitely many  \( \Ctwo_{i,c}\) formulas   \cite[Lemma 4.4]{DBLP:journals/combinatorica/CaiFI92}, so $\psi_t^i(x)$ is finite.

Now, we will construct $\varphi(x)$ using $\psi_t^i(x)$. Since $W^{i +1}_c(u) \neq W^{i +1}_c(v) $, by 
Equation~\eqref{eq:WLupdate} we have that (i) there is a colour $t$ such that 
$t = W^{i }_c(u) \neq W^{i }_c(v) $, or (ii) there  is a colour $t$ and $j \in \{1, \dots, 4 \}$ such that $t$  occurs $k \leq c$ times in the $j$th of the four multisets defining $W_c^i(u)$ in Equation~\eqref{eq:WLupdate},  and $k' \leq c$ times in the $j$th multiset defining $W^{i +1}_c(v)$, where $k \neq k'$. 
If Condition (i) holds, we let
$$
\varphi(x) = \psi_t^i(x).
$$
If Condition (ii) holds, without loss of generality assume that $k' < k$, and we let
$$\varphi(x) = \exists_k y (\psi_t^i(y) \land \chi_j(x,y)),$$ 
where 
$\chi_1(x, y) = E(x, y) \land E(y, x)$,
$\chi_2(x, y) = \neg E(x, y) \land E(y, x)$,
$\chi_3(x,y) = E(x, y) \land \neg E(y, x)$,
and
$\chi_4(x, y) = \neg E(x, y) \land \neg E(y, x) \land x \neq y$.
Note that $\varphi(x)$ is a $\Ctwo_{{i +1},c}$ formula.
Moreover, formulas $\chi_j$ correspond to the sets over which multisets $j$ are defined.
Hence, to show that \( G \models \varphi(u) \) and \( H \not\models \varphi (v) \), it remains to show that $\psi_t^i$ has the intended meaning, that is, for any node $w$ in $F \in \{G,H \}$, it holds that
$W^i_c(w)=t$ if and only if $F\models \psi_t^i(w)$.

Now, we will show the above equivalence. Let $ w $ be a node in $ F \in \{G, H\} $,  such that $ W^i_c(w) = t $. Let $\psi(x)\in \Ctwo_{i,c}$ be such that $ F_0 \models \psi(w_0) $ for some $w_0 $ in $F_0\in \{G,H\}$ with $W^i_c(w_0) = t$. 
By the definition of $\psi_t^i$, we need to show that $ F \models \psi(w) $. 
Since $ W^i_c(w) = W^i_c(w_0) $, by the inductive hypothesis, $ F \models \psi(w) $.  Hence $F\models \psi_t^i(w)$.
Now, assume that  $w$ is a node of  $F \in \{G, H\}$ such that $W^i_c(w) \neq t$.
Let $w_0$ be some node in $F_0 \in \{G, H\}$ such that $W^i_c(w_0) = t$. Since $W^i_c(w) \neq W^i_c(w_0)$, by the induction hypothesis, we have
$
W^i_c(w) \not\equiv_{\Ctwo_{i,c}} W^i_c(w_0).
$
Thus, there exists a formula $\psi(x)\in \Ctwo_{i,c}$ such that $F_0 \models \psi(w_0)$ but $F \not\models \psi(w)$. 
By the definition, $\psi(x)$ is conjunct of $\psi_t^i(x)$, so $F \not\models \psi_t^i(w)$, as required. 



\bigskip 

Next, we will show the opposite implication from the inductive step. 
Assume that \( W^{i+1}_c(u) = W^{i+1}_c(v) \). By induction on the structure of \( \varphi(x) \in \Ctwo_{{i+1},c}\),
we will show that
$
G \models \varphi(u)$ if and only if $H \models \varphi(v)$.
If  $\varphi(x)$ is atomic, it suffices to observe that 
$ W^{i+1}_c(u) = W^{i+1}_c(v)$ implies $ W^{0}_c(u) = W^{0}_c(v)$, so $G\models \vp(u)$ if and only if $H\models \vp(v)$.
%
If $\vp(x) = \neg \psi(x)$, then since  $
G \models  \psi(u)$ if and only if $H \models \psi(v)$, we get $
G \models \varphi(u)$ if and only if $H \models \varphi(v)$.
If  $\vp(x)$ is a conjunction, an analogous simple argument guarantees that 
$G \models \varphi(u)$ if and only if $H \models \varphi(v)$.
It remains to consider 
$\varphi(x) = \exists_{k} y \psi(x, y)$,
where \( \psi(x, y) \in \Ctwo_{i,c} \) and \( k \leq c \). 
At first we will show that it holds for formulas $\psi(x,y)$ which are of the form $\chi_j(x,y) \land \eta(y)$, where $\chi_j(x,y)$ is one of the four formulas $\chi_1(x,y), \dots, \chi_4(x,y)$ defined already in this proof, and $\eta(y) \in  \Ctwo_{i,c}$.
Then we will use  \Cref{twovarstructure} to generalise this result to any $\psi(x,y) \in \Ctwo_{i,c} $.

Assume that $G \models \exists_k y ( \chi_j(u,y) \land \eta(y)) $, for some $\chi_j(u,y) \land \eta(y)$ described above and and $k\leqslant c$(?).
Since $\eta(y)\in \Ctwo_{i,c}$, by the inductive hypothesis  there is a  set  $T_\eta^i$
of colours in the $i$th iteration of $WL_c$  corresponding to nodes satisfying $\eta(x$), namely $T_\eta^i$  is such that for any $F\in\{G,H\}$ and any node $w$ in $F$, we have  $F \models \eta(w)$ if and only if $W^i_c(w)\in T_\eta^i$. 
Hence $G,u \models \exists_k y ( \chi_j(u,y) \land \eta(y))$, by the form of $\chi_j$, implies that the $j$th of the four multisets defining $W_c^{i+1}(u)$ in Equation~\eqref{eq:WLupdate} has at least $k$ occurrences of colours from the set $T_\eta^i$.
Since $W_c^{i+1}(u) = W_c^{i+1}(v)$ and $k\leqslant c$, the $j$th multiset of $W_c^{i+1}(v)$ also contains at least $k$ occurrences of colours from the set $T_\eta^i$.
Hence $H,v \models \exists_k y (\chi_j(v,y) \land \eta(y)) $.
The other direction is proved analogously, so 
 $G,u \models \exists_k y ( \chi_j(u,y) \land \eta(y)) $ if and only if
$H,v \models \exists_k y (\chi_j(v,y) \land \eta(y)) $.

We will now consider the general case, so let $\psi(x,y)\in \Ctwo_{i,c}$ be any formula with $G\models \exists_k y\psi(u,y)$, where $k\leqslant c$. We need to show that $H\models \exists_k y\psi(v,y)$. We show that for $j\in \{1,2,3,4\}$ we have that if $G\models \exists_{k_j} y(\chi_j(u,y)\land \psi(u,y)),$ then $H\models \exists k_j y( \chi_j(v,y)\land \psi(v,y))$.

This is sufficient for the following reason: choose $k_j$ to be the maximal number in $\{1,\ldots, c\}$ with $G\models \exists_{k_j} y(\chi_j(u,y)\land \psi(u,y))$ and $k_0=1$ if $G\models \psi(u,u)$ and $k_0=0$ otherwise. 
By maximality of each $k_j$, the choice of $k_0$, the fact that there are at least $k$ nodes $w$ with $G\models \psi(u,w)$ and the fact that for every node $w$ of $G$ either $v=w$ or $\chi_j(v,w)$ for some $j\in \{1,2,3,4\}$, we must have $\sum_{j=0}^4k_j\geqslant k$. 
But recall that we also have $H\models \exists_{k_j} y(\chi_j(v,y)\land \psi(v,y))$ and by the inductive hypothesis $G\models \psi(u,u)$ iff $H\models \psi(v,v)$, so if $k_0=1$, then $H\models\psi(v,v)$ and $H\not\models \psi(v,v)$ otherwise. Combining this with the fact that there is no node $w$ of $H$ for which at least two of the formulas $\chi_j$ are satisfied and the fact that $\chi_j(v,v)$ is never satisfied, we obtain that there are at least $\sum_{j=0}^4 k_j$ distinct nodes $w$ with $H\models \psi(u,w)$, so because $\sum_{j=0}^4 k_j\geqslant k$ $H\models \exists k y \psi(u,y)$, as required. 

It remains to show that for any $\psi(x,y)\in \Ctwo_{i,c}$ and for $j\in \{1,2,3,4\}$ we have that if $G\models \exists_{k_j} y(\chi_j(u,y)\land \psi(u,y)),$ then $H\models \exists_{k_j} y(\chi_j(v,y)\land \psi(v,y))$. By Lemma~\ref{twovarstructure}, write \( \psi(x, y) \) as a disjunction
\[
\bigvee_{s=1}^n\alpha_s(x) \land \beta_s(y) \land \gamma_s(x, y),
\]
where \( \alpha_s(x), \beta_s(y) \in \Ctwo_{i,c} \) and $\gamma_s(x,y)$ is one of the following five formulas: $\chi_1(x,y),\chi_2(x,y),\chi_3(x,y),\chi_4(x,y),$ and $x=y$.

Let $S\sq \{1,\ldots, n\}$ be the set of indices $s$ for which $\gamma_s(x,y)=\chi_j(x,y)$ and $G\models \alpha_s(u)$. Define\footnote{By convention, the empty disjunction is defined to be $\bot$.} $$\eta(y):=\bigvee_{s\in S}\beta_s(y),\quad \text{so}\quad \eta(y)\in \Ctwo_{i,c}.$$

We will now show that for any node $w$, we have $G,w \models \chi_j(u,w)\land \eta(w)$ if and only if $G,w\models \chi_j(u,w)\land\psi(u, w)$. Indeed, $G ,w\models\chi_j(u,w)\land \eta(w)$ if and only if $G,w \models (\chi_j(u,w)\land \beta_s(w))$ for some $s\in S$. But recall that there is no node $w$ of $G$ for which at least two of the formulas $\chi_j$ are satisfied and $\chi_j(v,v)$ is never satisfied, so the latter happens if and only if $G,w\models \gamma_s(u,w)\land \beta_s(w)$ for some $s$ with $G\models \alpha_s(u)$ and $\gamma_s(u,w)=\chi_j(x,y)$, which happens if and only if $G,w\models \alpha_s(u)\land \beta_s(w)\land \gamma_s(u,w)$ for some $s$, which is equivalent to $G,w\models \psi(u,w)$, as required.

By the inductive hypothesis, since $W^i_c(u) = W^i_c(v)$, we have: \[ G \models \alpha_j(u) \Leftrightarrow H \models \alpha_j(v) \quad \text{for each } j, \] so for any node $w$: $H,w \models \chi_j(u,w)\land \eta(w)$ if and only if $H,w\models \chi_j(u,w)\land\psi(u, w)$. Indeed, that is because construction of $\eta(y)$ with respect to conditions $G\models \alpha_s(u)$ or $H\models \alpha_s(v)$ yield the same formula, as this is an equivalent condition.

Finally, since $G\models \exists_{k_j} y(\chi_j(u,y)\land \psi(u,y)),$ so $G\models \exists_{k_j} y(\chi_j(u,y)\land \eta(y)),$ so $H\models \exists_{k_j} y(\chi_j(v,y)\land \eta(y)),$ so $H\models \exists_{k_j} y(\chi_j(v,y)\land \psi(v,y))$, which completes the proof.
\end{proof}

\newpage
\section{Proofs for \Cref{sec:directed}}

\linearalternative*
\begin{proof}
Clearly, every strict linear order satisfies the three properties from the proposition.
Below we  show the opposite direction.
We know that $E$ is irreflexive and total, so it remains to show that $E$ is transitive.
Assume that there are $n$ elements.
Since each element has a different number of $E$-successors and $E$ is irreflexive, 
we can call the elements $v_0, \dots, v_{n-1}$, where $v_i$ is the unique element whose number of $E$-successors is  $i$.
To show that $E$ is transitive, we will prove a more general statement, namely that for all $v_i$ and all $v_j$ we have $(v_i,v_j) \in E$ if and only if $i > j$ (which  implies the transitivity of $E$).

We show the statement by a strong induction on $i$.
In the basis we have $i=0$. Then both implications in the statement hold trivially since  $v_0$ has no $E$-successors and there is no $v_j$ with $j<0$.
For the inductive step, assume that the equivalence holds for all numbers smaller than $i$; we will show that it holds for $i$.
We fix an arbitrary $v_j$ and consider two cases: $i>j$ and $j \geq i$. 
If $i>j$, we need to show that $(v_i,v_j) \in E$.
By the inductive assumption, $j \not > i$ implies that $(v_j,v_i) \notin E$. 
Since $E$ is total, we need to have $(v_i,v_j) \in E$, as required.
If $j \geq i$,  we need to show that $(v_i,v_j) \notin E$.
As we have showed in the first case, we have $(v_i,v_j) \in E$ for all $j$ with  $i>j$.
If we had additionally $(v_i,v_j) \in E$ for some $j \geq i$, then $v_i$ would have more than $i$ $E$-successors, which raises a contradiction. Therefore $(v_i,v_j) \notin E$.
\end{proof}

\GNNforLin*
\begin{proof}
We will construct an ACR-GNN  $\N$; its application to a linear order of length four is presented in \Cref{fig::GNN_LO}.
The first layer maps the initial vector of a node $v$ into the number $10^n$, where $n$ is the in-degree of $v$. This is obtained by setting  $\aggi(M)=  10^{|M|}$ and $\comb(x,y) =y$.
The second layer  maps a vector of  $v$ into a vector in $\R^2$ of the form $(10^n, 10^{k_1} + \dots + 10^{k_n} )$ where $10^n$ is as in the first layer, whereas each $k_i$ is the in-degree of the $i$th among the $n$ in-neighbours of $v$. This is obtained by setting  $\aggi(M)= sum (M)$ and $\comb(x,y) = (x,y)$.
The third layers  maps each vector into 1 or 0 by setting
$\readout(M) = 1$ if both of the following conditions hold:
\begin{enumerate}
\item[(i)] $x[1] \neq y[1]$, for all $x,y \in M$ with $x\neq y$.
\item[(ii)] if $x[1] = 10^n$, then $x[2] =
\underbrace{1\ldots1}_{n \text{ times}}
$, for each $x \in M$ (i.e. $\frac{x[1] - 1}{9} =  x[2]$).
\end{enumerate}
If some of these conditions does not hold, we set $\readout(M) =0$.
Finally, we let
$\comb(x,y)=y$.

We claim that for any graph $G=(V,E,\lambda)$, if $E$ is a strict linear order, then  $\N(G,v)=1$ and otherwise  $\N(G,v)=0$, for any node $v$ in $G$.
First, assume that $E$ is a strict linear order.
Hence, all nodes have different in-degrees, namely their in-degrees are $0, \dots,|V|-1$.
Moreover if a node has an in-degree $n$, then its immediate predecessor has in-degree $n-1$.
Thus, by the construction of $\N$, after applying first two layers, each node $v$ has an embedding $(10^n,
\underbrace{1\ldots1}_{n \text{ times}}
)$, for $n$ being the in-degree of $v$.
Hence, both Conditions (i) and (ii) hold, and  so $\N(G,v)=1$ for all nodes $v$ in $G$.

For the opposite direction assume that 
$\N(G,v)=1$.
By
Condition (i), each node in $G$ has a different in-degree.
By Condition (ii), the edge relation cannot have loops; otherwise we had $x[2] \geq x[1]$ for some $x$, which is forbidden by Condition (ii).
Finally, Condition (ii) also implies that every node of in-degree $n$ has incoming edges from all nodes with in-degree smaller than $n$. Since every node has a different in-degree, it follows that the relation is total.
Hence, by  \Cref{linear},
 $E$ is a strict linear order.
\end{proof}

\WLforLin*
\begin{proof}
Suppose towards a  contradiction that $\varphi_{\LO}(x)$ is expressible in $\Ctwo$, so it is definable by a formula in $\Ctwo_{\ell, c}$, for some $\ell$ and $c$.
To obtain a contradiction, we will construct a graph $G$ with nodes $v_i$ and a graph $G'$ with corresponding nodes $v_i'$, such that $G \models \varphi_{\LO}(v_i)$ and $G' \not\models \varphi_{\LO}(v_i')$, but $G,v_i \equiv_{\Ctwo_{l,c}} G',v_i$ for all nodes $v_i$.

Let $n = \ell \cdot c +1$.
We define $G=(V,E,\lambda)$ as a strict linear order over $2n+1$ nodes $V=\{v_{-n}, \dots, v_n \}$, with  $E =\{ (v_i,v_j) : i < j \}$, and $\lambda(v_i)=0$ for each $v_i$.
We let $G'=(V',E',\lambda')$ be such that 
$V'=\{v'_{-n}, \dots, v'_n \}$, $E' =\{ (v'_i,v'_j) : i < j \} \setminus \{ (v'_{-1},v'_{1}) \} \cup \{ (v'_{1},v'_{-1}) \}$, and $\lambda'(v_i')=0$ for each $v_i'$.
For example, if $c=2$ and $\ell =2$, the graphs $G$ is  depicted on top of \Cref{fig::WL_LO}; graph $G'$ is similar, but instead of $(v'_{-1},v'_{1})$ it has the opposite edge $(v'_{1},v'_{-1})$.
Notice that both graphs are irreflexive, asymmetric, and total, but only $G$ is transitive.
Hence, for all nodes $v_i$, we have $G \models \varphi_{\LO}(v_i)$ and $G' \not\models \varphi_{\LO}(v_i')$, so it remains to show that $G,v_i \equiv_{\Ctwo_{l,c}} G',v'_i$.
To this end, by \Cref{mainWLtheorem},
 it suffices to show that $W^\ell_c (v_i) = W^\ell_c (v_i')$.
We will prove it by showing, with a simultaneous induction on $k \leq \ell$,  the  following  two statements:
\begin{enumerate}[label=(\roman*), leftmargin=*, align=left, labelsep=-0.5em]
\item
$W^k_c (v_i) = W^k_c (v_i')$, for  $i \in \{-n, \dots, n\}$,

\item
$W^k_c (v_i) = W^k_c (v_j)$, for  $i,j \in \{ -(n - ck), \dots,  n -ck   \}$.
\end{enumerate}
In the base of the induction, for $k=0$, both Statements~(i) and (ii) hold, since $W^0_c (v_i) = W^0_c (v_i')=0$. 
For the inductive step, assume that Statements (i) and (ii) hold for some $k<\ell$.
We will show that both statements  hold  for $k+1$.

We start by showing Statement (ii). 
Let us fix any $i,j \in \{ -(n - c(k+1)), \dots,  n -c(k+1)   \}$.
We need to show that  $W^{k+1}_c (v_i) = W^{k+1}_c (v_j)$.
By \Cref{eq:WLupdate} together with the fact that both $E$ and $E'$ are irreflexive, asymmetric, and total, it suffices to show the following three equalities:
\begin{itemize}
\item[(1)] 
$W^{k}_c (v_i) = W^{k}_c (v_j)$, 
\item[(2)] 
$\lBrace W^{k}_c (v_r) : v_r \in \Ni_G(v_i)  \rBrace^c = \lBrace W^{k}_c (v_r) : v_r \in \Ni_G(v_j)  \rBrace^c $,
\item[(3)] 
$\lBrace W^{k}_c (v_r) : v_r \in \No_G(v_i)  \rBrace^c = \lBrace W^{k}_c (v_r) : v_r \in \No_G(v_j)  \rBrace^c $.
\end{itemize}
Equality (1) holds by the inductive assumption for Statement (ii).
To show Equalities (2) and (3), we let
 $S = \{ -(n-ck), \dots, n-ck  \}$. 
We will show two versions of each equality: for multisets with  $r \notin S$ and with $r \in S$ (which is stronger  than original equalities considering all $r$).
For $r \notin S$, by the form of $G$, we have 
$v_r \in \Ni_G(v_i)$ iff $v_r \in \Ni_G(v_j)$, and
$v_r \in \No_G(v_i)$ iff $v_r \in \No_G(v_j)$, so
 Equalities (2) and (3)  hold. 
Now consider multisets with $r \in S$. By the inductive assumption for Statement (ii),
$W^{k}_c (v_r)$ is the same for all $r \in S$.
So to prove Equality (2), it suffices to show that both $v_i$ and $v_j$ have at least $c$ many in-neighbours $v_r$ with $r \in S$.
For this, recall that 
$i,j \in \{ - (n-c(k+1)), \dots, n-c(k+1) \}$,
so for each $r\in \{-(n-ck),\ldots, -(n-c(k+1)-1$  we have both $(v_r,v_i) \in E$ and $(v_r,v_j) \in E$.
Note that there are exactly $c$ such nodes $v_r$, which finishes the proof of Equality (2).
Equality (3) for $r \in S$ is showed analogously, so Statement (ii) holds. 

Next, we show the inductive step for Statement (i).
We start by observing that $W^{k+1}_c (v_i) = W^{k+1}_c (v_i')$ for $i \notin \{-1, 1 \}$, which follows from the inductive assumption for  Statement (i) together with the fact that  $v_i$ and $v_i'$ have the same $E$-successors and $E$-predecessors (modulo priming of symbols).
It remains to show Statement (i) for $i \in \{-1,1 \}$.
Note that we have $W^{k+1}_c (v_0) = W^{k+1}_c (v_0')$.
By the inductive step for Statement (ii)
we obtain that $W^{k+1}_c (v_{-1}) = W^{k+1}_c (v_0) = W^{k+1}_c (v_{1})$.
Although we have showed Statement (ii) for $G$, the same argumentation can be used for $G'$,  so  $W^{k+1}_c (v_{-1}') = W^{k+1}_c (v_0') = W^{k+1}_c (v_{1}')$.
Thus, 
$W^{k+1}_c (v_{i}) = W^{k+1}_c (v_{i}')$  for $i \in \{-1, 1 \}$.
\end{proof}

\newpage
\section{Proofs for \Cref{sec:undirected}}

\FOgadget*
\begin{proof}
We will express $\vp_\GLO(x)$ as a conjunction of four $\FO$ formulas $\vp_1$, $\vp_2$, $\vp_3$, and $\vp_4$.
Recall that we identify graphs with $\FO$ structures interpreting unary predicates $P_1, \dots , P_d$, where $d$ is the dimension of the graph, and one binary predicate $E$.
Since gadgetisations are always of dimension $d=3$, our formulas will mention three unary predicated $P_1$, $P_2$, and $P_3$.

Formula $\vp_1$ states that 
$P_1$, $P_2$, and $P_3$
partition the set of nodes.
Formula $\vp_2$ states that every node satisfying $P_2$ has exactly two $E$-neighbours: one satisfying $P_1$ and the other satisfying $P_3$.
It states also that every node satisfying $P_3$ has exactly two $E$-neighbours: one satisfying $P_1$ and the other satisfying $P_2$.
Finally, it states that 
if $u$ and $v$ are nodes satisfying $P_1$, then $E(u,v)$ does not hold.
Formulas $\vp_3$ and $\vp_4$ are about \emph{gadgetised edges}, which are paths in $\Gad(G)$ that correspond to directed edges in $G$.
In particular, we let a gadgetised edge be a path of the form $E(u,w)$, $E(w,v)$, $E(v,z)$ for which it holds that $P_1(u)$ and $P_1(z)$, and either $P_2(b)$ and $P_3(v)$, or $P_3(v)$ and $P_2(w)$.
We will say that such a gadgetised edge is from $u$ to $z$ (notice that direction plays a crucial role in  gadgetised edges).
Formula $\vp_3$ states that between any two distinct nodes satisfying $P_1$ there is exactly one gadgetised edge.
Formula $\vp_4$, in turn, states that there are no nodes $u,w,v$  with gadgetised edges from $v$ to $w$, from $w$ to $u$ and from $u$ to $v$.

Next, we show how to express 
$\vp_1$--$\vp_4$ in $\FO$.
This is done as follows, where $\oplus$ stands for the XOR connective:
\begin{align*}
\vp_1 & =\forall x \big((P_1(x) \lor P_2(x) \lor P_3(x)) \land \neg(P_1(x) \land P_2(x)) \land \neg(P_1(x) \land P_3(x)) \land \neg(P_2(x) \land P_3(x)) \big)
\\
\vp_2 & = \forall x ( P_2(x) \to \exists_{=2} y E(x,y) \land   
\exists_{=1} y (E(x,y) \land P_1(y)) 
\land 
\exists_{=1} y (E(x,y) \land P_3(y) ) ) \land{} 
\\
& \ \quad \forall x ( P_3(x) \to \exists_{=2} y E(x,y) \land   
\exists_{=1} y (E(x,y) \land P_1(y)) 
\land 
\exists_{=1} y (E(x,y) \land P_2(y) ) ) \land{} 
\\
& \ \quad  \forall x \forall y \neg ( P_1(x) \land P_1(y) \land E(x,y) )
\\
\vp_3 & =
\forall x \forall x' \big( (P_1(x) \land P_1(x') \land x \neq x') \to\\& \ \quad
\big( 
\exists_{=1} y \exists_{=1} z \big( P_2(y) \land P_3(z) \land E(x,y) \land E(y,z) \land E(z,x') \big) 
\oplus{} 
\\
&\ \quad\exists_{=1} y \exists_{=1} z \big( P_3(y) \land P_2(z) \land E(x,y) \land E(y,z) \land E(z,x') \big) 
\big) \big)
\\
\vp_4 & =
\neg \exists x_1 \exists x_2 \exists x_3 \exists y_1 \exists y_2 \exists y_3 \exists z_1 \exists z_2 \exists z_3  \big(
P_1(x_1) \land P_1(x_2) \land P_1(x_3) \land P_2(y_1) \land P_2(y_2) \land P_2(y_3) \land P_3(z_1) \land P_3(z_2) \land P_3(z_3) \land{} \\
& \ \quad E(x_1,y_1) \land E(y_1,z_1) \land E(z_1,x_2) \land E(x_2,y_2) \land E(y_2,z_2) \land E(z_2,x_3) \land E(x_3,y_3) \land E(y_3,z_3) \land E(z_3,x_1)
\big)
\end{align*}

We claim  that over undirected graphs, $\vp_\GLO(x)$ is equivalent to $\vp_1 \land \vp_2 \land \vp_3 \land \vp_4 \land (x=x)$.
For the forward implication assume that $G' \models \vp_\GLO(v)$, so $G'$ is isomorphic to $\Gad(G)$, for some strict linear order $G$.
Directly by the definition of $\Gad(G)$, we obtain that $\Gad(G) \models \vp_1$ and 
$\Gad(G) \models \vp_2$.
Since $G$ is total and asymmetric, $G$ has exactly one edge between any two distinct nodes. Hence, there is exactly one gadgetised edge between any two distinct nodes of  $\Gad(G)$ satisfying $P_1$, and so, $\Gad(G) \models \vp_3$. 
Moreover, as
$G$ is a strict linear order, it cannot have a cycle. 
In particular $G$ has no cycle of length 3, so $\Gad(G)$ has no cycle of length 3 over gadgetised edges, and so,  $\Gad(G) \models \vp_4$.

 For the opposite direction assume that an  undirected graph $G=(V,E,\lambda)$ of dimension 3 satisfies $\vp_1$, $\vp_2$, $\vp_3$, and $\vp_4$.
We define a directed graph $G' = (V',E',\lambda')$ such that
\begin{itemize}
\item $V' = \{ v\in V : G \models P_1(v) \}$,

\item $E' = \{ (u,v) \in V' \times V' : \text{ there is a gadgetised edge from $u$ to $v$ in $G$ }  \} $,

\item $\lambda'(v)=0$, for all $v \in V'$.
\end{itemize}
It suffices to show that  $G'$ is a strict linear order and that $G$ is isomorphic to $\Gad (G')$. 

To show that $G'$ is a strict linear order, we will show that $E'$ is total,   irreflexive, and transitive.
To show that $E'$ is total, fix $u,v \in V'$ such that $u \neq v$.
By the construction of $V'$, we have $G \models P_1(u)$ and $G \models P_1(v)$.
Since $G\models \vp_3$,  there is a gadgetised edge in $G$ from $u$ to $v$ or from $v$ to $u$.
Hence, by the definition of $E'$ we have $(u,v) \in E'$ or $(v,u) \in E'$, and so, $E'$ is total.
To show that $E'$ is irreflexive,
suppose that  $(v,v) \in E'$ for some $v \in V'$.
Hence, there is a gadgetised edge from $v$ to $v$ in $G$.
This, however contradicts $G \models \vp_4$, so $E'$ must be irreflexive.
To show that $E'$ is transitive, suppose that  $(u,v) \in E'$ and $(v,w) \in E'$, but $(u,w) \notin E'$.
By totality of $E'$, we have $(w,u) \in E'$ or $u=w$.
If $(w,u) \in E'$, then $G \models \vp_4$ raises a contradiction.
If $u=w$ then $G \models \vp_3$ raises a contradiction.
Hence $E'$ must be transitive.

To prove that $G=(V,E,\lambda)$ is isomorphic to $\Gad (G')$, we define function $f$ mapping nodes of $V$ to nodes of $\Gad (G')$ as follows. For each $u \in V$:
$$
f(u):=\begin{cases}
v_u^1, & \text{if } G\models P_1(u), 
\\
v_{(u',w')}^2, & \text{if } G\models P_2(u), \text{ for $u',w'\in V'$ the unique nodes with }G\models \exists w \big( P_3(w)\land E(u',u)\land E(u,w)\land E(w,w') \big), 
\\
v_{(u',w')}^3, &\text{if } G\models P_3(u), \text{ for $u',w'\in V'$  the unique nodes with }G\models \exists w \big( P_2(w)\land E(u',w)\land E(w,u)\land E(u,w') \big).
\end{cases}
$$
It remains to show that $f$ is well-defined, bijective, and that $(u,v) \in E$ if and only if $(f(u), f(v))\in E'$ and for $u\in V$ and $i\in \{1,2,3\}$, $G\models P_i(u)$ if and only if $\Gad(G')\models P_i(f(u))$.

To show that $f$ is a well-defined function, we need to show that every $ u \in V $ satisfies exactly one of the three cases in the definition of $f$. 
As $G \models \vp_1$, each $u\in V$ satisfies exactly one of $P_1$, $P_2$, and $P_3$, so every $u\in V$ satisfies at most one of the three cases in the definition. To show that at least one of the conditions is satisfied, it remains to show that if $G\models P_2(u)$ or $G\models P_3(u)$, then there exist unique $u',w'$ satisfying the respective conditions.
As $G \models \vp_2$, for every node $u\in V$ with $G\models P_2(u)$,
node $u$ belongs to a unique gadgetised edge, whose endpoints, say $u'$ and $w'$ are the unique points under consideration, which guarantees that $f(u)$ is well defined in this case.
The case of $G\models P_3(u)$, follows from an  analogous argument. Thus $f$ is well-defined.

To show that $f$ is injective, suppose $f(u)=f(u'')$, where $u,u''\in V$. There are three cases to consider. If $G\models P_1(u)$, then $G\models P_1(u'')$, so $v^1_{u}=v^1_{u''},$ so $u=u''$. If $G\models P_2(u)$, then $G\models P_2(u'')$, so $f(u)=f(u'')=v^2_{(u',w')}$. Let $w$ be such that $G\models P_3(w)\land  E(u',u)\land E(u,w)\land E(w,w')$ and $w''$ be such that $G\models P_3(w'')\land E(u',u'')\land E(u'',w'')\land E(w'',w')$. Plugging $(x,x')=(u',w')$ to $\vp_3$ which is satisfied by $G$, by uniqueness, we obtain $(u,w)=(u'',w'')$, so also $u=u''$, as required. The case $G\models P_3(u)$ is the same as the case $G\models P_2(u)$. So $f$ is indeed injective.

To show that $f$ is surjective, suppose $v$ is a node of $\Gad(G')$. There are three cases to consider. If $v=v^1_u$ with $u\in V'$, then $f(u)=v$. If $v=v^2_{(u',w')}$ with $u',w'\in V'$, then $(u',w')\in E'$, so there is a gadgetised edge from $u'$ to $v'$ in $G$, in particular, there are a nodes $u,w\in V$ with $G\models P_2(u)$, $G\models P_3(w)$ and $G\models  E(u',u)\land E(u,w)\land E(w,w')$. Moreover, as $G$ satisfies $\vp_3$, given $u,w$, the nodes $u',w'$ for which this is satisfied are unique. Thus $f(u)=v^2_{(u',w')}$, as required. Finally, the case $v=v^3_{(u,w)}$ is the same as the case $v=v^2_{(u,w)}$, so $f$ is indeed surjective.

We will now show that $(u,v)\in E$ if and only if $(f(u),f(v))$ is an edge in $\Gad(G')$.

Assume $(u,v)\in E$. As $G$ satisfies $\vp_2$, there are three cases to consider: $G\models P_1(u)\land P_2(v)$, $G\models P_2(u)\land P_3(v)$ and $G\models P_3(u)\land P_1(v)$. If $G\models P_1(u)\land P_2(v)$, then $f(u)=v_u^1$ and $f(v)=v^2_{(u',w')}$ for the unique $u',w'\in V'$ with $G\models  \exists w \big( P_3(w)\land E(u',u)\land E(u,w)\land E(w,w') \big)$. Note that $(u,v)\in E$, so by uniqueness of $u'$, we have $u=u'$, so $(f(u),f(v))=(v_u^1,v^2_{(u,w')})$ is an edge in $\Gad(G')$. If $G\models P_2(u)\land P_3(v)$, then $f(u)=v^2_{(u',w')}$ and $f(v)=v^3_{(u'',w'')}$, again similarly as before, as $(u,v)\in E$, we obtain by uniqueness, that $(u',w')=(u'',w'')$, so $(f(u),f(v))=(v^2_{(u',w')},v^3_{(u',w')})$ is an edge in $\Gad(G')$. If $G\models P_3(u)\land P_1(v)$, then we proceed as in the first case.

Conversely, assume $(f(u),f(v))$ is an edge in $\Gad(G')$. By construction of gadgetisation there are three cases to consider: $f(u)=v^1_{u'}$ and $f(v)=v^2_{(u',w')}$, $f(u)=v^2_{(u',w')}$ and $f(v)=v^3_{(u',w')}$ and finally $f(u)=v^3_{(u',w')}$ and $f(w)=v^1_{w'}$. If $f(u)=v^1_{u'}$ and $f(v)=v^2_{(u',w')}$, then $u=u'$ and $G\models \exists w(P_3(w)\land E(u',v)\land E(v,w)\land E(w,w')$, in particular $(u',v)\in E$, but $u=u'$, so $(u,v)\in E$, as required. If $f(u)=v^2_{(u',w')}$ and $f(v)=v^3_{(u',w')}$, then for some $w_1\in V$, $G\models (P_3(w_1)\land E(u',u)\land E(u,w_1)\land E(w_1,w')$ and for some $w_2\in V$, $G\models (P_2(w_2)\land E(u',w_2)\land E(w_2,v)\land E(v,w')$, so by uniqueness of the gadgetised edge from $u'$ to $w'$, we obtain $(u,w_1)=(w_2,v)$, in particular, as $(u,w_1)\in E$, we also have $(u,v)\in E$, as required. If $f(u)=v^3_{(u,w)}$ and $f(w)=v^1_w$, then we proceed as in the first case.

Finally, by definition of labels of a gadgetisation we get that $\lambda(v^1_u)=(1,0,0),\ \lambda(v^2_{(u',v')})=(0,1,0)$ and $\lambda(v^3_{(u',v')})=(0,0,1)$, so in particular $u\in P_i$ if and only if $f(u)\in P_i$, for $i\in \{1,2,3\}$.
\end{proof}

\GLOisGNN*

\begin{proof}
We will show that for any  undirected graph $G$ and a node $v$ we have $G\models \vp_{\GLO}(v)$ if and only if  $G \models \vp_1 \land \vp_2$ (for $\vp_1$ and $\vp_2$ from the proof of \Cref{gadlinisfo}) and $G$ satisfies an additional property $\psi$ (which we specify below).

First define $N_2(v):=|\{w:G\models P_2(w)\land E(v,w)\}|$ and consider the following property. \begin{align*} \psi =& \text{ For all integers $i,j$ with $|P_1|>j > i$: there exist node $u,w \in V$ with $G\models P_1(u)\land P_1(w)$, $N_2(u)=j,N_2(w)=i$}\\ &\text{ and }\exists u'\exists w' P_2(u')\land P_2(w')\land E(u,u')\land E(u',w')\land E(w',w),
\end{align*}
    
We claim  that over undirected graphs, $\vp_\GLO(x)$ is equivalent to $\vp_1 \land \vp_2 \land \psi \land x=x$.
For the forward implication assume that $G' \models \vp_\GLO(v)$, so $G' = \Gad(G)$, for some strict linear order $G$.
Directly by the definition of $\Gad(G)$, we obtain that $\Gad(G) \models \vp_1$ and 
$\Gad(G) \models \vp_2$. Since $G$ is a strict linear order, its set of out degrees is precisely $\{0,1,\ldots |G|-1\}$ with the directed edge between nodes with out degrees $j$ and $i$ respectively, where $j>i$, goes from $j$ to $i$. Thus $G'$ satisfies $ \psi$.\\

 For the opposite direction assume that an  undirected graph $G=(V,E,\lambda)$ satisfies $\vp_1$, $\vp_2$ and $\psi$. We will show that $G$ satisfies $\vp_3$ and $\vp_4$, so by Theorem \ref{gadlinisfo}, $G$ satisfies $\vp_\GLO$.

Suppose $G\models \neg \vp_3$. As $G$ satisfies $\vp_1$, $P_1,P_2$ and $P_3$ form a partition of the set $V$. As $G$ satisfies $\vp_2$, for every node $u\in V$ with $G\models P_2(u)$, there is a unique node $w\in V$ with $G\models P_1(w)\land E(u,w)$, so by the double counting principle, we get  $$|P_2|=|\{\{u,v\}:G\models P_1(u)\land E(u,w)\land P_2(w)\}|=\sum_{u\in P_1}N_2(u).$$

Since $G \models \psi$, for each $i \in \{0, 1, \ldots, |P_1| - 1\}$, there exists a node $u \in V$ such that $G \models P_1(u)$ and $N_2(u) = i$. As there are exactly $|P_1|$ nodes $u \in V$ with $G \models P_1(u)$, it follows that for each such $i$, there is a unique node $u \in V$ with $G \models P_1(u)$ and $N_2(u) = i$. Moreover, these are the only nodes satisfying $G \models P_1(u)$.

Finally, combining the above results with the fact that $G$ satisfies $\psi$, we get that for any $u,w\in V$ with $G\models P_1(u)\land P_1(w)$, there is a gadgetised edge from $u$ to $w$ or from $w$ to $u$. In the light of $G\models \neg \vp_3,$ it must be the case, that there is at least one pair of nodes $u,w\in V$ with $G\models P_1(u)\land P_1(w)$, where there is a gadgetised edge from $u$ to $w$ and from $w$ to $u$. But then $$|P_2|\geqslant {{|P_1|}\choose{2}}+1>\sum_{i=0}^{|P_1|-1} i=\sum_{u\in P_1}N_2(u)=|P_2|,$$ which is a contradiction.

Suppose $G\models\neg \vp_4$. Exactly as in the proof of Proposition \ref{linear}, we can show by induction that for $u,v\in V$ with $G\models P_1(u)\land P_1(v)$, $u$ is connected to $v$ with a gadgetised edge if and only if $N_2(u)>N_2(v)$. As $G\models\neg \vp_4$, there are nodes $u,v,w\in V$ with $G\models P_1(u)\land P_1(v)\land P_1(w)$, $u$ is connected to $v$, $v$ is connected to $w$ and $w$ is connected to $u$ with a gadgetised edge. Thus, by previous, $N_2(u)>N_2(v)>N_2(w)>N_2(u)$, which is a contradiction.\\

We now show how to construct, for $\vp_1,\vp_2$ and $\psi$, an ACR-GNN that computes it. This suffices, because once we know that each of those properties is captured by some ACR-GNN, we can construct a single ACR-GNN that captures their conjunction. The construction works as follows: at each layer, the new feature vector at a node is defined by concatenating the feature vectors produced at that stage by three independent ACR-GNNs, each computing one of $\vp_1,\vp_2$ and $\psi$. The aggregate, combine, and readout functions are likewise modular: each operates independently on the segment of the concatenated vector corresponding to the respective formula. Finally, the output classifier accepts a graph if and only if all three component classifiers, applied to their respective segments of the final concatenated vector, also accept. Since this construction fits within the architectural rules of an ACR-GNN, it follows that the conjunction of $\vp_1,\vp_2$ and $\psi$ is also definable by an ACR-GNN.

We now show how to construct, for $\vp_1,\vp_2$ and $\psi$, an ACR-GNN that computes it. Note that in the proof of Theorem \ref{gadlinisfo}, $\vp_1$ and $\vp_2$ are represented as $\FO^2$ formulas. Thus by Theorem 5.1 in \cite{DBLP:conf/iclr/BarceloKM0RS20}, we get that $\vp_1$ and $\vp_2$ are captured by an ACR-GNN.

We now construct an ACR-GNN $\N$ that captures $\psi$. Recall that  gadgetised linear orders are graphs of dimension three, so we will consider application of $\N$ to such graphs $G$.
In each layer,  $\N$ will assign to nodes vectors of dimesion five, where the first three positions are always as in the input graph $G$, so  information about  $P_1$, $P_2$, and $P_3$ in the input graph is preserved across all layers. 
The fourth and fifth positions will always keep binary numbers.
The details of $\N$ are provided next.

Let $\A$ denote the aggregate which maps a multiset $M$ to the value $$\A(M):=\bigvee_{m\in M} m,$$ where $\bigvee$ stands for the bitwise OR on number in decimal representation, i.e. for example $100\bigvee 100\bigvee 110=110$. Moreover, we write $\A_{m\in M}(f(m))$ to stand for the aggregate $\bigvee_{m\in M} f(m)$.

The first layer computes, for each node $u \in V$ with $G \models P_1(u)$, the value $N_2(u)$ and stores it in $x(u)$. This is achieved with an AC layer, with $\agg(M)=10^{sum(M)}$ and $\comb(w,v)$ to be the identity on $w$ if $w_1=0$ and the function that changes the value of $w_4$ to $v_2$ otherwise.

The second layer, for each node $u\in V$ with $G\models P_3(u)$, computes and stores in $y(u)$ the value of $$\A(\{{x(v)}:G\models P_1(v)\land E(u,v)\}).$$ This is achieved with an AC layer, with $\agg(M)=\A_{w\in M}(w_1{w_4})$ and $\comb(w,n)$ to be the identity on $w$ if $w_3=0$ and the function that changes the value of $w_5$ to $n$ otherwise.

The third layer, for each node $u\in V$ with $G\models P_2(u)$, computes and stores in $y(u)$ the value of $$\A(\{y(v):G\models P_3(v)\land E(u,v)\}).$$ This is achieved with an AC layer, with $\agg(M)=\A_{w\in M}(w_3{w_5})$ and $\comb(w,n)$ to be the identity on $w$ if $w_2=0$ and the function that changes the value of $w_5$ to $n$ otherwise.

Finally, the fifth 
layer uses a global readout to assign 1 to each node if 
for all $i<j<|P_1|$ there exists a node whose fourth position of the vector is $10^j$ and the fifth position of the vector has $1$ as the $i$th bit from the right (when counting from 0).

The  first four layers can be implemented without readout functions.
The fifth layer, in contrast, requires using readout, but no aggregation.
To show that the construction is correct, we
can show that in layer 4, each node $v$ satisfying $P_1$ has on the fourth position of its vector  $10^j$, where $j$ is the number of $v$ neighbours satisfying $P_2$.
On the fifth position $v$ has a binary number, whose $i$th bit is $1$ if there is a gadgetised edge from $v$ to some node with $i$ neighbours satisfying $P_2$.
Therefore, the fifth layer assigns 1 to all nodes if the graphs satisfies $\psi$, and otherwise it assigns $0$ to all nodes.

The fourth layer, for each node $u\in V$ with $G\models P_1(u)$, computes and stores in $y(u)$ the value of $$\A (\{y(v):G\models P_2(v)\land E(u,v)\}).$$ This is achieved with an AC layer, with $\agg(M)=\A_{w\in M}(w_2{w_5})$ and $\comb(w,n)$ to be the identity on $w$ if $w_1=0$ and the function that changes the value of $w_5$ to $n$ otherwise. 

Finally, the fifth 
layer uses a global readout to assign 1 to each node if 
for all $i<j<|P_1|$ there exists a node whose fourth position of the vector is $10^j$ and the fifth position of the vector has $1$ as the $i$th bit from the right (when counting from 0).\\

To show that the construction is correct, note that in layer 4, each node $v$ satisfying $P_1$ has on the fourth position of its vector  $10^j$, where $j$ is the number of $v$ neighbours satisfying $P_2$ and  on the fifth position $v$ has a binary number, whose $i$th bit is $1$ if there is a gadgetised edge from $v$ to some node with $i$ neighbours satisfying $P_2$.
Therefore, the fifth layer assigns 1 to all nodes if the graphs satisfies $\psi$, and otherwise it assigns $0$ to all nodes.

\end{proof}

\GLonotC*
\begin{proof}
Suppose towards a  contradiction that $\varphi_{\GLO}(x)$ is expressible in $\Ctwo$, so it is definable by a formula in $C^2_{\ell, c}$, for some $\ell$ and $c$.
To obtain a contradiction, we will construct a graph $H$ with a node $v$ and a graph $H'$ with a corresponding node $v'$ such that $H,v \models \varphi_{\GLO}(v)$ and $H',v' \not\models \varphi_{\GLO}(v')$, but $H,v \equiv_{\Ctwo_{l,c}} H',v'$ for any node $v$ and its corresponding node $v'$.

Let $n = \ell \cdot c +1$. We define $H=\Gad(G)$, where $G$ is the same as defined in the proof of Theorem \ref{thm:LonotC2}. Comment on notation: we will write $v_i^1$, instead of $v^1_{v_i}$ and $v_{i,j}^\alpha$ instead of $v_{(v_i,v_j)}^\alpha$ for $\alpha\in \{2,3\}$.

To define $H'$, let $V(H'):=\{v':v\in V(H)\}$ and let $E(H')$ be the primed version of the set below
\begin{align*}[E(H)\cup\{(v^1_{1},v^3_{1,-1}),(v^3_{1,-1},v^2_{1,-1}),(v^2_{1,-1},v^1_{-1})] \setminus (v^1_{1},v^2_{1,-1}),(v^2_{1,-1},v^3_{1,-1}),(v^3_{1,-1},v^1_{-1}),\end{align*}

Clearly, \( H \models \varphi_{\GLO}(v) \) for every node \( v \). However, note that \( H' \) contains a cycle of length 9, namely ${v^1_{1}}'-{v^2_{1,0}}'-{v^3_{1,0}}'-{v^1_0}'- {v^2_{0,-1}}'-{v^3_{0,-1}}'-{v^1_{-1}}'-{v^2_{1,-1}}'-{v^3_{1,-1}}'-{v^1_{1}}'$, where the nodes have cyclic labels \( (1,0,0), (0,1,0), (0,0,1) \).  
Note that no graph satisfying $\vp_{\GLO}(x)$ contains such a substructure, so \( H' \not\models \varphi_{\GLO}(v') \) for any node \( v' \).

Thus it remains to show that $H,v \equiv_{\Ctwo_{l,c}} H',v'$ for all nodes $v,v'$. To this end, by \Cref{mainWLtheorem},
 it suffices to show that $W^\ell_c (v) = W^\ell_c (v')$.
We will prove it showing, by simultaneous induction on $k \leq \ell$,  the  following  statements 
\begin{enumerate}[label=(\roman*), leftmargin=*, align=left, labelsep=-0.5em]
\item $W^k_c (v) = W^k_c (v')$, for $v\in V(H)$.

\item $W^k_c (v^1_i) = W^k_c (v^1_j)$, for $i,j\in  \{ -(n - ck), \dots,  n -ck   \}$


\item $W^k_c (v^2_{i,a}) = W^k_c (v^2_{j,b})$, $W^k_c (v^3_{i,a}) = W^k_c (v^3_{j,b})$ for $i>a,j>b\in \{ -(n - ck), \dots,  n -ck   \}.$

\end{enumerate}

In the base of the induction, for $k=0$, Statements~(i) - (iii) hold, since $W^0_c (v_i^1) = W^0_c ({v^1_i}')=(0,0,1)$, $W^0_c (v^2_{i,j}) = W^0_c (v^2_{i,j})=(0,1,0)$ and $W^0_c (v^3_{i,j}) = W^0_c (v^3_{i,j})=(0,0,1)$ for all $i,j\in \{-n,\ldots, n\}.$

For the inductive step, assume that Statements (i) - (iii) hold for some $k<\ell$. We will show that they hold for $k+1$.

We start by showing Statement (ii). 
Let us fix any $i,j \in \{ -(n - c(k+1)), \dots,  n -c(k+1)   \}$.
Since $E$ and $E'$ are irreflexive and symmetric, to prove that that $W^{k+1}_c (v_i^1) = W^{k+1}_c ({v_j^1})$, it suffices to show the following equalities:
\begin{itemize}
\item[(1)] 
$W^{k}_c (v_i^1) = W^{k}_c ({v_j}^1)$, 
\item[(2)] 
$\lBrace W^{k}_c (v) : v \in N_G(v_i^1)  \rBrace^c = \lBrace W^{k}_c (v) : v \in N_G({v_j^1})  \rBrace^c $,
\item[(3)] 
$\lBrace W^{k}_c (v) : v_i^1\neq v \notin N_G(v_i^1)  \rBrace^c = \lBrace W^{k}_c (v) : {v_j^1}\neq v \in N_G({v_j}^1)  \rBrace^c $.
\end{itemize}

Equality (1) holds by the inductive assumption for Statement (ii).
To show Equality (2), as no two nodes $v_s^1$ are connected, it suffices to show that 

\begin{itemize}
\item[$(A)$] 
$\lBrace W^{k}_c (v^2_{s,t}) : v^2_{s,t} \in N_G(v_i^1)  \rBrace^c = \lBrace W^{k}_c ({v^2_{s,t}}) : {v^2_{s,t}} \in N_G({v_j^1})  \rBrace^c $,
\item[$(B)$] 
$\lBrace W^{k}_c (v^3_{s,t}) : v^3_{s,t} \in N_G(v_i^1)  \rBrace^c = \lBrace W^{k}_c ({v^3_{s,t}}) : {v^3_{s,t}} \in N_G({v_j^1})  \rBrace^c.$
\end{itemize}

We start by showing equality $(A)$. We let
 $S = \{ -(n-ck), \dots, n-ck  \}$. 
We will show two versions of this equality, for multisets with  $\neg (s,t \in S)$ and with $s,t \in S$ (which is stronger  than original equalities with all $s,t$).
For $\neg(s,t \in S)$, we have 
$v^2_{s,t} \in N_G(v_i^1)$ iff ${v^2_{s,t}} \in N_G({v^1_j})$, so equality $(A)$ holds. 
Now consider multisets with $s,t \in S$. By the inductive assumption for Statement (iii),
$W^{k}_c (v^2_{s,t})$ is the same for all $s,t \in S$.
So to prove Equality $(A)$, it suffices to show that both $v_i^1$ and $v_j^1$ have at least $c$ many neighbours $v_{s,t}^2$ with $s,t \in S$.
For this, recall that 
$i,j \geqslant - (n-c(k+1))$,
so for each $r\in \{ - (n-ck) , \dots ,  -(n-c(k+1))-1 \}$ we have $i,j>r$, so both $(v_i^1,v_{i,r}^2) \in E$ and $(v_j^1,v_{j,r}^2) \in E$.
Note that there are exactly $c$ such indices $t$, which finishes the proof of Equality $(A)$.

Equality $(B)$ is proved exactly in the same way as $(A)$, so (2) also follows.

To show Equality (3), it suffices to show that 

\begin{itemize}
\item[$(A')$] 
$\lBrace W^{k}_c (v^2_{s,t}) : v^2_{s,t} \notin N_G(v_i^1)  \rBrace^c = \lBrace W^{k}_c ({v^2_{s,t}}) : {v^2_{s,t}} \notin N_G({v_j^1})  \rBrace^c $,
\item[$(B')$] 
$\lBrace W^{k}_c (v^3_{s,t}) : v^3_{s,t} \notin N_G(v_i^1)  \rBrace^c = \lBrace W^{k}_c ({v^3_{s,t}}) : {v^3_{s,t}} \notin N_G({v_j^1})  \rBrace^c.$
\item[$(C')$] 
$\lBrace W^{k}_c (v^1_{t}) :v^1_i\neq v^1_{t} \notin N_G(v_i^1)  \rBrace^c = \lBrace W^{k}_c ({v^1_{t}}) : v^1_j\neq {v^1_{t}} \notin N_G({v_j^1})  \rBrace^c.$
\end{itemize}

We start by showing equality $(A')$. 
We will again show two versions of this equality. For multisets with  $\neg (s,t \in S)$ we proceed as in the case of $(A)$. For multisets with $s,t \in S$, by the inductive assumption for Statement (iii),
$W^{k}_c (v^2_{s,t})$ is the same for all $s,t \in S$.
So to prove Equality $(A')$, it suffices to show that both $v_i^1$ and $v_j^1$ have at least $c$ many non-neighbours $v_{s,t}^2$ with $s,t \in S$.
For this, recall that 
$i,j \leqslant n-c(k+1)$,
so for each $r\in \{ n-c(k+1)+1,\ldots, n-ck \}$ we have $r>i,j$, so both $\{v_i^1,v_{i,r}^2\} \notin E$ and $\{v_j^1,v_{j,r}^2\} \notin E$.
Note that there are exactly $c$ such indices $r$, which finishes the proof of Equality $(A')$.

Equality $(B')$ is proved exactly in the same way as $(A')$, Equality~$(C')$ follows because $W^k_c(v_i)=W^k_c(v_j)$. Thus Equality~(3) holds, so in consequence Equality~(ii) holds.\\

To show Statement (iii). Let us fix any $i>a,j>b \in \{ -(n - c(k+1)), \dots,  n -c(k+1)   \}$. To prove that $W^{k+1}_c (v^2_{i,a}) = W^{k+1}_c ({v^2_{j,b}})$ it satisfies to show that $W^{k}_c (v^2_{i,a}) = W^{k}_c ({v^2_{j,b}})$, $W^{k}_c (v^1_{i}) = W^{k}_c ({v^1_j})$ and $W^{k}_c (v^3_{i,a}) = W^{k}_c ({v^3_{i,a}})$, which all follows immediately from the induction hypothesis of (ii) and (iii). Showing $W^{k+1}_c (v^3_{i,a}) = W^{k+1}_c ({v^3_{j,b}})$ is completely analogous, so Equality (iii) holds.\\

Next, we show the inductive step for Statement (i).
We start by observing that $W^{k+1}_c (v_i^1) = W^{k+1}_c ({v_i^1}')$ for $i \notin \{-1, 1 \}$ and $W^{k+1}_c (v_{a,b}^2) = W^{k+1}_c ({v_{a,b}^2}')$, $W^{k+1}_c (v_{a,b}^3) = W^{k+1}_c ({v_{a,b}^3}')$ for $(a,b)\neq (1,-1)$ follows from the inductive assumption for  Statement (i) together with fact that the nodes under consideration have the same $E$-neighbours (modulo priming of symbols).
It remains to show Statement (i) for $i \in \{-1,1 \}$ and $(a,b)=(1,-1)$.
Note that we have $W^{k+1}_c (v_0^1) = W^{k+1}_c ({v_0^1}')$.
By the inductive step for Statement (ii),
we obtain that $W^{k+1}_c (v_{-1}^1) = W^{k+1}_c (v_0^1) = W^{k+1}_c (v_{1}^1)$.
Although we have showed Statement (ii) for $H$, the same argumentation can be used for $H'$,  so  $W^{k+1}_c ({v_{-1}^1}') = W^{k+1}_c ({v_0^1}') = W^{k+1}_c ({v_{1}^1}')$.
Thus, 
$W^{k+1}_c (v_{i}^1) = W^{k+1}_c ({v_{i}^1}')$  for $i \in \{-1, 1 \}$. Finally consider $(a,b)=(1,-1)$. To prove that $W^{k+1}_c (v^2_{1,-1}) = W^{k+1}_c ({v^2_{1,-1}}')$ it satisfies to show that $W^{k}_c (v^2_{1,-1}) = W^{k}_c ({v^2_{1,-1}}')$, $W^{k}_c (v^1_{1}) = W^{k}_c ({v^1_{-1}}')$ and $W^{k}_c (v^3_{1,-1}) = W^{k}_c ({v^3_{1,-1}}')$, which all follows immediately from the induction hypothesis of (i) and (ii). Showing $W^{k+1}_c (v^3_{1,-1}) = W^{k+1}_c ({v^3_{1,-1}})$ is completely analogous, so Equality (i) holds.
 \end{proof}

\end{document}